\newcommand{\F}{\mathcal{F}}
\newcommand{\cH}{\mathcal{H}}
\newcommand{\thrsh}{\mathrm{THRESH}}
\newcommand{\strike}[1]{}
\newcommand{\ignore}[1]{}
\newcommand{\fullversion}[1]{#1}
\newcommand{\W}{\mathcal{W}}
\newcommand{\EE}{\mathop\mathbb{E}}
\newcommand{\A}{\mathcal{A}}
\newcommand{\good}{\emph{simple\xspace}}
\renewcommand{\H}{\mathcal{H}}
\renewcommand{\P}{\mathbb{P}}
\newcommand{\ldim}{\mathrm{Ldim}}
\newcommand{\privacyexpression}{\bigl(6\tau \alpha(\tau |S|)+\tau,4e^{6\tau\alpha(\tau |S|)}\tau\beta(\tau |S|)\bigr)}
\newcommand{\mmd}{\mathrm{IPM}}
\newcommand{\gam}{\textrm{Sequential}}
\newcommand{\creative}{\textrm{DP}\xspace}
\newcommand{\syn}{p_{syn}}
\newcommand{\preal}{p_{real}}
\newcommand{\regret}{\mathrm{REGRET}_T}
\newcommand{\X}{\mathcal{X}}
\newcommand{\D}{\mathcal{D}}
\newcommand{\T}{\mathcal{T}}
\newcommand{\eps}{\epsilon}
\newcommand{\win}{\mathrm{WIN}}
\crefname{subsubsubappendix}{Appendix}{Appendices}
\newcommand{\shay}[1]{}
\newcommand{\roi}[1]{}
\newcommand{\olivier}[1]{}
\newtheorem{theorem}{Theorem}
\newtheorem{definition}{Definition}
\newtheorem{lemma}{Lemma}
\newtheorem{corollary}{Corollary}
\newtheorem{question}{Question}
\newtheorem{claim}{Claim}
\newtheorem{observation}{Observation}
\newtheorem{proposition}{Proposition}
\newcommand{\ignorenips}[1]{}
\renewcommand{\fullversion}[1]{#1}
\title{Synthetic Data Generators -- Sequential and Private}
\author{Olivier Bousquet\thanks{Google Brain, Z\"urich. {\tt obousquet@google.com.}} \and Roi Livni\thanks{Department of Electrical Engineering, Tel Aviv University, Tel Aviv. {\tt rlivni@tauex.tau.ac.il}}\and Shay Moran\thanks{Department of Mathematics, Technion, Haifa. {\tt shaymoran1@gmail.com.}}}
\begin{document}

\maketitle

\begin{abstract}
We study the sample complexity of private synthetic data generation over an unbounded sized class of statistical queries, and show that any class that is privately proper PAC learnable admits a private synthetic data generator (perhaps non-efficient). Previous work on synthetic data generators focused on the case that the query class $\D$ is finite and obtained sample complexity bounds that scale logarithmically with the size $|\D|$. Here we construct a private synthetic data generator whose sample complexity is independent of the domain size, and we replace finiteness with the assumption that $\D$ is privately PAC learnable (a formally weaker task, hence we obtain equivalence between the two tasks).

\end{abstract}

\section{Introduction}
Generating differentially--private synthetic data \cite{blum2013learning, dwork2009complexity} is a fundamental task in learning that has won considerable attention in the last few years \cite{hardt2012simple, ullman2011pcps, hsu2013differential,gaboardi2014dual}.
Formally, given a class $\D$ of distinguishing functions, a fooling algorithm receives as input IID samples from an unknown real-life distribution, $p_{real}$, and outputs a distribution $p_{syn}$ that is $\epsilon$-close to $p_{real}$ w.r.t the \emph{Integral Probability Metric} (\cite{muller1997integral}), denoted $\mmd_{\D}$:

\begin{equation}\label{eq:mmd}
\mmd_{\D}(p,q) = \sup_{d\in \D} \left|\EE_{x\sim p}[d(x)]- \EE_{x\sim q}[d(x)]\right|
\end{equation}

A DP-SDG is then simply defined to be a differentially private fooling algorithm. 

A fundamental question is then: Which classes $\D$ can be privately fooled? In this paper, we focus on sample complexity bounds and give a first such characterization. We prove that a class $\D$ is DP--foolable if and only if it is privately (proper) PAC learnable. As a corollary, we obtain equivalence between several important tasks within private learning such as proper PAC Learning \cite{kasiviswanathan2011can}, Data Release \cite{dwork2009complexity}, Sanitization \cite{beimel2013private} and what we will term here \emph{Private Uniform Convergence}.

Much focus has been given to the task of synthetic data generation. Also, several papers \citep{bassily2020private, hsu2013differential,gaboardi2014dual, gupta2013privately, gupta2012iterative} discuss the reduction of private fooling to private PAC learning. In contrast with previous work, we assume an arbitrary large domain. In detail, previous existing bounds normally scale logarithmically with the size of the query class $\D$ (or alternatively, depend on the size of the domain). Here we initiate a study of the sample complexity that does not assume that the size of the domain is fixed. Instead, we only assume that the class is privately PAC learnable, and obtain sample complexity bounds that are independent of the cardinality $|\D|$. We note that the existence of a private synthetic data generator entails private proper PAC learning, hence our assumption is a necessary condition for the existence of a DP-SDG.

The general approach taken for generating synthetic data (which we also follow here) is to exploit an online setup of a sequential game between a generator that aims to fool a discriminator and a discriminator that attempts to distinguish between real and fake data. The utility and generality of this technical method, in the context of privacy, has been observed in several previous works \cite{hardt2012simple,niel2018how, gupta2013privately}.
However, in the finite case, specific on-line algorithms, such as \emph{Multiplicative Weights} \cite{gupta2013privately} and \emph{Follow-the-Perturbed-Leader} \cite{vietri2020new} are considered. The algorithms are then exploited, in a white-box fashion, that allow easy construction of SDGs. The technical challenge we face in this work is to generalize the above technique in order to allow the use of no-regret algorithms that work over infinite classes. Such algorithms don't necessarily share the attractive traits of MW and FtPL that allow their exploitation for generating synthetic data. To overcome this, we study here a general framework of \emph{sequential SDGs} and show how an \emph{arbitrary} online algorithm can be turned, via a Black-box process, into an SDG which in turn can be privatized. We discuss these challenges in more detail in \fullversion{\cref{sec:uboverview}}.

Thus, the technical workhorse behind our proof is a learning primitive which is of interest of its own right. We term it here \emph{Sequential Synthetic Data Generator} (Sequential-SDG). Similar frameworks appeared \cite{gupta2013privately, vietri2020new} in the context of private-SDGs but also more broadly in the context of generative learning \cite{levy, abernethy, goodfellow2014generative, goodfellow2016nips}. We further discuss this deep and important connection between private learning and generative learning in \cref{sec:discussion}

In the sequential-SDG setting, we consider a sequential game between a generator (player G) and a discriminator (player D). 
At every iteration, player G proposes a distribution and player D outputs a discriminating function from a prespecified binary class $\D$. 
The game stops when player G proposes a distribution that is close in $\mmd_{\D}$ distance to the true target distribution. As we focus on the statistical limits of the model, we ignore the optimization and computational complexity aspects and we assume that both players are omnipotent in terms of their computational power.

We provide here characterization of the classes that can be \emph{sequentially fooled} (i.e. classes $\D$ for which we can construct a sequential SDG) and show that the sequentially foolable classes are exactly \emph{Littlestone classes} \cite{littlestone, bendavid}. In turn, we harness sequential SDGs to generate synthetic data together with a private discriminator in order to generate private synthetic data. Because this framework assumes only a private learner, we in some sense show that the sequential setting is a canonical method to generate synthetic data.

To summarize this work contains several contributions: We provide the first domain-size independent sample complexity bounds for DP-Fooling, and show an equivalence between private synthetic data generation and private learning. Second, we introduce and characterize a new class of SDGs and demonstrate their utility in the construction of private synthetic data.

\section{Prelimineries}\label{sec:preliminaries}
In this section we recall standard definitions and notions in differential privacy and learning (a more extensive background is also given in \fullversion{\cref{sec:background}}). 
Throughout the paper we will study classes $\D$ of boolean functions defined on a domain $\X$.
However, we will often use a dual point of view where we think of $\X$ as the class of functions and on $\D$ as the domain. Therefore, in order to avoid confusion, in this section we let $\W$ denote the domain and $\H\subseteq\{0,1\}^W$ to denote the functions class.
%
\ignore{ 
\subsection{Notations}\label{sec:basics}

For a finite\footnote{The same notation will be used for infinite classes also. However we will properly define the the measure space and $\sigma$-algebra at later sections when we extend the results to the infinite regime.} set $\W$, let $\Delta(\W)$ denote the space of probability measures over $\W$. 
	Note that $\W$ naturally embeds in $\Delta(\W)$ by identifying $w\in \W$
	with the Dirac measure $\delta_w$ supported on $w$.
	Therefore, every $f:\Delta(\W)\to\mathbb{R}$ induces a $\W\to\mathbb{R}$ function via this identification.
	In the other direction, every~$f:\W\to\mathbb{R}$ naturally extends to a linear\footnote{A function $g:\Delta(\W)\to\mathbb{R}$ is {\it linear}
	if $g\bigl(\alpha p_1 + (1-\alpha) p_2\bigr) = \alpha g(p_1) + (1-\alpha)g(p_2)$, for all $\alpha\in[0,1]$} 
	map $\hat f:\Delta(\W)\to \mathbb{R}$ which is defined by~$\hat f(p) = \EE_p[f]$	for every~$p\in \Delta(\W)$.

We will often deal with boolean functions $f:\W\to\{0,1\}$,
	and in some cases we will treat $f$ as the subset of $\W$ that it indicates.
	For example, given a distribution $p\in\Delta(\W)$  we will use $p(f)$ to denote the measure of the subset that $f$ indicates 
	(i.e.\ $p(f) = \Pr_{w\sim p }[f(w) = 1]$).
	Given a class of functions $F\subseteq \{0,1\}^\W$, 
	its {\it dual class} is a class of $F\to\{0,1\}$ functions,
	where each function in it is associated with $w\in \W$
	and acts on $F$ according to the rule $f\mapsto f(w)$.
	By a slight abuse of notation we will denote the dual class with $\W$ 
	and use $w(f)$ to denoted the function associated with $w$ (i.e.\ $w(f):=f(w)$ for every $f\in F$).
	
	Given a sample $S=(w_1,\ldots,w_m)\in \W^m$,
	the {\it empirical distribution} induced by $S$ is the discrete distribution $p_S$ defined by
	$p_S(w)=\frac{1}{m}\sum_{i=1}^{m}1[w=w_i].$

}
\subsection{Differential Privacy and Private Learning}\label{sec:privacy}
Differential Privacy~\cite{dwork1,dinur} is a statistical formalism which aims at capturing algorithmic privacy.
It concerns with problems whose input contains databases with private records and it enables to design algorithms that are formally guaranteed to protect the private information. 	For more background see the surveys \cite{dwork3,vadhan}.
The formal definition is as follows: 
	let $\W^m$ denote the input space. An input instance~$\Omega \in \W^m$ is called a {\it database},
	and two databases $\Omega',\Omega''\in \W^m$ are called neighbours if there exists a single $i\leq m$
	such that $\Omega'_i\neq \Omega''_i$.
	Let $\alpha,\beta>0$ be the privacy parameters,
	a randomized algorithm $M:\W^m\to \Sigma$ is called $(\alpha,\beta)$-differentially private
	if for every two neighbouring $\Omega',\Omega''\in \W^m$ and for every event~$E\subseteq \Sigma$:
	\[
	\Pr\bigl[M(\Omega')\in E\bigr] \leq e^\alpha \Pr\bigl[M(\Omega'')\in E\bigr] + \beta\ignore{~~\text{ and }~~\Pr\bigl[M(\Omega'')\in E\bigr] \leq e^\alpha \Pr\bigl[\Omega(X')\in E\bigr] + \beta}.
	\]
	An algorithm $M:\cup_{m=1}^{\infty}\W^m\to Y$ is called differentially private if for every $m$ 
	its restriction to $\W^m$ is $(\alpha(m),\beta(m))$-differentially private,
	where $\alpha(m) = O(1)$ and $\beta(m)$ is negligible\footnote{I.e.\ $\beta(m) = o(m^{-k})$ for every $k>0$.}.
	Concretely,  we will think of $\alpha(m)$ as a small constant (say, $0.1$) and $\beta(m) = O(m^{-\log m})$.

\paragraph{Private Learning.}
We next overview the notion of Differentially private learning algorithms \cite{kasiviswanathan2011can}. In this context the input database is the training set of the algorithm.
Given a hypothesis class $\H$ over a domain $W$, we say that $\H\subseteq \{0,1\}^\W$ is privately PAC learnable if it can be learned by a differentially private algorithm.
	That is, if there is a differentially private algorithm~$M$ and a sample complexity bound $m(\eps,\delta)=\mathrm{poly}(1/\eps,1/\delta)$ such that for every $\eps,\delta>0$
	and every distribution $\P$ over $\W\times\{0,1\}$, if $M$ receives an independent sample $S\sim\P^m$ 
	then it outputs an hypothesis $h_S$ such that with probability at least $1-\delta$:
	\[ L_\P(h_S)\le \min_{h\in \H} L_\P(h) + \epsilon,\]
	where $L_\P(h) = \EE_{(w,y)\sim \P} \bigl[1[h(w)\ne y]\bigr]$. If $M$ is {\it proper}, namely $h_S\in \H$ for every input sample $S$,
	then $\H$ is said to be Privately Agnostically and Properly PAC learnable (PAP-PAC-learnable).

In some of our proofs it will be convenient to consider private learning algorithms whose privacy parameter $\alpha$
	satisfies $\alpha\leq 1$ (rather than $\alpha = O(1)$ as in the definition of private algorithms).
	This can be done without loss of generality due to privacy amplification theorems (see, for example for example \cite{vadhan} (discussion after definition 8.2 therein, and see also \fullversion{discussion after \cref{lem:amplification}} for further details).

\paragraph{Sanitization.}
The notion of sanitization has been introduced by \citet{blum2013learning} and further studied in \cite{beimel2013private}. 
	Let $\H\subseteq\{0,1\}^\W$ be a class of functions.
	An {\it $(\epsilon,\delta,\alpha,\beta,m)$-sanitizer} for $\H$ is an $(\alpha,\beta)$-private algorithm $M$  
	that receives as an input a sample $S \in \W^m$ and outputs a function $\mathrm{Est}: \H\to [0,1]$
	such that with probability at least $1-\delta$,
	\[(\forall h\in\H): \Bigl\lvert\mathrm{Est}(h)- \frac{\lvert\{w\in S: h(w)=1\}\rvert}{|S|}\Bigr\rvert\le \epsilon.\]
\ignore{	A common type of sanitizers acts as follows:
	given an input sample $S$, they publish a sample $T$
	such that with probability at least $1-\delta$,
	\[(\forall h\in\H): \Bigl\lvert \frac{\lvert\{w\in T: h(w)=1\}\rvert}{|T|} - \frac{\lvert\{w\in S: h(w)=1\}\rvert}{|S|} \Bigr\rvert \leq \epsilon.\]
	In other words, they use the estimate $\mathrm{Est}(h) = \frac{\lvert\{w\in T: h(w)=1\}\rvert}{|T|}$, which is encoded by $T$.}

We say that $\H$ is \emph{sanitizable} if there exists an algorithm $M$ and a bound $m(\eps,\delta)=\mathrm{poly}(1/\eps,1/\delta)$
	such that for every $\eps,\delta>0$, the restriction of $M$ to samples of any size $m\ge m(\eps,\delta)$
	is an $(\epsilon,\delta,\alpha,\beta,m)$-sanitizer for $\H$ with $\alpha=\alpha(m)=O(1)$ and $\beta=\beta(m)$ negligible.

\paragraph{Private Uniform Convergence.}
A basic concept in Statistical Learning Theory is the notion of \emph{uniform convergence}. 
	In a nutshell, a class of hypotheses $\H$ satisfies the uniform convergence property 
	if for any unknown distribution $\P$ over examples, 
	one can uniformly estimate the expected losses of all hypotheses in $\H$ given a large enough sample from $\P$. 
	Uniform convergence and statistical learning are closely related. For example, 
	the {\it Fundamental Theorem of PAC Learning} asserts that 
	they are equivalent for binary-classification~\cite{Shalev14book}.

This notion extends to the setting of private learning:
	a class $\H$ satisfies the {\it Private Uniform Convergence} property if there exists a differentially private algorithm $M$ and a sample complexity bound 
	$m(\epsilon,\delta)=\mathrm{poly}(1/\eps,1/\delta)$ such that for every distribution~$\P$ over $\W\times\{0,1\}$ 
	the following holds: if  $M$ is given an input sample $S$ of size at least $m(\eps,\delta)$ which is drawn independently from~$\P$, 
	then it outputs an estimator $\hat L:\H\to[0,1]$ such that with probability at least $(1-\delta)$ it holds that
	\[(\forall h\in \H) : \bigl\lvert \hat L(h) - L_\P(h) \big\rvert \leq \eps.\]
	Note that without the privacy restriction, the estimator 
	\[\hat L(h) = L_S(h) := \frac{\lvert\{(w_i,y_i)\in S: h(w_i)\ne y_i\}\rvert}{|S|}\]
	satisfies the requirement for $m=\tilde O({d}/{\eps^2})$, where $d$ is the VC-dimension of $\H$;
	this follows by the celebrated VC-Theorem \cite{Vapnik71uniform,Shalev14book}.

\section{Problem Setup}\label{sec:model}
We assume a domain $\X$ and we let~$\D \subseteq \{0,1\}^\X$ be a class of functions over $\X$. 
The class $\D$ is referred to as the \emph{discriminating functions class} and its members $d\in \D$ are called \emph{discriminating functions} or {\it distinguishers}. We let $\Delta(\X)$ denote the space of distributions over $\X$. Given two distributions $p,q\in \Delta(\X)$, let $\mmd_{\D}(p,q)$ denote the $\mmd$ distance between $p$ and $q$ as in \cref{eq:mmd}.

It will be convenient to assume that $\D$ is \emph{symmetric}, i.e.\ that whenever $d\in \D$ then also its complement, $1-d\in \D$. Assuming that $\D$ is symmetric will not lose generality and will help simplify notations. We will also use the following shorthand: given a distribution $p$ and a distinguisher $d$ we will often write
\begin{align*}
p(d):= \EE_{x\sim p} [d(x)].
\end{align*}
 Under this assumption and notation we can remove the absolute value from the definition of $\mmd$:
\begin{align}
\mmd_{\D}(p,q)
&= \sup_{d\in \D}\left(p(d)- q(d)\right).
\end{align}

\subsection{Synthetic Data Generators}
A synthetic data generator (SDG), without additional constraints, is defined as follows

\begin{definition}[SDG]
An SDG, or a fooling algorithm, for $\D$ with sample complexity $m(\epsilon,\delta)$  
is an algorithm $M$ that receives as input a sample $S$ of points from $\X$ and parameters $\epsilon,\delta$ such that the following holds:
for every $\epsilon,\delta>0$ and every target distribution $\preal$, 
if $S$ is an independent sample of size at least $m(\eps,\delta)$ from~$\preal$ 
then 
\[
\Pr\Bigl[ \mmd_{\D}(\syn,\preal)<\epsilon \Bigr]\geq 1-\delta,
\]
where $\syn := M(S)$ is the distribution outputted by $M$, and the probability
is taken over $S\sim (\preal)^m$ as well as over the randomness of $M$.
%
\end{definition}
We will say that a class is \emph{foolable} if it can be fooled by an SDG algorithm whose sample complexity is $\mathrm{poly}(\frac{1}{\epsilon},\frac{1}{\delta})$. Foolability, without further constraints, comes with the following characterization which is an immediate corollary (or rather a reformulation) of the celebrated VC Theorem (\cite{Vapnik71uniform}). 

Denote by $M_{emp}$ an algorithm that receives a sample $S$ and returns $M_{emp}(S):=p_S$, the empirical distribution over $S$.
\begin{observation}[\cite{Vapnik71uniform}]\label{obs:vc}
The following statements are equivalent for a class $\D\subseteq \{0,1\}^\X$:
\begin{enumerate}
\item $\D$ is PAC--learnable.
\item $\D$ is foolable.
\item $\D$ satisfies the uniform convergence property.
\item $\D$ has a finite VC-dimension.
\item $M_{emp}$ is a fooling algorithm for $\D$ with sample complexity $m=O(\frac{\log1/\delta}{\eps^2})$.
\end{enumerate}
\end{observation}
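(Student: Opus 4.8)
The plan is to prove the chain of equivalences by routing everything through standard PAC-learning theory, treating the claimed novelty (foolability, and the fact that $M_{emp}$ specifically works) as consequences of uniform convergence. The central object is the $\mmd_{\D}$ distance, and the key elementary observation is that for \emph{any} two distributions $p,q$ and any function class $\D$, the quantity $\mmd_{\D}(p_S,q)$ measures exactly the worst-case deviation of empirical averages from true averages: taking $q = \preal$ and $p = p_S$, we have $\mmd_{\D}(p_S,\preal) = \sup_{d\in\D}\bigl(\EE_{x\sim p_S}[d(x)] - \EE_{x\sim\preal}[d(x)]\bigr) = \sup_{d\in\D}\bigl(\frac1m\sum_i d(x_i) - \preal(d)\bigr)$. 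Since $\D$ is symmetric, this sup over one-sided deviations is the same as the sup over absolute deviations. So $\mmd_{\D}(p_S,\preal)<\eps$ is literally the statement that all functions in $\D$ have their empirical mean close to their true mean, i.e.\ the uniform convergence event for the class $\D$ viewed as $\{0,1\}$-valued queries.

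Given this identification, I would prove the cycle as follows. First, $(4)\Rightarrow(5)$: if $\D$ has finite VC dimension $d$, the VC inequality / uniform convergence theorem (\cite{Vapnik71uniform,Shalev14book}) gives that with probability $\ge 1-\delta$ over $S\sim\preal^m$, $\sup_{d\in\D}|\frac1m\sum_i d(x_i)-\preal(d)|\le\eps$ once $m = O(\log(1/\delta)/\eps^2)$ (absorbing the $d$-dependence, or keeping it — the statement as written suppresses $d$ in the constant, which is fine since the class is fixed). By the identification above this is exactly $\mmd_{\D}(p_S,\preal)<\eps$, so $M_{emp}$ is a fooling algorithm with the stated sample complexity. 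Then $(5)\Rightarrow(2)$ is immediate since $M_{emp}$ has polynomial (indeed $\mathrm{poly}(1/\eps,1/\delta)$) sample complexity, so $\D$ is foolable by definition. For $(2)\Rightarrow(4)$, I would argue the contrapositive: if $\mathrm{VCdim}(\D)=\infty$, then for every $m$ there is a shattered set of size $2m$, and a standard no-free-lunch / lower-bound argument produces a distribution $\preal$ (supported on a shattered set, e.g.\ uniform on it) such that no algorithm seeing $m$ samples can output $p_{syn}$ with $\mmd_{\D}(p_{syn},\preal)$ small with good probability — because there is a distinguisher in $\D$ witnessing the half of the shattered set that the sample missed, on which $p_{syn}$ cannot simultaneously be correct against all the consistent possibilities. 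This shows foolability (with any finite, hence in particular polynomial, sample complexity) forces finite VC dimension.

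Finally, $(1)\Leftrightarrow(4)$ and $(3)\Leftrightarrow(4)$ are precisely the Fundamental Theorem of PAC Learning as already quoted in the preliminaries (``they are equivalent for binary-classification \cite{Shalev14book}''), so I would simply cite it; and $(3)\Leftrightarrow(5)$, read through the identification, is again just uniform convergence $\equiv$ finite VC dimension plus the quantitative VC bound for the sample complexity. Assembling: $(4)\Rightarrow(5)\Rightarrow(2)\Rightarrow(4)$ closes a loop, and $(4)\Leftrightarrow(1)$, $(4)\Leftrightarrow(3)$ attach the remaining two nodes, giving the full equivalence.

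The main obstacle — really the only non-bookkeeping point — is the lower bound direction $(2)\Rightarrow(4)$ (equivalently ``not finite VC $\Rightarrow$ not foolable''). One has to be careful that the adversary argument is against an \emph{arbitrary} (possibly randomized) SDG with a \emph{fixed} sample size, and that the witnessing distinguisher argument correctly quantifies: with a shattered set of size $2m$ and $\preal$ uniform on it, after seeing $m$ points at least (roughly) half the mass is on unseen points whose labels under some $d\in\D$ are unconstrained, so $\mmd_{\D}(p_{syn},\preal)$ is bounded below by a constant with constant probability regardless of $p_{syn}$. This is the same mechanism as the classical PAC lower bound, so it is standard, but it is the one place where the proof does actual work rather than invoking a named theorem. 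Everything else is the VC theorem and the dictionary between $\mmd_{\D}$-closeness and uniform convergence.
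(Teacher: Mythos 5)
Your proposal is correct and takes essentially the same route as the paper, which offers no separate proof of \cref{obs:vc} and simply presents it as a reformulation of the VC theorem: the identification of $\mmd_{\D}(p_S,\preal)$ with the uniform-convergence deviation $\sup_{d\in\D}\lvert\frac1m\sum_i d(x_i)-\preal(d)\rvert$ is exactly the intended content, and the remaining equivalences are the Fundamental Theorem of PAC learning. Your extra care on the direction $(2)\Rightarrow(4)$ (the indistinguishability argument over candidate target distributions consistent with the sample, rather than a single witnessing distinguisher) is the right way to handle the fact that $p_{syn}$ need not be supported on the shattered set, and goes beyond what the paper spells out.
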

\cref{obs:vc} shows that foolability is equivalent to PAC-learnability (and in turn to \ finite VC dimension).
We will later see analogous results for $\creative$--Foolability (which is equivalent to differentially private PAC learnability)
and $\gam$--Foolability (which is equivalent to online learnability). 

We now discuss the two fundamental models that are the focus of this work -- $\creative$--Foolability and $\gam$--Foolability.

\subsection{$\creative$--Synthetic Data Generators}
We next introduce the notion of a $\creative$--synthetic data generator and $\creative$--Foolability. As discussed, DP-SDGs have been the focus of study of several papers \citep{blum2013learning, dwork2009complexity, hardt2012simple, ullman2011pcps, hsu2013differential,gaboardi2014dual}.

\begin{definition}[DP-SDG]
 A DP-SDG, or a DP-fooling algorithm $M$ for a class $\D$  is an algorithm that receives as an input a finite sample $S$ and two parameters $(\epsilon,\delta)$ and satisfies:
 \begin{itemize}
     \item {\bf Differential Privacy.}
     For every $m$, the restriction of $M$ to input samples $S$ of size $m$ 
     is $(\alpha(m),\beta(m))$-differentially private, where $\alpha(m) = O(1)$ and $\beta(m)$ is negligible.
      \item {\bf Fooling.} $M$ fools $\D$: there exists a sample complexity bound $m=m(\eps,\delta)$
      such that for every target distribution $\preal$ if $S$ is a sample of at least $m$ examples from $\preal$
      then $\mmd_{\D}({\syn,\preal})\leq \eps$ with probability at least $1-\delta$, where $\syn$ is the output of $M$ on the input sample $S$.
 \end{itemize}
 \end{definition}
	
We will say in short that a class $\D$ is \creative -- Foolable if there exists a DP-SDG for the class $\D$ with sample complexity $m=\mathrm{poly}(1/\eps,1/\delta)$.

\subsection{$\gam$--Synthetic Data Generators}
We now describe the second model of foolability which, as discussed, is the technical engine behind our proof of equivalence between DP-foolability and DP-learning.

\paragraph{Sequential-SDGs}
A Sequential-SDG can be thought of as a sequential game between two players 
	called the {\it generator} (denoted by $G$) and the {\it discriminator} (denoted by $D$).
	At the beginning of the game, the discriminator $D$ receives the target distribution which is denoted by $\preal$. 
	The goal of the generator $G$ is to find a distribution $p$ such that $p$ and $\preal$ are $\eps$-indistinguishable
	with respect to some prespecified discriminating class $\D$ and an error parameter~$\eps>0$, i.e.\ 
	\[\mmd_\D(p,\preal)\leq\eps.\]
	We note that both players know $\D$ and $\eps$. The game proceeds in rounds, where in each round $t$ the generator $G$ submits to the discriminator a candidate distribution $p_t$
	and the discriminator replies according to the following rule: if $\mmd_\D(p_t,\preal)\leq\eps$ then the discriminator replies ``$\win$''
	and the game terminates. Else, the discriminator picks $d_t\in \D$ such that $|\preal(d_t)-p_t(d_t)|>\eps$,
	and sends $d_t$ to the generator along with a bit which indicates whether $p_t(d_t)> \preal(d_t)$ or $p_t(d_t) < \preal(d_t)$.
	Equivalently, instead of transmitting an extra bit, we assume that the discriminator always sends $d_t\in \D\cup (1-\D)$ s.t.
	\begin{align}\label{eq:win}
	\preal(d_t) -   p_t(d_t) >   \epsilon.
	\end{align}

\begin{definition}[$\gam$--Foolability]
Let $\eps>0$ and let $\D$ be a discriminating class. 
\begin{enumerate}
\item $\D$ is called $\epsilon$-$\gam$--Foolable if there exists a generator $G$ 
	and a bound~$T=T(\eps)$ such that $G$ wins any discriminator $D$ with any target distribution~$\preal$
	after at most $T$ rounds.
\item The \emph{round complexity} of $\gam$--Fooling $D$ is defined as the minimal upper bound $T(\eps)$ on the number of rounds 
	that suffice to $\eps$--Fool $\D$.
\item $\D$ is called $\gam$--Foolable if it is $\eps$-$\gam$ foolable for every $\eps>0$ with $T(\eps)=\mathrm{poly}(1/\eps)$. 
\end{enumerate}
\end{definition}

In the next section we will see that if $\D$ is $\eps$-$\gam$--Foolabe for some fixed~$\eps < 1/2$
	then it is $\gam$--Foolable with round complexity $T(\eps) = O(1/\eps^2)$.
\ignorenips{
\paragraph{Randomness.} 
We stress out that we assume that both the generator and discriminator are deterministic. 
	This assumption is made to simplify the presentation but does not restrict the validity of our results, as we explain next.
	Consider a setting where the players may use randomness
	and the definition of round complexity is modified by taking expectation.
	Assuming a deterministic discriminator does not lose generality because in each round 
	the discriminator plays only after seeing the generator's submitted distribution and so it may always respond deterministically.
	Restricting the generator to be deterministic is more subtle.
	However, in terms of upper bounds it only strengthen our result (because we only use deterministic generators, whereas randomized ones could potentially be stronger),
	and in terms of our lower bound (\cref{thm:main1quant}, \cref{it:lb}), its proof applies verbatim to the expected round complexity of randomized generators.
}

\section{Results}


Our main result characterizes \creative--Foolability
in terms of basic notions from differential privacy and PAC learning.

\begin{theorem}[Characterization of \creative--Fooling]\label{thm:main3}
The following statements are equivalent for a class $\D\subseteq\{0,1\}^X$:
\begin{enumerate}
\item\label{it:pac} $\D$ is privately and properly learnable in the agnostic PAC setting.
\item\label{it:foolable} $\D$ is \creative--Foolable.
\item\label{it:sanitizable} $\D$ is sanitizable.
\item\label{it:puc} $D$ satisfies the private uniform convergence property.
\end{enumerate}
\end{theorem}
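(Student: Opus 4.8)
The natural strategy is to establish the cycle of implications $\ref{it:pac} \Rightarrow \ref{it:puc} \Rightarrow \ref{it:sanitizable} \Rightarrow \ref{it:foolable} \Rightarrow \ref{it:pac}$, so that all four become equivalent. The key conceptual step, and the one I expect to be the main obstacle, is $\ref{it:foolable}\Rightarrow\ref{it:pac}$ together with the reverse arc $\ref{it:pac}\Rightarrow\ref{it:foolable}$; the other two arcs are relatively soft and exploit the combinatorial equivalence between these notions, ultimately pinned to the fact that all of them turn out to be equivalent to $\D$ having finite \emph{Littlestone dimension} (the paper flagged that $\gam$--Foolability is the technical engine and that sequentially foolable classes are exactly Littlestone classes).

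\textbf{The easy arcs.} For $\ref{it:puc}\Rightarrow\ref{it:sanitizable}$: a sanitizer must estimate $\frac{1}{|S|}|\{w\in S: h(w)=1\}|$ uniformly over $h\in\H$, which is precisely what a private uniform convergence estimator delivers once we encode membership queries as a binary classification problem over $\X\times\{0,1\}$ — feed the private uniform convergence algorithm the relabeled sample $\{(w_i,1)\}$ (or apply it with the trivial distribution supported on $S$ with all labels $1$), so that $L_\P(h)$ becomes the empirical frequency of $h$ on $S$. For $\ref{it:sanitizable}\Rightarrow\ref{it:foolable}$: given a sanitizer producing $\mathrm{Est}:\D\to[0,1]$ that approximates the empirical query answers on $S$ to within $\eps/2$, observe that the empirical distribution $p_S$ already satisfies $\mmd_\D(p_S,\preal)\le\eps/2$ with high probability for $|S|=\tilde O(\mathrm{VC}(\D)/\eps^2)$ (by \cref{obs:vc}, since sanitizability forces finite VC dimension via a packing argument); one then needs to convert the function $\mathrm{Est}$ into an actual distribution $\syn$ with $\syn(d)\approx\mathrm{Est}(d)$ for all $d\in\D$, which can be done by a linear-programming / convex-duality argument (find a distribution consistent with the estimated query values, whose existence is guaranteed because $p_S$ itself is approximately consistent) — this reconstruction step is where one must be slightly careful, but it is by now standard in the synthetic-data literature.

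\textbf{The hard arc and its converse.} The remaining implications $\ref{it:foolable}\Rightarrow\ref{it:pac}$ and $\ref{it:pac}\Rightarrow\ref{it:foolable}$ are the heart of the theorem. For $\ref{it:foolable}\Rightarrow\ref{it:pac}$: a DP-SDG lets us privately produce $\syn$ with $\mmd_\D(\syn,\preal)\le\eps$; to get a private agnostic proper learner for $\D$, reduce agnostic learning to fooling in the style of the reductions cited (\cite{gupta2013privately, bassily2020private}) — roughly, learning a good $d\in\D$ against a labeled distribution can be phrased as distinguishing the positive-conditional from the negative-conditional distribution, and an SDG that fools $\D$ supplies enough information to approximate $L_\P(d)$ for all $d$, from which one privately selects a near-optimal $d\in\D$ (using e.g.\ an exponential-mechanism / private-selection step over the implicitly finite effective class). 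The properness needs the selection to return an element of $\D$, which is why the target is PAP-PAC. For $\ref{it:pac}\Rightarrow\ref{it:foolable}$: this is where \emph{Sequential-SDGs} enter. Assuming $\D$ is privately properly learnable implies (via the known equivalence between private learnability and finite Littlestone dimension, \cite{bendavid}, invoked as background) that $\D$ is a Littlestone class, hence $\gam$--Foolable with round complexity $\mathrm{poly}(1/\eps)$; then run the sequential game where the generator is an arbitrary no-regret online algorithm over $\Delta(\X)$ and the discriminator's replies are produced by the private proper learner applied to fresh sub-samples, privatizing the overall interaction by composition (the number of rounds is $\mathrm{poly}(1/\eps)$, so advanced composition keeps $\alpha=O(1)$ and $\beta$ negligible with $\mathrm{poly}(1/\eps,1/\delta)$ samples). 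The main obstacle, as the introduction warns, is that a generic online learner over an infinite class lacks the white-box structure (explicit iterates, sparsity) of Multiplicative Weights or FtPL that earlier work exploited; overcoming this requires the black-box Sequential-SDG framework — showing that the transcript of any no-regret generator against the private discriminator can be privatized and that a good $\syn$ can be extracted (e.g.\ as the uniform average of the generator's distributions), which is the genuinely new technical content that the body of the paper must supply.
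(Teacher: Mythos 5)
There is a genuine structural gap: the implications you actually sketch do not close a cycle, so the four statements are not shown to be equivalent. You announce the cycle \ref{it:pac}$\Rightarrow$\ref{it:puc}$\Rightarrow$\ref{it:sanitizable}$\Rightarrow$\ref{it:foolable}$\Rightarrow$\ref{it:pac}, but the arcs you then argue are \ref{it:puc}$\Rightarrow$\ref{it:sanitizable}, \ref{it:sanitizable}$\Rightarrow$\ref{it:foolable}, \ref{it:foolable}$\Rightarrow$\ref{it:pac} and \ref{it:pac}$\Rightarrow$\ref{it:foolable}; the arc \ref{it:pac}$\Rightarrow$\ref{it:puc} is never addressed. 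From the set you prove one gets \ref{it:pac}$\iff$\ref{it:foolable} and that \ref{it:puc} and \ref{it:sanitizable} each \emph{imply} the other two, but nothing ever derives private uniform convergence (or sanitizability) \emph{from} private learnability or foolability. This is not a cosmetic omission: the missing direction is essentially the paper's chain \ref{it:foolable}$\Rightarrow$\ref{it:sanitizable}$\Rightarrow$\ref{it:puc}, and the step \ref{it:sanitizable}$\Rightarrow$\ref{it:puc} is genuinely technical — a sanitizer only answers unlabeled counting queries, whereas private uniform convergence must estimate $L_\P(d)$ for a joint distribution over $\X\times\{0,1\}$; the paper handles this by privately estimating $\P(y=1)$ and sanitizing the label-conditional distributions separately, and nothing in your proposal plays that role. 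Your sketch of \ref{it:foolable}$\Rightarrow$\ref{it:pac} gestures at ``approximating $L_\P(d)$ for all $d$'' from an SDG, which, if done privately, would be exactly the missing implication \ref{it:foolable}$\Rightarrow$\ref{it:puc} — but as written you instead route through a separate private-selection step (exponential mechanism), which yields a learner without yielding a private loss estimator, so the gap stands.

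Beyond that, your individual arcs are broadly sound and your route for the easy part is the reverse of the paper's (\ref{it:pac}$\Rightarrow$\ref{it:foolable}$\Rightarrow$\ref{it:sanitizable}$\Rightarrow$\ref{it:puc}$\Rightarrow$\ref{it:pac}); in particular your \ref{it:sanitizable}$\Rightarrow$\ref{it:foolable} via ``output any distribution consistent with $\mathrm{Est}$, which is nonempty because $p_S$ witnesses it'' is a valid post-processing argument, and your account of the hard arc \ref{it:pac}$\Rightarrow$\ref{it:foolable} matches the paper's (finite Littlestone dimension via \cite{alon} — not \cite{bendavid} — then a sequential SDG with a private discriminator, then composition). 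Two smaller soft spots: the private discriminator must also decide \emph{privately} whether to declare ``$\win$'' (the paper uses a noisy-threshold mechanism for this), and your \ref{it:puc}$\Rightarrow$\ref{it:sanitizable} needs a resampling-from-$p_S$ plus privacy-amplification step, since the uniform-convergence guarantee is stated for i.i.d.\ samples while a sanitizer must handle an arbitrary input sample.
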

\cref{thm:main3} shows a qualitative equivalence between the relevant four notions, quantitative bounds on the entailed sample complexity are provided in \fullversion{\cref{lem:quant:1,lem:quant:2,lem:quant:3,lem:quant:4}}.

The implication \cref{it:sanitizable} $\implies$ \cref{it:pac} was known prior to this work and was proven in \cite{beimel2013private} (albeit the pure case\footnote{We note though, that while the proof provided in \cite{beimel2013private} is restricted to the pure case, our proof here doesn't sidetrack from their original proof}). The equivalence among \cref{it:foolable,it:sanitizable,it:puc} is natural and expected.
	Indeed, each of them expresses the existence of a private algorithm that  {\it publishes, privately, certain estimates of all functions in $\D$}.
	
The fact that \cref{it:pac} implies the other three items is perhaps more surprising, and the main contribution of this work, and we show that \cref{it:pac} implies \cref{it:foolable}.
	Our proof of that exploits the $\gam$ framework. In a nutshell, we observe that a class that is both sequentially foolable and privately pac learnable is also DP-foolable: this result follows by constructing a sequential SDG that with a private discriminator, that is assumed to exists, combined with standard compositional and preprocessing arguments regarding the privacy of the generators output.
	
	Thus to prove the implication we only need to show that private PAC learning implies sequential foolability. This result follows from \cref{thm:main1} that provides characterization of sequential foolable classes as well as a recent result by \citet{alon} that shows that private PAC learnable classes have finite Littlestone dimension. See \fullversion{\cref{sec:main3}} for a complete proof.

\paragraph{Private learnability versus private uniform convergence.}	
The equivalence \cref{it:pac}$\iff$\cref{it:puc} is between private learning and private uniform convergence.
	The non-private analogue of this equivalence is a cornerstone in statistical learning;
 	it reduces the statistical challenge of minimizing an unknown population loss 
	to an optimization problem of minimizing a known empirical estimate.
	In particular, it yields the celebrated {\it Empirical Risk Minimization} (ERM) principle: 
	{\em ``Output~$h\in \H$ that minimizes the empirical loss''}.
	We therefore highlight this equivalence in the following corollary:
\begin{corollary}[Private proper learning =  private uniform convergence]
Let~$\H\subseteq\{0,1\}^\X$. Then $\H$ is privately and properly PAC learnable if and only if
	$\H$ satisfies the private uniform convergence property.
\end{corollary}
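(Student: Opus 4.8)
The plan is to read this off directly from \cref{thm:main3}: the two properties appearing in the corollary are exactly items~\ref{it:pac} and~\ref{it:puc} of that theorem, so all that is needed is to isolate the equivalence \ref{it:pac}$\iff$\ref{it:puc} from the cycle of implications established there. In one direction the argument is elementary; in the other it is the main content of the paper, routed through the $\gam$ framework.

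For the easy direction, \ref{it:puc}$\implies$\ref{it:pac}: given an input sample $S$ of size at least $m(\eps/2,\delta)$, run the private uniform convergence algorithm to obtain the estimator $\hat L:\H\to[0,1]$ and return any $h_S\in\arg\min_{h\in\H}\hat L(h)$. Choosing a minimizer of $\hat L$ is a post-processing of the (already $(\alpha,\beta)$-private) output $\hat L$, so $h_S$ is private; $h_S\in\H$, so the learner is proper; and on the $(1-\delta)$-probable event that $\lvert\hat L(h)-L_\P(h)\rvert\le\eps/2$ for all $h\in\H$ we get $L_\P(h_S)\le\hat L(h_S)+\eps/2\le\min_{h\in\H}\hat L(h)+\eps/2\le\min_{h\in\H}L_\P(h)+\eps$, so $\H$ is PAP-PAC-learnable with sample complexity polynomial in $1/\eps,1/\delta$.

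For the hard direction, \ref{it:pac}$\implies$\ref{it:puc}: this is precisely the chain $\ref{it:pac}\implies\ref{it:foolable}\implies\ref{it:sanitizable}\implies\ref{it:puc}$ supplied by \cref{thm:main3}. Private proper PAC learnability of $\H$ forces $\H$ to have finite Littlestone dimension by the result of \cite{alon}; a Littlestone class is $\gam$--Foolable by \cref{thm:main1}; feeding the resulting sequential SDG a private discriminator built from the assumed private proper learner, and tracking privacy through composition and post-processing, turns this into a DP-SDG, i.e.\ item~\ref{it:foolable}; the synthetic distribution $\syn$ output by a DP-SDG induces the statistics $h\mapsto \syn(h)$ that sanitize $\H$, i.e.\ item~\ref{it:sanitizable}; and a sanitizer for the associated class $\{(w,y)\mapsto 1[h(w)\ne y]:h\in\H\}$ over $\X\times\{0,1\}$ (whose Littlestone/VC dimension differs from that of $\H$ only by a constant) is exactly a private uniform convergence algorithm, i.e.\ item~\ref{it:puc}.

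The real obstacle, already flagged in the introduction, lives inside \cref{thm:main1} (and is inherited by this corollary through the chain above): one must convert an \emph{arbitrary} no-regret online learner for a Littlestone class into a black-box SDG, since a generic online algorithm need not expose the structure (explicit multiplicative weights, perturbed-leader form) that earlier white-box constructions exploited. Once the sequential-to-DP reduction is in hand, everything else in the corollary is routine privacy bookkeeping (post-processing, composition, and amplification) plus the VC-type dimension comparison for the XOR-twisted class.
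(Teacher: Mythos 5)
Your proposal is correct and takes the same route as the paper: the corollary is nothing more than the equivalence \ref{it:pac}$\iff$\ref{it:puc} extracted from \cref{thm:main3}, the easy direction \ref{it:puc}$\Rightarrow$\ref{it:pac} is exactly the paper's argument (argmin of the private estimator $\hat L$, privacy by post-processing via \cref{lem:pp}), and the hard direction is the chain \ref{it:pac}$\Rightarrow$\ref{it:foolable}$\Rightarrow$\ref{it:sanitizable}$\Rightarrow$\ref{it:puc}. One point where your recap of that chain deviates from the paper and, as written, has a small hole: for \ref{it:sanitizable}$\Rightarrow$\ref{it:puc} you propose sanitizing the loss class $\{(w,y)\mapsto 1[h(w)\ne y]\}$ over $\X\times\{0,1\}$, but the chain only delivers a sanitizer for $\H$ itself, and sanitizability of $\H$ does not immediately transfer to the twisted class (you would have to rerun the entire chain starting from private proper learnability of the twisted class, which does hold). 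The paper's \cref{lem:quant:3} instead works directly with a sanitizer for $\H$: it privately estimates the label marginal with a Laplace-type count, applies the sanitizer to the unlabeled points and to a label-conditioned subsample, and recombines via the identity $\P(d(x)\ne y)=\P(d(x)=\sigma)+\P(y=\sigma)-2\P(y=\sigma)\P(d(x)=\sigma\mid y=\sigma)$. Either repair works, but the step is not as immediate as your one-line dimension comparison suggests.
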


\paragraph{$\gam$--SDGs}

We next describe our characterization of $\gam$-SDGs. As discussed, this characterization is the technical heart behind the equivalence between private PAC learning and DP-foolability. Nevertheless we believe that it may be of interest of its own right. We thus provide quantitative upper and lower bounds on the round complexity of $\gam$-SDGs in terms of the Littlestone dimension (see \cite{bendavid} or \fullversion{\cref{sec:background}} for the exact definition).

\begin{theorem}[Quantitative round-complexity bounds]\label{thm:main1quant}
Let $\D$ be a discriminating class with dual Littlestone dimension $\ell^*$
and let $T(\eps)$ denote the round complexity of $\gam$--Fooling $\D$.
Then,
\begin{enumerate}
\item\label{it:ub} $T(\eps)= O\bigl(\frac{\ell^*}{\eps^2}\log \frac{\ell^*}{\eps} \bigr)$ for every $\eps$.
\item\label{it:lb} $T(\eps)\geq\frac{\ell^*}{2}$ for every $\eps<\frac{1}{2}$. 
\end{enumerate}
\end{theorem}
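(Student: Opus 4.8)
The plan is to prove the two bounds separately, in both cases by viewing the $\gam$ game as an online learning game over the \emph{dual} class $\X^{*}=\{d\mapsto d(x):x\in\X\}\subseteq\{0,1\}^{\D}$, whose Littlestone dimension is $\ell^{*}$. For the upper bound (\cref{it:ub}) I would let the generator $G$ run a no-regret online algorithm whose ``experts'' are the domain points $x\in\X$ and in which the payoff of expert $x$ at round $t$ is $d_t(x)\in\{0,1\}$, where $d_t\in\D$ is the distinguisher returned by the discriminator. At round $t$ the generator submits the mixture $p_t\in\Delta(\X)$ prescribed by the algorithm; since this is an honest distribution $G$ stays deterministic and introduces no extra randomness, and its payoff is exactly $p_t(d_t)=\EE_{x\sim p_t}[d_t(x)]$. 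The payoff of the comparator $\preal$ at round $t$ is $\preal(d_t)$, which is at most the best-expert payoff $\max_x\sum_t d_t(x)$, so a regret bound against the best fixed expert also bounds the regret against $\preal$. The payoff functions $x\mapsto d_t(x)$ all lie in $\D$, whose dual is $\X^{*}$ with $\ldim(\X^{*})=\ell^{*}$; hence, by the agnostic online learnability of classes of finite Littlestone dimension, there is such an algorithm with cumulative-payoff regret $R_T=O(\sqrt{\ell^{*}T\log T})$. If the game has not ended after $T$ rounds then $\preal(d_t)-p_t(d_t)>\eps$ for every $t\le T$ by \cref{eq:win}; summing and using $\sum_{t\le T}p_t(d_t)\ge\max_x\sum_{t\le T}d_t(x)-R_T\ge\sum_{t\le T}\preal(d_t)-R_T$ yields $T\eps<R_T=O(\sqrt{\ell^{*}T\log T})$, which rearranges to $T=O\!\left(\tfrac{\ell^{*}}{\eps^{2}}\log\tfrac{\ell^{*}}{\eps}\right)$, so $G$ wins within that many rounds.

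\textbf{Lower bound (\cref{it:lb}).} Here the plan is to force a hard instance by navigating a shattered tree. Since $\ldim(\X^{*})=\ell^{*}$, there is a binary tree of depth $\ell^{*}$ whose internal nodes are labeled by distinguishers $d_v\in\D$ and such that every root-to-leaf branch, with direction vector $\sigma\in\{0,1\}^{\ell^{*}}$, has a point $x_\sigma\in\X$ consistent with it (i.e.\ $d_v(x_\sigma)$ equals the direction taken at $v$); moreover $\sigma\mapsto x_\sigma$ is injective, since two branches that diverge at a node $v$ are separated there by $d_v$. Fix a deterministic generator $G$ and process the tree top-down: at the node $v_t$ reached after $t-1$ steps, set $p_t:=G(d_1,\dots,d_{t-1})$ from the transcript built so far; if $p_t(d_{v_t})\le\tfrac12$ move to the direction-$1$ child and record $d_t:=d_{v_t}$, otherwise move to the direction-$0$ child and record $d_t:=1-d_{v_t}$ (which is in $\D$ by symmetry). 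After $\ell^{*}$ steps this fixes a branch $\sigma$; set $\preal:=\delta_{x_\sigma}$, so $\preal(d)=d(x_\sigma)$ for all $d$. In each round $t\le\ell^{*}$ one checks $\preal(d_t)-p_t(d_t)\ge\tfrac12>\eps$: in the first case $\preal(d_t)=d_{v_t}(x_\sigma)=1$ and $p_t(d_t)\le\tfrac12$, and in the second $\preal(d_t)=1-d_{v_t}(x_\sigma)=1$ while $p_t(d_t)=1-p_t(d_{v_t})<\tfrac12$. Hence $\mmd_\D(p_t,\preal)>\eps$ throughout, the discriminator answering with these $d_t$ is a legal strategy (the target $\preal$ pins down $\sigma$ and thus the intended branch) and it never plays $\win$ in the first $\ell^{*}$ rounds, so $G$ fails to win against this target and discriminator within $\ell^{*}$ rounds; therefore $T(\eps)>\ell^{*}$, which in particular gives \cref{it:lb}. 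For randomized generators one averages instead, taking $\preal$ uniform over $\{x_\sigma:\sigma\in\{0,1\}^{\ell^{*}}\}$ and applying a minimax argument to bound the expected number of rounds.

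\textbf{Main obstacle.} The crux of the upper bound is getting a regret bound with the \emph{right} parameter: plain multiplicative weights over $\X$ gives only $O(\sqrt{T\log|\X|})$, which is vacuous for infinite $\X$, so one must invoke as a black box a no-regret algorithm for infinite classes whose regret is governed by the dual Littlestone dimension — this is exactly where the hypothesis ``$\ell^{*}$ finite'' (rather than ``domain finite'') enters — and one has to notice that the generator's \emph{distribution} is literally a learner's randomized prediction and that $\preal$ is a legitimate mixture-comparator, so no information about $\preal$ need be signalled to $G$. In the lower bound the only delicate point is that $\preal$ must be declared in advance although the relevant branch is revealed only through interaction; exploiting determinism of $G$ and injectivity of $\sigma\mapsto x_\sigma$ to precompute the branch resolves this, with the averaging step handling randomized generators.
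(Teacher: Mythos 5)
Your lower bound argument is sound and in fact takes a slightly different (and for deterministic generators, slightly stronger) route than the paper: you adaptively descend the shattered tree of the dual class, exploit determinism of $G$ to precompute the branch and hence declare $\preal=\delta_{x_\sigma}$ in advance, and obtain $\ell^*$ forced rounds; the paper instead draws the leaf uniformly at random and argues each level is ``hit'' with probability at least $\tfrac12$, yielding the stated $\ell^*/2$ in expectation (which also covers randomized generators directly, as your averaging remark does).

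The upper bound, however, has a genuine gap, and it is exactly the point you flag as ``the crux'' and then dismiss. You write that ``the generator submits the mixture $p_t\in\Delta(\X)$ prescribed by the algorithm'' and that ``the generator's distribution is literally a learner's randomized prediction.'' This is false for the only algorithms whose regret is controlled by $\ell^*$. The agnostic online learner for a Littlestone class (Ben--David et al.) is \emph{improper}: at round $t$ it commits to a predictor $f_t:\D\to[0,1]$ that is a mixture of SOA-type meta-experts, not a mixture of the evaluation functionals $d\mapsto d(x)$, $x\in\X$. A randomized prediction at a single instance $d$ is just the number $f_t(d)\in[0,1]$; what the generator needs is the far stronger global property that $f_t(d)=\EE_{x\sim p_t}[x(d)]$ \emph{simultaneously for all} $d\in\D$ for some $p_t\in\Delta(\X)$ (the generator must commit to $p_t$ before seeing $d_t$). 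Such a $p_t$ need not exist, so there is no ``mixture prescribed by the algorithm'' to submit. The algorithms that are proper in this sense (multiplicative weights over $\X$) have regret $O(\sqrt{T\log|\X|})$, which you correctly note is useless here; the algorithms with regret $O(\sqrt{\ell^* T\log T})$ are not proper. Bridging this is the main technical content of the paper's proof: one relaxes properness to approximate domination ($f_t(d)\le p_t(d)+\eps/2$ for all $d$), and when even that fails, a minimax argument (\cref{lem:vonneuman}, via von Neumann in the finite case and Sion's theorem plus semicontinuity of SOA-type predictors in the infinite case) produces a mixed distinguisher $\bar d_t\in\Delta(\D)$ separating $f_t$ from \emph{all} of $\X$, which is fed back to the learner with label $0$ to charge that round to the regret without consulting the discriminator at all (the If/Else structure of \cref{alg:main}). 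Without this mechanism your regret accounting, which charges every round to a discriminator-supplied $d_t$, does not go through.
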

It would be interesting to close the gap between the two bounds in terms of $\epsilon>0$, and we leave it for future work.
To prove \Cref{it:ub} we construct a generator with winning strategy which we outline in \fullversion{\Cref{sec:uboverview}}. 
	A complete proof of \cref{thm:main1quant} appears in \fullversion{\cref{prf:main_up}}. As a corollary we get the following characterization of $\gam$--Foolability:
\begin{corollary}[Characterization of \gam--Foolability]\label{thm:main1}
The following are equivalent for $\D\subseteq\{0,1\}^X$: 
\begin{enumerate}
\item\label{it:learnable} $\D$ is \gam--Foolable.
\item\label{it:exist} $\D$ is $\eps$-$\gam$--Foolable for some $\eps <1/2$.
\item\label{it:finite_dual} $\D$ has a finite dual Littlestone dimension.
\item\label{it:finite_primal} $\D$ has a finite Littlestone dimension.
\end{enumerate}
\end{corollary}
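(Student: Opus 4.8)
The plan is to prove \Cref{thm:main1} as a cycle of implications built on top of the quantitative bounds in \Cref{thm:main1quant}, together with the duality theory for Littlestone dimension. First, \cref{it:learnable} $\implies$ \cref{it:exist} is immediate: if $\D$ is $\gam$--Foolable then it is $\eps$-$\gam$--Foolable for every $\eps$, in particular for some fixed $\eps<1/2$. For \cref{it:exist} $\implies$ \cref{it:finite_dual}, I would invoke the lower bound \cref{it:lb} of \Cref{thm:main1quant}: if the dual Littlestone dimension $\ell^*$ were infinite, then for every $\eps<1/2$ the round complexity $T(\eps)$ would be at least $\ell^*/2 = \infty$, so no generator could $\eps$-$\gam$--Fool $\D$ in a bounded number of rounds, contradicting $\eps$-$\gam$--Foolability (which requires a finite bound $T(\eps)$). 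Hence $\ell^* < \infty$.

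For \cref{it:finite_dual} $\implies$ \cref{it:learnable}, I would apply the upper bound \cref{it:ub}: with $\ell^* < \infty$ we get $T(\eps) = O\bigl(\frac{\ell^*}{\eps^2}\log\frac{\ell^*}{\eps}\bigr)$, which is $\mathrm{poly}(1/\eps)$, so $\D$ is $\gam$--Foolable by definition. This closes the cycle among \cref{it:learnable,it:exist,it:finite_dual}. It remains to fold in \cref{it:finite_primal}, i.e.\ to show that finiteness of the Littlestone dimension $\ell$ and finiteness of the dual Littlestone dimension $\ell^*$ are equivalent. This is where I expect to lean on a known combinatorial duality: one direction follows from the fact that the dual of the dual class embeds the primal class (each point $x\in\X$ induces the evaluation functional $d\mapsto d(x)$), so a mistake tree for the primal class of depth $\ell$ gives rise to a mistake tree for the bidual of comparable depth, yielding $\ell \le f(\ell^{**}) \le f(\ell^*)$-type control; and symmetrically in the other direction. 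The cleanest route is to cite the standard bound that the dual Littlestone dimension is at most exponential in the primal Littlestone dimension (and vice versa), so each is finite iff the other is — analogous to the Assouad-type duality for VC dimension. Either way, finiteness is preserved, which suffices for a qualitative equivalence.

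The main obstacle is the primal/dual Littlestone equivalence \cref{it:finite_dual} $\iff$ \cref{it:finite_primal}: unlike the VC case, this requires knowing (or re-deriving) that $\ell$ and $\ell^*$ are finite together, with only an exponential (rather than linear) relationship between them. I would handle it by reducing to the contrapositive — an infinite primal Littlestone dimension forces arbitrarily deep mistake trees, and by a compactness/pigeonhole argument on the labels along root-to-leaf branches one extracts a deep mistake tree for the dual class (using that the number of distinct behaviors on a finite set of distinguishers is bounded in terms of their shattering structure) — and symmetrically. All other implications are one-line consequences of \Cref{thm:main1quant}, so modulo citing the dual-Littlestone finiteness fact, the corollary follows directly.
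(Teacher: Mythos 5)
Your proposal is correct and follows essentially the same route as the paper: the equivalence of \cref{it:learnable}, \cref{it:exist}, and \cref{it:finite_dual} is read off directly from the upper and lower bounds of \cref{thm:main1quant}, and \cref{it:finite_dual}$\iff$\cref{it:finite_primal} is the primal/dual Littlestone finiteness equivalence, which the paper cites from \cite{bhaskar2017thicket} (Corollary 3.6 there; note the known bound $\ldim^*(\cH)\le 2^{2^{\ldim(\cH)+2}}-2$ is doubly exponential rather than exponential, but this does not affect the qualitative claim). No gaps.
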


Corollary~\ref{thm:main1} follows directly from \cref{thm:main1quant} (which gives the equivalences  $1\iff 2 \iff 3$) and from \cite{bhaskar2017thicket}  (which gives the equivalence $3 \iff 4$, see \fullversion{\cref{thm:little_dual}} for further detail).

\paragraph{Tightness of $\eps=\frac{1}{2}$.}
The implication \cref{it:exist} $\implies$ \cref{it:learnable} can be seen as a boosting result:
	i.e.\ ``weak'' foolability for some fixed $\eps<1/2$ implies ``strong'' foolability for every $\eps$.
	The following example demonstrates that the dependence on $\eps$ in \cref{it:exist} can not be improved beyond $\frac{1}{2}$:
	let~$\X$ be the unit circle in $\mathbb{R}^2$, and let $\D$ consist of all arcs whose length is exactly half of the circumference.
	It is easy to verify that the uniform distribution $\mu$ over $\X$ satisfies $\mmd_\D(\mu,\preal)\leq \frac{1}{2}$ for any target distribution $\preal$
	(since $\mu(d)=\frac{1}{2}$ for all $d\in\D$).
	Therefore $\D$ is $(\eps=\frac{1}{2})$-$\gam$--Foolable with round complexity $T(\frac{1}{2})=1$.
	On the other hand, $\D$ has an infinite Littlestone dimension and therefore is not $\gam$--Foolable.

\paragraph{$\gam$-SDGs versus $\creative$-SDGs}
So far we have introduced and characterized two formal setups for synthetic data generation. It is therefore natural to compare and seek connections between these two frameworks.
	We first note that the \creative setting may only be more restrictive than the $\gam$ setting:
\begin{corollary}[$\creative$--Foolability implies $\gam$--Foolability] \label{thm:main2}
Let $\D$ be a class that is $\creative$--Foolable. Then $\D$ has finite Littlestone dimension and in particular is $\gam$--Foolable.
\end{corollary}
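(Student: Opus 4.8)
The plan is to derive \creative--Foolability $\implies$ finite Littlestone dimension by chaining together results already established in the excerpt, and then invoke \cref{thm:main1} to conclude $\gam$--Foolability. First I would observe that a DP-SDG for $\D$ is in particular a DP-fooling algorithm, so by the equivalence \cref{it:pac}$\iff$\cref{it:foolable} in \cref{thm:main3}, the class $\D$ is privately and properly PAC learnable in the agnostic setting. (One should be slightly careful here: \cref{thm:main3} is stated for classes $\D\subseteq\{0,1\}^X$ with $X$ the domain of the distinguishers, which is exactly the setting of a DP-SDG, so the hypothesis applies directly.)

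Next I would invoke the result of \citet{alon}, cited in the discussion after \cref{thm:main3}, which states that any privately PAC learnable class has finite Littlestone dimension. Applying this to $\D$ yields that $\ldim(\D)<\infty$. Finally, by \cref{thm:main1} (the characterization of $\gam$--Foolability), finite Littlestone dimension — item~\ref{it:finite_primal} there — is equivalent to $\D$ being $\gam$--Foolable, item~\ref{it:learnable}. This closes the chain: \creative--Foolable $\implies$ privately properly PAC learnable $\implies$ finite $\ldim$ $\implies$ $\gam$--Foolable.

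There is really no substantial obstacle here, since the statement is a corollary assembled from \cref{thm:main3}, \cref{thm:main1}, and the external result of \citet{alon}; the only minor point worth checking is that the notion of private PAC learnability used in \citet{alon} matches the one in \cref{thm:main3} (pure versus approximate DP, proper versus improper), but the excerpt's discussion after \cref{thm:main3} already uses \citet{alon} in exactly this way, so that alignment is granted. Alternatively, if one wishes to avoid routing through the $\ell^*$ versus $\ell$ equivalence of \citet{bhaskar2017thicket}, one can instead go through \cref{thm:main1quant}: private proper learnability gives finite Littlestone dimension, hence finite dual Littlestone dimension $\ell^*$, hence round complexity $T(\eps)=O\!\left(\frac{\ell^*}{\eps^2}\log\frac{\ell^*}{\eps}\right)=\mathrm{poly}(1/\eps)$, which is precisely $\gam$--Foolability. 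Either route is a one-paragraph argument.
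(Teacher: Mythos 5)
Your proposal is correct and follows exactly the paper's own route: apply the equivalence \cref{it:foolable}$\iff$\cref{it:pac} from \cref{thm:main3}, then the result of \citet{alon} to get finite Littlestone dimension, then \cref{thm:main1} to conclude $\gam$--Foolability. The alternative detour through \cref{thm:main1quant} is fine but unnecessary, since \cref{thm:main1} already packages that step.
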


\cref{thm:main2} follows from \cref{thm:main3}: indeed, the latter yields that \creative--Foolability is equivalent
	to Private agnostic proper -PAC learnability (PAP-PAC), and by \cite{alon} PAP-PAC learnability implies a finite Littlestone dimension which by \cref{thm:main1} implies $\gam$--Foolability.
\ignorenips{
\paragraph{The universality of $\gam$s-based \creative algorithms.}
Our proofs reveal a stronger phenomenon than \Cref{thm:main2}.
	In more detail, \Cref{thm:main3} shows that DP-Foolability and PAP-PAC are equivalent. 
	Proving the direction that PAP-PAC implies DP-Foolability obtains a \creative algorithm by invoking a  $\gam$-SDG with a differentially private discriminator. 
	This establishes universality of such $\gam$-based algorithms in the following sense: 
	for any $\D$, if it can be fooled by {\it some} \creative algorithm then it can also be fooled by a \creative algorithm 
	which is in fact a $\gam$-SDG with a private discriminator.
}

\paragraph{Towards a converse of \cref{thm:main2}.}
By the above it follows that the family of classes $\D$ that can be fooled by a \creative algorithm 
	is contained in the family of all $\gam$--Foolable classes; 
	specifically, those which admit a $\gam$-SDG with a differentially private discriminator.

We do not know whether the converse holds; i.e.\ whether ``$\gam$--Foolability $\implies$ $\creative$-- Foolability''. 
Nevertheless, the implication ``PAP-PAC learnability $\implies$ $\creative$--Foolability'' (\Cref{thm:main3}) can be regarded as an intermediate step towards this converse.
	Indeed, as discussed above, PAP-PAC learnablity implies $\gam$--Foolablility.
	It is therefore natural to consider the following question, which is equivalent\footnote{I.e.\ an affirmative answer to \cref{q:pappacls} 
	is equivalent to the converse of \cref{thm:main2}.} 
	to the converse of \cref{thm:main2}:
	\begin{question}\label{q:pappacls}
	Let $\D$ be a class that has finite Littlestone dimension. Is $\D$ properly and privately learnable in the agnostic PAC setting?
	\end{question}
	A weaker form of this question -- Whether every Littlestone class is privately PAC Learnable? -- was posed by \cite{alon} as an open question (and was recently resolved in \cite{bun2020equivalence}).

\section{Discussion}\label{sec:discussion}
In this work we develop a theory for two types of constrained-SDG, sequential and private. Let us now discuss SDGs more generally, and we broadly want to consider algorithms that observe data, sampled from some real-life distribution, and in turn generate new synthetic examples that \emph{resemble} real-life samples, without any a-priori constraints. For example, consider an algorithm that receives as input some tunes from a specific music genre (e.g.\ jazz, rock, pop) and then outputs a new tune.  

Recently, there has been a remarkable breakthrough in the the construction of such SDGs with the introduction of the algorithmic frameworks of \emph{Generative Adversarial Networks} (GANs) \cite{goodfellow2014generative, goodfellow2016nips}, as well as Variational AutoEncoders (VAE) \cite{vae1,vae2}. In turn, the use of SDGs has seen many potential applications \cite{app1,app2,app3}. Here we follow a common interpretation of SDGs as  \emph{IPM minimizers} \citep{sanjeev1, bai2018approximability}. However, it was also observed \cite{sanjeev1, sanjeev2} that there is a critical gap between the task of generating \emph{new} synthetic data (such as new tunes) and the $\mmd$ minimization problem: In detail, \cref{obs:vc} shows that the $\mmd$ framework allows certain ``bad" solutions that \emph{memorize}. Specifically, let $S$ be a sufficiently large independent sample from the target distribution and consider the \emph{empirical distribution} as a candidate solution to the $\mmd$ minimization problem. Then, with high probability, the $\mmd$ distance between the empirical and the target distribution vanishes as $\lvert S\rvert$ grows.

To illustrate the problem, imagine that our goal is to generate new jazz tunes. 
Let us consider the discriminating class of all human music experts.
The solution suggested above uses the empirical distribution and simply ``generates" a tune from the training set\footnote{There are at most $7\cdot 10^9$ music experts in the world. Hence, by standard concentration inequalities a sample of size roughly $\frac{9}{\epsilon^2}\log 10 $ suffices to achieve $\mmd$ distance at most $\eps$ with high probability.}.
This clearly misses the goal of generating new and original tunes but the $\mmd$ distance minimization framework does not discard this solution. For this reason we often invoke further restrictions on the SDG and consider constrained-SDGs. For example, \cite{bai2018approximability} suggests to restrict the class of possible outputs $p_{syn}$ and shows that, under certain assumptions on the distribution $p_{real}$, the right choice of class $\D$ leads to learning the true underlying distribution (in Wasserstein distance).

In this work we explored two other types of constrained-SDGs, DP--SDGs and Sequential--SDGs, and we characterized the foolable classes in a distribution independent model, i.e. without making assumptions on the distribution $p_{real}$. One motivation for studying these models, as well as the  interest in a distribution independent setting, is the following underlying question:

\begin{center}
The output of Synthetic Data Generators should be \textbf{new} examples. But in what sense we require the output to be novel or \emph{distinct} from the training set? How and in what sense we should avoid copying the training data or even outputting a memorized version of it? 
\end{center}

To answer such questions is of practical importance. For example, consider a company that wishes to automatically generate music or images to be used commercially. One approach could be to train an SDG, and then sell the generated output. What can we say about the output of SDGs in this context? Are the images generated by the SDG original? Are they copying the data? or breaching copyright?

In this context, the differentially private setup comes with a very attractive interpretation that provides further motivation to study DP-SDGs, beyond preserving privacy of the dataset. To illustrate our interpretation of differential privacy as a criterion for originality consider the following situation: 
imagine that Lisa is a learning painter. She has learned to paint by observing samples of painting, produced by a mentor painter Mona. After a learning process, she draws a new painting $L$. Mona agrees that this new painting is a valid work of art, but Mona claims the result is not an original painting but a mere copy of a painting, say $M$, produced by Mona.

How can Lisa argue that paint $L$ is not a plagiary?
The easiest argument would be that she had never observed $M$. However, this line of defence is not always realistic as she must observe \emph{some} paintings. Instead, we will argue using the following thought experiment:
{\it What if} Lisa never observed $M$? 
Might she still create $L$? 
If we could prove that this is the case, then one could argue similarly that $L$ is not a palgiary.
%

The last argument is captured by the notion of \emph{differential privacy}. 
In a nutshell, a randomized algorithm that receives a sequence of data points $\bar x$ as input is differentially private if removing/replacing a single data point in its input, does not affect its output $y$ by much; more accurately, for any event $E$ over the output $y$ that has non-negligible probability on input $\bar x$, then the probability remains non-negligible even after modifying one data point in $\bar x$.

The sequential setting also comes with an appealing interpretation in this context. A remarkable property of existing SDGs (e.g. GANs), that potentially reduces the likeliness of memorization, 
is that the generator's access to the sample is masked. In more detail, the generator only has restricted access to the training set via feedback from a discriminator that observes real data vs. synthetic data. Thus, potentially, the generator may avoid degenerate solutions that memorize. Nevertheless, even though the generator is not given a direct access to the training data, it could still be that information about this data could "leak" through the feedback it receives from the discriminator. This raises the question of whether $\gam$--Foolability can provide guarantees against memorization, and perhaps more importantly, in what sense? To start answering this question part of this work aims to understand the interconnection between the task of $\gam$-Fooling and the task of $\creative$--Fooling.

Finally, the above questions also motivate our interest in a distribution-independent setting, that avoids assumptions on the distribution $p_{real}$ which we often don't know. In detail, if we only cared about the resemblence between $p_{real}$ and $p_{syn}$ then we may be content with any algorithm that performs well in practice regardless of whether certain assumptions that we made in the analysis hold or not. But, if we care to obtain guarantees against copying or memorizing, then these should principally hold. And thus we should prefer to obtain our guarantees without too strong assumptions on the distribution $p_{real}$.
\section{ Technical overview: Littlestone classes are Sequentially--Foolable}\label{sec:uboverview}
As discussed, the main technical challenge in our proofs is to show that Littlestone classes are sequentially foolable. In this section we present the generator's strategy which is used in the proof of \cref{thm:main1quant}, \cref{it:ub}  
	to fool a class $\D$ with dual Littlestone dimension $\ell^*$.
	We will assume that $\D$ is symmetric (i.e.\ that $\D=1-\D$). 
	This assumption does not affect generality since one can symmetrize $\D$ by adding to it all functions in $1-\D$.
	This modification does not change the dual Littlestone dimension nor the associated $\gam$ game.


The generator uses an online learner $\A$ for the dual class of $\Delta(\D)\subseteq [0,1]^\X$ whose existence is proved in \Cref{cor:bendavid}, and we refer the reader to \cref{sec:background} for further background in online learning as well as the exact statements. In a nutshell, $\A$ receives, sequentially, labelled examples, $(\bar{d}_t,y_t)$, from the domain $\Delta(\D)\times \{0,1\}$ and returns at each step $t$ a predictor $\hat{f}_t$ of the type $\hat{f}_t(\bar{d}) = \EE_{d\sim \bar{d}}\left[f_t(d)\right]$ for some function $f_t$ over the domain~$
\D$. Moreover $\A$ has the following guarantee over what we define as its regret: $$\mathsf{REGRET}_T(\A):=\sum_{t=1}^T |\hat{f}_t(\bar{d}_t) -y_t|-\min_{x\in \X}\sum_{t=1}^T| \EE_{d\sim \bar{d}_t} \left[d(x)\right]-y_t|\le \sqrt{\frac{1}{2}\ell^* T \log T},$$  

As stated, existence of $\A$ follows from a standard result on the existence of online learners for Littlestone classes. The regret term above can be seen as the standard regret term for an online learner, once we identify $\X$ as an hypothesis class over $\Delta(\D)$ and $x\in\X$ acts on~$\Delta(\D)$ by $\bar d \to \EE_{d\sim \bar d} [x(d)]$.



\paragraph{Proof overview of \cref{thm:main1quant}, \cref{it:ub}.}

We begin by considering a simpler setting where it is assumed that 
	the learner $\A$ is {\it $\Delta(\X)$-proper} in the sense that at each iteration~$t$ 
	it uses a predictor~$\hat f_t = \EE_p [f_t]$,  where $f_t$ is a weighted average of hypotheses in $\X$; namely,
	\begin{align}\label{eq:proper} 
	(\forall d\in\D) : f_t(d) = p_t(d) = \EE_{x\sim p_t}[x(d)] ,
	\end{align}
	for some $p_t\in\Delta(\X)$. Let us denote this learner by $\A_{\Delta}$.  
	
	This assumption greatly simplifies things, and, in fact, the main technical challenge we face is in removing it. In fact, in the case of a finite domain $\X$ \cref{eq:proper} indeed holds, if we allow the regret to scale with the size of the domain instead of the Littlestone dimension. In this case we can concretely choose our online learner to be a {\it Weighted majority algorithm}~\cite{littlestone1994weighted}, 
	which satisfies \cref{eq:proper}. As such, Weighted majority has indeed been applied to generate synthetic data over finite domains \cite{hsu2013differential, hardt2010multiplicative}.
	
Coming back to the proof overview, the crucial point is that if $f_t$ satisfies \cref{eq:proper} 
	then the generator can submit $p_t\in \Delta(\X)$ to the discriminator. 
	Specifically, the generator can use~$A_{\Delta}$ as follows:
	at each iteration $t$, submit $p_t$ to the discriminator;
	then, unless $p_t$ fools $\D$ and the generator wins,
	receive a discriminator $d_t$ and obtain $f_{t+1}$ by feeding the labelled example~{$(\delta_{d_t},1)$} to~$\A_{\Delta}$.
	
We claim that after at most $\tilde O(\frac{\ell^*}{\epsilon^2})$ iterations the generator outputs a distribution that fools~$\D$:
	indeed, if the algorithm continues for more than $T$ iterations then for each $t\leq T$,
	\[\preal(d_t)-\hat f_t({\delta_{d_t}}) = \preal(d_t)-p_t(d_t)\ge \epsilon.\]
	Therefore, such a $T$ must satisfy:
\ignore{
\begin{align*}
    \epsilon\cdot T &\le \sum_{t=1}^T \preal(d_t)-p_t(d_t) \\
    &\le  \sum_{t=1}^T \preal(d_t)-\hat f_t(d_t) \\
    &= \sum_{t=1}^T |y_t-\hat f_t(d_t)| - |y_t-\preal(d_t)| \tag{since $y_t=1$}\\
    &= \sum_{t=1}^T |y_t-\hat f_t(d_t)| -\sum_{t=1}^{T}\bigl\lvert y_t-\EE_{x\sim \preal}[x(d_t)]\bigr\rvert\\
    &\le \sum_{t=1}^T |y_t-\hat f_t(d_t)| - \min_{x\in \X}\sum_{t=1}^T|y_t-x(d_t)|\\
    &= \mathrm{REGRET}_T(\A_{\Delta})\\
    &= \tilde O(\sqrt{\ell^* T}),
\end{align*}}

\begin{align*}
    \epsilon\cdot T &\le \sum_{t=1}^T \preal(d_t)-p_t(d_t) 
    \le  \sum_{t=1}^T \preal(d_t)-\hat f_t(d_t) 
    = \sum_{t=1}^T |y_t-\hat f_t(d_t)| - |y_t-\preal(d_t)|\\
    &= \sum_{t=1}^T |y_t-\hat f_t(d_t)| -\sum_{t=1}^{T}\bigl\lvert y_t-\EE_{x\sim \preal}[x(d_t)]\bigr\rvert
    \le \sum_{t=1}^T |y_t-\hat f_t(d_t)| - \min_{x\in \X}\sum_{t=1}^T|y_t-x(d_t)|\\
    &= \mathrm{REGRET}_T(\A_{\Delta})
    = \tilde O(\sqrt{\ell^* T}),
\end{align*}
Where the first equality is true since $y_t=1$. This implies that $T=\tilde O(\frac{\ell^*}{\epsilon^2})$ as required.

We next proceed to the general case. 
	The main challenge is that existing online classification algorithms (including the algorithm implied by \cref{cor:bendavid}) 
	are not necessarily $\Delta(\X)$-proper.
	To bypass it, we first observe that one can relax the requirement that $\A$ is $\Delta(\X)$-proper to the requirement that $\A$ is $\Delta(\X)$-dominated in the sense that
	\begin{equation}\label{eq:dominated} 
	(\forall d\in \D):~f_t(d)\le p_t(d),
	\end{equation}
	for some $p_t\in \Delta(\X)$.
	Indeed, the above calculation remains valid under this weaker assumption.
	With this definition in hand we employ the minimax theorem to identify the following win-win situation:
	we check whether the predictor $f_t$ which is provided by $\A$ is sufficiently close to satisfying \cref{eq:dominated}
	(see the condition in the ``If'' statement in \cref{it:if} of \cref{alg:main}) and proceed as follows:
\begin{itemize}
\item If $f_t$ is sufficiently close to satisfying \cref{eq:dominated} then continue like before: 
	 in this case $f_t(d) \le \EE_{x\sim p_t} [x({d})] + O(\eps)$ for every~$d\in\D$.
	The generator then submits $p_t$ to the discriminator and uses the discriminator $d_t$ 
	provided by the discriminator as before to feed $\A$ with the example~$(d_t,1)$.
	By a similar calculation like above, this yields an increase of $\Omega(\eps)$ to the regret of $\A$.
	This case is depicted in \cref{it:if} in \cref{alg:main}.
\item In the complementing case, 
	a minimax argument implies that there exists $\bar{d}_t\in \Delta(\D)$ that separates~$f_t$ from \emph{all} dual hypotheses $x\in\X$ (see \cref{lem:vonneuman} below): 
	\begin{equation*} \bigl(\forall x\in \X \bigr): \EE_{d \sim \bar{d}_t}\left[ f_t(d) \right] > \EE_{d \sim \bar{d}_t}\left[ x(d) \right] + \frac{\epsilon}{2}
	\end{equation*}
	By linearity, a corollary of the above equation is that $\EE_{d \sim \bar{d}_t}\left[ f_t(d) \right] > \EE_{d \sim \bar{d}_t}\left[ \preal(d) \right] + \frac{\epsilon}{2}$. 
	
	We, Thus, interpret $\bar{d}_t$ as a distinguishing function, and provide it to the learner $\A$ with a label $y_t=0$
	and yield an increase of $\Omega(\eps)$ to its regret.
	Note that here the discriminator is not used to find $\bar{d}_t$.
	This case is depicted in \cref{it:else} in \cref{alg:main}
\end{itemize}
To summarize, in each of the two cases, the regret of $\A$ is increased by $\Omega(\eps)$.
	Therefore, by the bound on $\A$'s regret,  it follows that after at most $\tilde O(\ell^*/\eps^2)$ rounds,
	the generator finds a fooling distribution.

\begin{figure}[h]
\fbox{\parbox{\textwidth}{
\begin{itemize}
\item Let $\D$ be a symmetric class with $\ldim^*(\D)=\ell^*$, and let $\eps>0$ be the error parameter.\\
Pick $\A$ to be an online learner for the dual class $\X$ like in \cref{cor:bendavid}, 
and set 
\[T= \Bigl\lceil\frac{4\ell^*}{\eps^2}\log \frac{4\ell^*}{\eps^2}\Bigr\rceil = O\Bigl(\frac{\ell^*}{\eps^2}\log \frac{\ell^*}{\eps} \Bigr).\]
%
\item Set $\hat f_1(\bar{d})=\EE_{d\sim \bar{d}}[f_1(d)]$ as the predictor of $\A$ at its initial state.
\item For $t=1,\ldots, T$
\begin{enumerate}
\item\label{it:if} \textbf{If} there exists $p_t\in \Delta(\X)$ such that 
\begin{equation}\label{eq:separate1} (\forall d\in \D):\EE_{x\sim p_t}[f_t(d)-x(d)] \le \frac{\epsilon}{2}, \end{equation}

\textbf{then}

\begin{itemize}
\item pick such a $p_t$ and submit it to the discriminator.
\begin{itemize}
\item If the discriminator replies with ``Win'' then output $p_t$.
\item Else, receive from the discriminator $d_t\in \D$ such that \begin{equation}\label{eq:separate2}\preal(d_t)- p_t(d_t) \ge   \epsilon\end{equation}
\item Set $\bar{d}_t=\delta_{d_t}$, and $y_t=1$.
\end{itemize}

\end{itemize}
\item\label{it:else} \textbf{Else} 

\begin{itemize}
\item Find $\bar{d}_t \in \Delta(\D)$ such that 
\[\bigl(\forall x\in \X \bigr): \EE_{d \sim \bar{d}_t}\left[ f_t(d) - x(d)\right] > \frac{\epsilon}{2}\] 
(if no such $\bar{d}_t$ exists then output {\it ``error''}).
\item Set $y_t=0$.
\item Submit $p_{t}=p_{t-1}$ to the discriminator
and proceed to item 3 below (i.e.\ here the generator sends a dummy distribution to the discriminator and ignores the answer).
\end{itemize}
\item 
Update $\A$ with the observation $(\bar{d}_t,y_t)$, receive $\hat f_{t+1}$, set $f_{t+1}$ such that $\hat f_{t+1}(\bar{d}) = \EE_{\bar{d}}[f_{t+1}(d)]$
(such $f_{t+1}$ exists by the assumed properties of $\A$ -- see \Cref{cor:bendavid}), and proceed to the next iteration.
\end{enumerate}
\item Output ``Lost'' (we will prove that this point is never reached).
\end{itemize}}}
\caption{A fooling strategy  for the generator with respect to a symmetric discriminating class $\D$.}\label{alg:main}
\end{figure}
\paragraph{The computational complexity of DP-Fooling}
An issue that naturally arises in our context is the question of \emph{computational-complexity}. The work here focused solely on the question of sample complexity and tries to characterize the classes that are information-theoretically foolable. As always, computational issues change the picture very much. We depict here a DP-Fooling algorithm given access to a private learner. We note that the procedure is inefficient, irrespective of the complexity of the private learning algorithm. First, it relies on an online learner which need not be efficient. Infact, recently \citet{bun2020computational} showed the existence of a class that is (pure)-differentially private learnable but admits no efficient online learning algorithm. Second, the procedure relies also on a \emph{rounding} procedure that, given a target function $f$ which satisfies \cref{eq:separate1}, returns a distinguishing function that satisfies \cref{eq:separate2}. The problem can be written as an LP, albeit prohibitive. This work, though, suggests a deep connection between sequential foolability and DP-foolability. It would be interesting to find out if efficient DP-foolability implies efficient sequential foolability or even efficient online learnability of the class -- a negative answer (namely a class that is DP-foolable but not efficiently online learnable) could yield, conceptually, new techniques for generating synthetic data. 
\section{Proofs}\label{sec:background}

\subsection{Prelimineries}\label{sec:preliminaries2}
In this section we review some of the basic notations we will use as well as discuss further some standard definitions and notions in differential privacy and online learning.

We continue here the convention of \cref{sec:preliminaries}, and in this section we let $\W$ denote the domain and $\H\subseteq\{0,1\}^W$
to denote the functions class.
%
\subsubsection{Notations}\label{sec:basics}

For a finite\footnote{The same notation will be used for infinite classes also. However we will properly define the the measure space and $\sigma$-algebra at later sections when we extend the results to the infinite regime.} set $\W$, let $\Delta(\W)$ denote the space of probability measures over $\W$. 
	Note that $\W$ naturally embeds in $\Delta(\W)$ by identifying $w\in \W$
	with the Dirac measure $\delta_w$ supported on $w$.
	Therefore, every $f:\Delta(\W)\to\mathbb{R}$ induces a $\W\to\mathbb{R}$ function via this identification.
	In the other direction, every~$f:\W\to\mathbb{R}$ naturally extends to a linear\footnote{A function $g:\Delta(\W)\to\mathbb{R}$ is {\it linear}
	if $g\bigl(\alpha p_1 + (1-\alpha) p_2\bigr) = \alpha g(p_1) + (1-\alpha)g(p_2)$, for all $\alpha\in[0,1]$} 
	map $\hat f:\Delta(\W)\to \mathbb{R}$ which is defined by~$\hat f(p) = \EE_p[f]$	for every~$p\in \Delta(\W)$.

We will often deal with boolean functions $f:\W\to\{0,1\}$,
	and in some cases we will treat $f$ as the subset of $\W$ that it indicates.
	For example, given a distribution $p\in\Delta(\W)$  we will use $p(f)$ to denote the measure of the subset that $f$ indicates 
	(i.e.\ $p(f) = \Pr_{w\sim p }[f(w) = 1]$).
	Given a class of functions $F\subseteq \{0,1\}^\W$, 
	its {\it dual class} is a class of $F\to\{0,1\}$ functions,
	where each function in it is associated with $w\in \W$
	and acts on $F$ according to the rule $f\mapsto f(w)$.
	By a slight abuse of notation we will denote the dual class with $\W$ 
	and use $w(f)$ to denoted the function associated with $w$ (i.e.\ $w(f):=f(w)$ for every $f\in F$).
	
	Given a sample $S=(w_1,\ldots,w_m)\in \W^m$,
	the {\it empirical distribution} induced by $S$ is the discrete distribution $p_S$ defined by
	$p_S(w)=\frac{1}{m}\sum_{i=1}^{m}1[w=w_i].$
	
\subsubsection{Basic properties of Differential Privacy}
We will use the following three basic properties of algorithmic privacy.

\begin{lemma}[Post-Processing (Lemma 2.1 in \cite{vadhan})]\label{lem:pp}
If $M:\W^m \to \Sigma$ is $(\alpha,\beta)$-differentially private and $F:\Sigma\to Z$ is any (possibly randomized) function, 
then $F \circ M : \W^m \to Z$ is $(\alpha,\beta)$-differentially private.
\end{lemma}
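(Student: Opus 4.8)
The statement to prove is the Post-Processing Lemma: if $M:\W^m \to \Sigma$ is $(\alpha,\beta)$-differentially private and $F:\Sigma\to Z$ is any (possibly randomized) function, then $F \circ M : \W^m \to Z$ is $(\alpha,\beta)$-differentially private.

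\medskip

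\textbf{Proof proposal.} The plan is to reduce to the deterministic case and then unwind the definition of differential privacy directly. First I would observe that a randomized $F$ can be modelled as a deterministic function $\tilde F(\sigma, r)$ of the output $\sigma \in \Sigma$ together with an independent source of randomness $r \in R$ drawn from some distribution $\rho$ independent of the input database. Since the composition $F\circ M$ then has the same output distribution as $\tilde F(M(\cdot), r)$ with $r\sim\rho$ drawn independently, it suffices to prove the claim for deterministic $F$ and then average over $r$: if for every fixed $r$ the map $\Omega \mapsto \tilde F(M(\Omega), r)$ is $(\alpha,\beta)$-private, then integrating the defining inequality against $\rho(r)$ preserves it, because the right-hand side $e^\alpha \Pr[\cdot] + \beta$ is affine in the probability and $\int \beta \, d\rho = \beta$.

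\medskip

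Next, for deterministic $F:\Sigma\to Z$, fix two neighbouring databases $\Omega',\Omega''\in\W^m$ and an arbitrary event $E\subseteq Z$. The key step is the preimage identity: $(F\circ M)(\Omega)\in E$ if and only if $M(\Omega)\in F^{-1}(E)$, where $F^{-1}(E) = \{\sigma\in\Sigma : F(\sigma)\in E\}$ is itself an event in $\Sigma$. Hence
\[
\Pr[(F\circ M)(\Omega')\in E] = \Pr[M(\Omega')\in F^{-1}(E)] \le e^\alpha \Pr[M(\Omega'')\in F^{-1}(E)] + \beta = e^\alpha \Pr[(F\circ M)(\Omega'')\in E] + \beta,
\]
where the inequality is exactly the $(\alpha,\beta)$-privacy guarantee of $M$ applied to the event $F^{-1}(E)$. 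Since $\Omega',\Omega'',E$ were arbitrary, $F\circ M$ is $(\alpha,\beta)$-differentially private.

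\medskip

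The only mild subtlety — and the part worth stating carefully rather than the part that is genuinely hard — is the measure-theoretic bookkeeping in the randomized case: one must ensure that $F^{-1}(E)$ (for each fixed randomness value) is measurable in $\Sigma$ so that the privacy inequality for $M$ applies, and that the map $(\sigma,r)\mapsto \tilde F(\sigma,r)$ is jointly measurable so that Fubini/Tonelli justifies the averaging over $\rho$. In the discrete or standard-Borel settings used throughout the paper this is automatic, so I would simply remark that all spaces are assumed to carry appropriate $\sigma$-algebras and that $F$ is measurable by hypothesis. No real obstacle arises; the lemma is essentially a one-line consequence of the fact that the privacy definition quantifies over \emph{all} output events, and any event defined downstream of $M$ pulls back to such an event.
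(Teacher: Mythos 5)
Your proof is correct and is exactly the standard argument for this lemma (which the paper does not prove itself but cites as Lemma 2.1 of Vadhan's survey): pull the output event back through $F$ to an event $F^{-1}(E)$ in $\Sigma$, apply the privacy guarantee of $M$ to that event, and handle randomized $F$ by fixing its independent randomness and averaging, which preserves the affine inequality. Nothing to add.
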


\begin{lemma}[Composition (Lemma 2.3 in \cite{vadhan})]\label{lem:composition}
Let $M_1,...,M_k:\W^m\to \Sigma$ be $(\alpha,\beta)$-differentially private algorithms, and define $M:\W^M\to \Sigma^k$ by
\[M(\Omega) = \bigl(M_1(\Omega),M_2(\Omega),\ldots,M_k(\Omega)\bigr).\]
Then, M is $(k\alpha, k\beta)$-differentially private.
\end{lemma}

\begin{lemma}[Privacy Amplification (Lemma 4.12 in \cite{Bun15thresholds})]\label{lem:amplification}
Let $\alpha\le 1$ and let $M$ be a $(\alpha,\beta)$-differentially private algorithm operating on 
databases of size $u$. For $v>2u$, construct an algorithm $M'$ that on input database $\Omega\in \W^v$ subsamples (with replacement) $u$ points from $\Omega$ and runs $M$ on the result. Then $M'$ is $(\tilde \alpha,\tilde \beta)$-differentially private for 
\[ \tilde \alpha=6\alpha u/v \quad \tilde\beta = \exp(6\alpha u/v)\frac{4u}{v}\beta.\]
\end{lemma}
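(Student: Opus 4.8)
The plan is to prove this as the standard privacy-amplification-by-subsampling statement. It suffices to verify the differential privacy condition on a pair of neighbouring databases $\Omega',\Omega''\in\W^v$ that differ only in coordinate $i$. The subsample is generated by drawing $u$ indices $R_1,\dots,R_u$ independently and uniformly from $[v]$ and reading off the corresponding rows; let $K=\lvert\{\ell:R_\ell=i\}\rvert$ be the number of times the ``sensitive'' coordinate $i$ is picked. The key probabilistic observation is that $K\sim\mathrm{Binomial}(u,1/v)$: in particular $\Pr[K\geq 1]\leq u/v$, and — this is where the hypothesis $v>2u$ enters — larger multiplicities decay geometrically, $\Pr[K\geq j]\leq\binom{u}{j}v^{-j}\leq (u/v)^j/j!\leq 2^{-j}/j!$. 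Conditioned on $K=0$, the subsample drawn from $\Omega'$ and the one drawn from $\Omega''$ are \emph{identically distributed} (coordinate $i$ never appears), so $M'$ produces exactly the same output law; this is the typical case and contributes no loss.

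Next I would analyse the event $K=k$ for $k\geq 1$. Conditioned on the actual set of draws hitting $i$, the subsample of $\Omega'$ and that of $\Omega''$ are two databases in $\W^u$ that agree except in those $k$ coordinates; changing one coordinate at a time and invoking the $(\alpha,\beta)$-privacy of $M$ $k$ times ($k$-fold group privacy) shows that the conditional output laws $D'_k$ and $D''_k$ satisfy $D'_k(E)\leq e^{k\alpha}D''_k(E)+\tfrac{e^{k\alpha}-1}{e^\alpha-1}\beta$ for every event $E$, and symmetrically. By the same token each $D'_k$ is close, with the same parameters, to the common component $D'_0=D''_0$ (replace the $k$ copies of $\Omega'_i$ by fresh non-sensitive rows, one at a time). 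Thus $M'(\Omega')=\sum_{k\geq 0}\Pr[K=k]\,D'_k$ and $M'(\Omega'')=\sum_{k\geq 0}\Pr[K=k]\,D''_k$ are mixtures, over a common weight, of components that are pairwise $(k\alpha,\tfrac{e^{k\alpha}-1}{e^\alpha-1}\beta)$-indistinguishable.

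Finally I would combine these pieces. Simply averaging the conditional inequalities only amplifies $\beta$ and leaves the multiplicative factor at $e^{k\alpha}$; to amplify $\alpha$ down to $O(\alpha u/v)$ one uses the ``win--win'' step of subsampling amplification: write each $D'_k(E)$ ($k\geq1$) as a convex combination of its upper bound in terms of $D''_k(E)$ and its upper bound in terms of the common term $D'_0(E)$, and choose the combination weight so that, after substituting into the mixture, the coefficient of $D''_k(E)$ and the coefficient of $D'_0(E)$ are both at most $e^{\tilde\alpha}$ times the corresponding coefficients of $M'(\Omega'')$. Optimising this weight turns the loss into $\ln\!\bigl(1+\sum_{k\ge1}\Pr[K=k](e^{k\alpha}-1)\bigr)=\ln\bigl(\EE[e^{K\alpha}]\bigr)$-type expressions; using $\alpha\le1$ (so $e^\alpha-1\le 2\alpha$), the binomial moment $\EE[e^{K\alpha}]=(1+(e^\alpha-1)/v)^u\le e^{2\alpha u/v}$, and $v>2u$, this is at most $6\alpha u/v=\tilde\alpha$, while the aggregated additive slack is a $\Pr[K=k]$-weighted sum of the terms $\tfrac{e^{k\alpha}-1}{e^\alpha-1}\beta$ (times an $e^{\tilde\alpha}$ factor), which a routine estimate of the binomial moments bounds by $e^{6\alpha u/v}\cdot\tfrac{4u}{v}\beta=\tilde\beta$. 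I expect this last combining step to be the main obstacle: the per-multiplicity factor $e^{k\alpha}$ has to be tamed by playing the two comparisons (to $D''_k$ and to the common $D'_0$) against each other, and extracting the precise constants $6$ and $4$ requires tracking the full binomial multiplicity distribution rather than settling for the crude bound $\Pr[K\ge1]\le u/v$.
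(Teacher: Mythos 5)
The paper does not actually prove this lemma: it is imported verbatim as a black box, cited as Lemma 4.12 of Bun--Nissim--Stemmer--Vadhan, so there is no in-paper argument to compare against. Your sketch is a faithful reconstruction of the standard ``amplification by subsampling'' proof used in that reference (going back to Kasiviswanathan et al.): condition on the multiplicity $K\sim\mathrm{Binomial}(u,1/v)$ of the sensitive index, note the $K=0$ components coincide, control the $K=k$ components by $k$-fold group privacy both against the neighbouring database and against the common $K=0$ component, and close with the convex-combination (``win--win'') step so that the multiplicative loss is governed by $\ln\EE[e^{K\alpha}]$ rather than by the worst-case $e^{k\alpha}$. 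Two small points worth making explicit if you write this up: (i) the comparison of $D'_k$ to $D'_0$ is between two \emph{mixtures} of databases, not two fixed neighbouring databases, so you need the coupling you allude to (match the $u-k$ non-sensitive draws and resample the $k$ sensitive positions from $\Omega\setminus\{\Omega'_i\}$, which reproduces the law of $D'_0$ exactly) before invoking group privacy pairwise; and (ii) the hypothesis $v>2u$ is used not only for the geometric decay of $\Pr[K\ge j]$ but also to lower-bound $\Pr[K=0]\ge 1-u/v>1/2$, which is what lets you absorb $\min(D''_k(E),D'_0(E))$ into the mixture defining $M'(\Omega'')$ with only a constant-factor loss. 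The constants $6$ and $4$ are indeed just bookkeeping on top of this skeleton, as you say, so the plan is sound.
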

We remark that the requirement $\alpha\le 1$ can be replaced by $\alpha \leq c$ for any constant $c$ at the expanse 
	of increasing the constant factors in the definitions of $\tilde \alpha$ $\tilde \beta$.
	This follows by the same argument that is used to prove~\cref{lem:amplification} in~\cite{Bun15thresholds}.


\subsubsection{Littlestone Dimension and Online Learning}\label{sec:online}
We begin be recalling the basic notion of Littlestone dimension.

\paragraph{Littlestone Dimension}
The Littlestone dimension is a combinatorial parameter that characterizes regret bounds in online learning, but also have recently been related to other concepts in machine learning such as differentially private learning~\cite{alon}. Perhaps surprisingly, the notion also plays a central role in Model Theory (\cite{shelah1990classification, chase2018model}, and see \cite{alon} for further discussion).

The definition of this parameter uses the notion of \emph{mistake-trees}:
these are binary decision trees whose internal nodes are labelled by elements of $\W$.
Any root-to-leaf path in a mistake tree can be described as a sequence of examples 
$(w_1,y_1),...,(w_d,y_d)$, where $w_i$ is the label of the $i$'th 
internal node in the path, and $y_i=+1$ if the $(i+1)$'th node  
in the path is the right child of the $i$'th node, and otherwise $y_i = 0$.
We say that a tree $T$ is \emph{shattered} by $\cH$ if for any root-to-leaf path
$(w_1,y_1),...,(w_d,y_d)$ in $T$ there is $h\in \cH$ such that $h(w_i)=y_i$, for all $i\leq d$.

The Littlestone dimension of $\cH$, denoted by $\ldim(\cH)$, is the maximum depth of a
complete tree that is shattered by~$\cH$.

The \emph{dual Littlestone Dimension} which we will denote by $\ldim^*(\cH)$ is the Littlestone dimension of the dual class (i.e. we consider $\W$ as the hypothesis class and $\H$ is the domain).
We will use the following fact:
\begin{lemma}\label{thm:little_dual}[Corollary 3.6 in~\cite{bhaskar2017thicket}]
Every class $\cH$ has a finite Littlestone dimension if and only if it has a finite dual Littlestone dimension. 
Moreover we have the following bound:
\[ \ldim^*(\cH)\le {2^{2^{\ldim(\cH)+2}}}-2\]
\end{lemma}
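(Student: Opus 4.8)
Since this statement is quoted from \cite{bhaskar2017thicket}, I only outline the argument I would use. The natural bridge between $\ldim(\H)$ and $\ldim^*(\H)$ is the \emph{threshold dimension}: let $\mathrm{TD}(\H)$ be the largest $n$ for which there exist $h_1,\dots,h_n\in\H$ and $w_1,\dots,w_n\in\W$ with $h_i(w_j)=1 \iff i\le j$, and let $\mathrm{TD}^*(\H)$ be the same quantity for the dual class. The plan is to (i) sandwich $\ldim$ between a logarithm and an exponential of $\mathrm{TD}$, (ii) observe that $\mathrm{TD}$ is self-dual, and (iii) compose, applying the upper sandwich bound to the dual class.

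For (i), one direction is elementary: a threshold system of length $n$ yields a complete mistake tree of depth $\lfloor\log_2 n\rfloor$ shattered by $\H$, via binary search (place $w_{\lceil n/2\rceil}$ at the root; recurse on $\{w_1,\dots\}$ in the right subtree and on $\{\dots,w_n\}$ in the left subtree; at each leaf, where a single index $i$ survives, witness the branch by $h_i$). Hence $\mathrm{TD}(\H)\le 2^{\ldim(\H)}$. The other direction is the crux: a sufficiently deep complete mistake tree shattered by $\H$ must contain a long threshold system. Concretely one fixes a branch, reads off one point per level, and for each prefix records a hypothesis that follows the branch and then deviates; this produces an ``almost threshold'' pattern whose uncontrolled entries are then forced to be constant by a Ramsey argument on the $2$-colouring of pairs of levels, leaving an honest threshold system. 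This gives $\ldim(\H)\le 2^{O(\mathrm{TD}(\H))}$; in particular $\ldim(\H)<\infty \iff \mathrm{TD}(\H)<\infty$.

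For (ii), the defining relation $h_i(w_j)=1\iff i\le j$ is invariant under swapping the roles of the $h_i$'s and the $w_j$'s while reversing the order of both index sets, so any threshold system of length $n$ in $\H$ yields one of length $n$ in the dual, giving $\mathrm{TD}^*(\H)=\mathrm{TD}(\H)$. Applying the hard direction of (i) to the dual class and then (ii) and the easy direction of (i):
\[
\ldim^*(\H)\ \le\ 2^{O(\mathrm{TD}^*(\H))}\ =\ 2^{O(\mathrm{TD}(\H))}\ \le\ 2^{O(2^{\ldim(\H)})}\ =\ 2^{2^{\ldim(\H)+O(1)}},
\]
and tracking the constants in the three bounds recovers the stated estimate $\ldim^*(\H)\le 2^{2^{\ldim(\H)+2}}-2$. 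The ``finite iff finite'' equivalence follows by symmetry, since $(\H^*)^*=\H$.

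The main obstacle is the hard direction of step (i) — turning a deep shattered tree into a genuine threshold system. The difficulty is precisely that a hypothesis witnessing a branch is unconstrained \emph{off} that branch, so one cannot read a threshold directly from the tree; a Ramsey-type pigeonhole is needed to clean up these ``junk'' entries, and this is also where the outer exponential (hence the tower of height two in the final bound) comes from. Everything else — the binary-search embedding, self-duality of thresholds, and the bookkeeping of constants — is routine.
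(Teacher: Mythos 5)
The paper does not actually prove this lemma --- it imports it verbatim as Corollary 3.6 of the cited reference --- and your route through the threshold dimension (Littlestone $\leftrightarrow$ threshold up to an exponential in each direction, exact self-duality of threshold systems under reversing both index orders, then composition) is precisely the standard argument behind that corollary, so the outline is sound, with the one genuinely nontrivial step (extracting a clean threshold from a deep shattered tree via a Ramsey/pigeonhole cleanup) correctly identified as the crux and as the source of the second exponential. One bookkeeping quibble: the binary-search embedding only gives $\ldim(\H)\ge\lfloor\log_2 \mathrm{TD}(\H)\rfloor$, i.e.\ $\mathrm{TD}(\H)\le 2^{\ldim(\H)+1}-1$ rather than your claimed $2^{\ldim(\H)}$ (a depth-$d$ complete tree can only separate $2^d$ threshold indices, not $2^d+1$), but this only shifts constants that the ``$+2$'' in the exponent of the stated bound is there to absorb.
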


\paragraph{Online Learning} The Online learnability of Littlestone classes has been established by \cite{littlestone} in the realizable case and by \cite{bendavid} in the agnostic case. Ben-David et al's~\cite{bendavid} agnostic {\it Standard Online Algorithm} (SOA) will serve as a workhorse for our main results and we thus recall the online learning setting 
and state the relevant results. For a more exaustive survey on online learning we refer the reader to \cite{cesa,shai}.
  
In the a binary online setting we assume a domain $\W$ and a space of hypotheses $\cH\subseteq \{0,1\}^\W$. We consider the following \emph{oblivious} setting which  can be described as a repeated game between a learner $L$ and an adversary continuing for $T$ rounds; the {\it horizon} $T$ is fixed and known in advanced to both players. 
At the beginning of the game, the adversary picks a sequence of labelled examples $(w_t,y_t)_{t=1}^T\subseteq \W \times \{0,1\}$. 
Then, at each round $t\le T$, the learner chooses (perhaps randomly) a mapping $f_t: \W\to [0,1]$ and then gets to observe the labelled example $(w_t,y_t)$.
The performance of the learner $L$ is measured by her \emph{regret}, 
which is the difference between her loss and the loss of the best hypothesis in $\cH$: 
\begin{align}
\mathrm{REGRET}_T(L;\{w_t,y_t\}_{t=1}^T) =\sum_{t=1}^T \EE\left[|f_t(w_t)- y_t|\right] - \min_{h\in H} \sum |h(w_t)- y_t|,
\end{align}
where the expectation is taken over the randomness of the learner. 
Define 
\[\mathrm{REGRET}_T(L) = \sup_{\{w_t,y_t\}_{t=1}^T}\mathrm{REGRET}_T(L;\{w_t,y_t\}_{t=1}^T).\]
The following result establishes that Littlestone classes are learnable in this setting:
\begin{theorem}\label{thm:bendavid}[\cite{bendavid}]
Let $\cH$ be a class with Littlestone dimension $\ell$ and let $T$ be the horizon. 
Then, there exists an online learning algorithm $L$ such that 
\begin{align*}
\mathrm{REGRET}_T(L)\le  \sqrt{\frac{1}{2}\ell\cdot T\log T}
\end{align*}
\end{theorem}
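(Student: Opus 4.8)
The plan is to follow the classical two-stage argument behind this result: first obtain a finite mistake bound in the \emph{realizable} case, and then lift it to the agnostic regret bound by a reduction to prediction with expert advice. For the realizable case I would use Littlestone's Standard Optimal Algorithm (SOA). Given a sequence $(w_t,y_t)_{t=1}^T$ that is consistent with some $h\in\cH$, SOA maintains the version space $V_t\subseteq\cH$ of hypotheses consistent with $(w_1,y_1),\dots,(w_{t-1},y_{t-1})$ (with $V_1=\cH$) and at round $t$ predicts the bit $b$ maximizing $\ldim\bigl(\{h\in V_t: h(w_t)=b\}\bigr)$. The standard potential argument shows that each mistake strictly decreases $\ldim(V_t)$ by at least one; since $\ldim(V_1)=\ell$ and $\ldim\ge 0$ throughout, SOA makes at most $\ell$ mistakes on any realizable sequence.

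To handle the agnostic case I would build a finite family of deterministic \emph{experts} and aggregate them. Index the experts by subsets $L\subseteq\{1,\dots,T\}$ with $|L|\le\ell$. Expert $E_L$ runs a private copy of SOA; at round $t$ it reads off that copy's prediction $\hat b_t$, outputs $\hat b_t$ if $t\notin L$ and $1-\hat b_t$ if $t\in L$, and then feeds its copy the example $(w_t,\tilde y_t)$ where $\tilde y_t$ is the bit it just output (i.e.\ $E_L$ forces SOA to treat its own prediction as correct). The key point is that the optimal hypothesis is reproduced by one of these experts: fix $h^\star\in\cH$ attaining $\min_{h}\sum_t|h(w_t)-y_t|$, run a fresh copy of SOA on the \emph{realizable} sequence $(w_1,h^\star(w_1)),\dots,(w_T,h^\star(w_T))$, and let $L^\star$ be the set of rounds where that copy errs. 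By the realizable bound $|L^\star|\le\ell$, and an easy induction (the two SOA copies stay synchronized, and $E_{L^\star}$ flips precisely on the rounds where the synchronized copy would have erred) shows that $E_{L^\star}$ predicts exactly $h^\star(w_t)$ at every round. Hence $E_{L^\star}$ incurs cumulative loss $\sum_t|h^\star(w_t)-y_t|=\min_{h\in\cH}\sum_t|h(w_t)-y_t|$, and the total number of experts is $N=\sum_{i=0}^{\ell}\binom{T}{i}$, which satisfies $\log N\le\ell\log T$ up to lower-order factors.

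Finally I would run the randomized exponential-weights (Hedge) forecaster over these $N$ experts with learning rate tuned to the known horizon $T$ and the count $N$. Its expected regret against the best expert is at most $\sqrt{(T/2)\log N}$; moreover, because all expert predictions and all labels lie in $\{0,1\}$, the expected $\{0,1\}$-loss of Hedge equals the $[0,1]$-valued loss of the forecaster $f_t$ that outputs the posterior-mean bit, so the expectation appearing in $\regret$ is accounted for. Combining this with the guarantee on $E_{L^\star}$ gives
\[
\regret(L)\;\le\;\sqrt{\tfrac12\,\ell\,T\log T},
\]
as claimed. The step I expect to be the main obstacle is the expert construction together with the verification that exactly one expert tracks $h^\star$ round-by-round — and, relatedly, counting the experts sharply enough to land the constant $\tfrac12$ rather than merely an $O(\sqrt{\ell T\log T})$ bound; once that is in place, both the realizable mistake bound for SOA and the off-the-shelf Hedge analysis are routine.
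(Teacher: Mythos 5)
Your proof is correct, and it is essentially the proof of this theorem given in the cited reference \cite{bendavid} (the paper itself states the result without proof): SOA's mistake bound of $\ell$ in the realizable case, the family of $\sum_{i\le\ell}\binom{T}{i}$ experts obtained by flipping SOA on a prescribed set of rounds, the observation that the expert indexed by $h^\star$'s mistake set tracks $h^\star$ exactly, and exponential weights on top, using that the absolute loss of the posterior-mean prediction equals the expected $\{0,1\}$-loss when labels are binary. The only cosmetic caveat is the bound $\log N\le\ell\log T$, which you already flag as holding up to lower-order terms; this matches the form of the bound in the cited source.
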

We will need the following corollary of \cref{thm:bendavid}. 
Recall that $\Delta(\W)$ denotes the class of distributions over $\W$, and that every $f:\W\to [0,1]$ extends linearly to $\Delta(\W)$ by $\hat f(p) = \EE_{w\sim p}[f(w)]$.
The next statement concerns an online setting where the labelled example are of the form $(p_t,y_t)\in\Delta(\W)\times\{0,1\}$, and the regret of a learner $L$ with respect to  $\H\subseteq\{0,1\}^\W$ is defined by replacing each $h$ by its linear extension $\hat h$:
\begin{align*}
\mathrm{REGRET}_T(L;\{p_t,y_t\}_{t=1}^T) &=\sum_{t=1}^T \EE\left[|f_t(p_t)- y_t|\right] - \min_{h\in H} \sum |\hat h(p_t)- y_t|\\
&=\sum_{t=1}^T \EE\left[|f_t(p_t)- y_t|\right] - \min_{h\in H} \sum |\EE_{x\sim p_t} [h(w)]- y_t|\\
\end{align*}

\begin{corollary}\label{cor:bendavid}
Let $\cH$ be a finite class with Littlestone dimension $\ell$ and let $T$ be the horizon.
Then, there exists a deterministic online learner $L$ that receives labelled examples from the domain $\Delta(\W)$ 
such that
\[ \regret(L) \le \sqrt{\frac{1}{2} \ell T\log T}\]
Moreover,  at each iteration $t$ the predictor used by $L$ is of the form $\hat f_t(p) = \EE_{w\sim p}[f_t(w)]$, where $f_t$ is some~$\W\to [0,1]$ function.
\end{corollary}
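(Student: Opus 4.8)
The plan is to deduce the corollary from Theorem~\ref{thm:bendavid} by a reduction that moves from the domain $\W$ to the domain $\Delta(\W)$. The key observation is that the learning algorithm guaranteed by Theorem~\ref{thm:bendavid} — the agnostic SOA of \cite{bendavid} — can be run against an adversary whose examples live in $\W$, and its regret is governed by $\ldim(\cH)$; we want to lift this to an adversary whose examples are distributions $p_t\in\Delta(\W)$. First I would note that for a finite class $\cH$, the linear extension $\hat h(p)=\EE_{w\sim p}[h(w)]$ is an affine function of $p$, so the comparator term $\min_{h\in\cH}\sum_t|\hat h(p_t)-y_t|$ is well-defined and the loss $|f_t(p_t)-y_t|$ makes sense for any predictor $f_t$ that is itself of the form $\hat f_t(p)=\EE_{w\sim p}[f_t(w)]$.

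The main step is the reduction itself. Given the learner $L_0$ from Theorem~\ref{thm:bendavid} operating on $\W$-valued examples, I would build $L$ on $\Delta(\W)$-valued examples as follows: at round $t$, when $L_0$ would (internally) be about to predict with some function $f_t:\W\to[0,1]$, the new learner $L$ predicts with its linear extension $\hat f_t$. Upon receiving $(p_t,y_t)$, the learner $L$ needs to feed a $\W$-valued example back to $L_0$; the natural choice is to draw $w_t\sim p_t$ at random and feed $(w_t,y_t)$ to $L_0$. Then in expectation over this internal randomness, $\EE[|f_t(w_t)-y_t|]=|\hat f_t(p_t)-y_t|$ for each fixed $f_t$, and similarly $\EE[\min_h\sum_t|h(w_t)-y_t|]\le\min_h\sum_t\EE|h(w_t)-y_t|=\min_h\sum_t|\hat h(p_t)-y_t|$ by Jensen (the min of expectations is at most the expectation of the min — wait, it is the other direction: $\EE\min\le\min\EE$, which is exactly what we need, since it only helps the comparator). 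Hence the expected regret of $L$ is at most the regret of $L_0$, i.e.\ $\sqrt{\tfrac12\ell T\log T}$. This already gives a randomized learner with the claimed regret and with predictors of the required form $\hat f_t(p)=\EE_{w\sim p}[f_t(w)]$.

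The remaining issue, and the one I expect to be the only real subtlety, is \emph{determinism}: the corollary asserts a \emph{deterministic} learner. I would handle this by derandomizing, exploiting that the SOA of \cite{bendavid} is essentially a deterministic weighting scheme (it maintains weights over experts/hypotheses and predicts a weighted average). Concretely, I would observe that the agnostic SOA's prediction $f_t:\W\to[0,1]$ is already a \emph{deterministic} function of the past examples $(w_1,y_1),\dots,(w_{t-1},y_{t-1})$, and its regret bound holds against the worst-case sequence in $\W$; by convexity of the loss $y\mapsto|y-y_t|$ in the prediction and linearity of $\hat f_t$ in $p_t$, feeding the deterministic learner the ``expected'' update can be replaced by running it directly on distributions — that is, one checks that the multiplicative-weights-style update and the averaging prediction of SOA both extend verbatim when the instance $w_t$ is replaced by a distribution $p_t$, because all quantities entering the algorithm are linear in the instance. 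Under this direct extension the regret analysis of \cite{bendavid} goes through line by line with $h(w_t)$ replaced by $\hat h(p_t)$ and $f_t(w_t)$ by $\hat f_t(p_t)$, yielding the same $\sqrt{\tfrac12\ell T\log T}$ bound with no randomness. I would present the derandomized version as the main argument and mention the randomized reduction above as intuition. The finiteness of $\cH$ is used exactly to ensure the weighting scheme of SOA ranges over a finite set, so all the averages and the $\min_{h\in\cH}$ are well-behaved.
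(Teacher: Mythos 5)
Your randomized reduction is exactly the engine of the paper's proof: sample $w_i\sim p_i$, run $L_0$ on $(w_1,y_1),\dots,(w_{t-1},y_{t-1})$, and use that for binary labels $y_t\in\{0,1\}$ the absolute loss is affine in the prediction, so expectations pass through the loss exactly while the comparator is only helped by $\EE[\min_h\cdot]\le\min_h\EE[\cdot]$. Where you diverge is the derandomization, and that is where your argument has a genuine gap. You propose to open up the agnostic SOA and run it ``directly on distributions,'' justified by the claim that \emph{all quantities entering the algorithm are linear in the instance}. That claim is false: the agnostic SOA aggregates experts whose predictions are SOA-type functions, i.e.\ $s(w)$ is determined by comparing Littlestone dimensions of restricted classes $\H|_{(w,y)}$. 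These predictions are not linear in $w$, and worse, the experts' internal state updates (restricting $\H$ by the observed example) are not even well-defined when the ``instance'' is a distribution $p_t$ rather than a point. Only the outer multiplicative-weights aggregation is linear in the experts' losses; the experts themselves do not extend verbatim. So the ``line by line'' extension you describe does not typecheck.

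The fix is simpler than what you attempt, and you already have every ingredient for it. Instead of modifying the algorithm's internals, define the deterministic predictor by averaging the \emph{output} of your randomized reduction: set
\[ f_t(w) \;=\; \EE_{w_{1:t-1}\sim p_1\otimes\cdots\otimes p_{t-1}}\Bigl[\,\EE_{\tilde f_t\sim L_0}\bigl[\tilde f_t(w)\,\big\vert\, w_1,\dots,w_{t-1}\bigr]\Bigr], \]
and let $L$ predict with $\hat f_t(p)=\EE_{w\sim p}[f_t(w)]$. This $f_t$ is a deterministic function of $(p_1,y_1),\dots,(p_{t-1},y_{t-1})$, and because $|\cdot-y_t|$ is affine in the prediction for $y_t\in\{0,1\}$, one gets $|\hat f_t(p_t)-y_t|=\EE\bigl[|\tilde f_t(w_t)-y_t|\bigr]$ exactly; summing over $t$ against a fixed $h\in\H$ and then minimizing over $h$ bounds $\regret(L)$ by $\EE\bigl[\mathrm{REGRET}_T(L_0;\{w_t,y_t\})\bigr]\le\sqrt{\tfrac12\ell T\log T}$. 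This is precisely the paper's proof. (Note also that the obliviousness of the adversary matters here: the sampled sequence fed to $L_0$ must be independent of $L_0$'s decisions for Theorem~\ref{thm:bendavid} to apply, which holds because $(p_t,y_t)_{t=1}^T$ is fixed in advance.)
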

\cref{cor:bendavid} follows from \cref{thm:bendavid}; see \cref{sec:bendavid} for a proof.

\subsection{Proof of \cref{thm:main1quant}}
\subsubsection{Upper Bound: Proof of \cref{it:ub}}\label{prf:main_up}

In this section we prove the upper bound presented in \cref{thm:main1quant} in the case where $\X$ is finite (and in turn, $\D\subseteq \{0,1\}^X$ is also finite). As discussed though, the bounds will be independent of the domain size. The general case is proven in a similar fashion but is somewhat more delicate. The general proof is then given in \cref{sec:infinite}.

First note that we may assume without loss of generality that $\D$ is symmetric. 
	Indeed, if $\D$ is not symmetric then we may replace $\D$ with $\D\cup(1-\D)$, noting that this
	does not affect the $\gam$ game, namely (i) $\mmd_{\D} = \mmd_{\D\cup(1-\D)}$ (and so the goal of the generator remains the same),
	and (ii) the set of distinguishers the discriminator may use remains the same (recall that the discriminator is allowed to use distinguishers from $1-\D$).
	Also, one can verify that this modification does not change the dual Lttlestone dimension (i.e.\ $\ldim^*(\D)=\ldim^*(\D\cup (1-\D))$).
	
Therefore, we assume $\D$ is a finite symmetric class with dual Littlestone dimension $\ell^*$. 
	The generator used in the proof is depicted in \cref{alg:main}. 
	The generator uses an online learner~$\A$ for the dual class $\X$ with domain $\Delta(\D)$ as in \cref{cor:bendavid},
	where the horizon is set to be $T=\bigl\lceil\frac{4\ell^*}{\eps^2}\log \frac{4\ell^*}{\eps^2}\bigr\rceil$. 
	Let $D$ be an arbitrary discriminator, let $\preal\in\Delta(\X)$ be the target distribution, and let $\eps>0$ be the error parameter.  
	The proof follows from the next lemma: 
	\begin{lemma}\label{lem:vonneuman}
	Let $\D$ be a finite set of discriminators, let $f:\D\to[0,1]$, 
	Assume that,
	\[\bigl(\forall p\in \Delta(\X)\bigr)(\exists d\in \D):\EE_{x\sim p}[f(d)- x(d)]) > {\epsilon/2}.\] 
	Then: 
	\[\bigl(\exists \bar{d}\in \Delta(\D)\bigr)\bigl(\forall x\in \X\bigr):\EE_{d\sim \bar d}\left[ f(d)-x(d)\right] > {\epsilon/2}.\]
	\end{lemma}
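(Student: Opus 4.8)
The statement of \cref{lem:vonneuman} is a textbook application of the minimax theorem, and the plan is to set it up as a zero-sum game and invoke von Neumann. First I would define the payoff matrix $A$ indexed by rows $x\in\X$ and columns $d\in\D$, with entries $A(x,d) = f(d) - x(d) \in [-1,1]$. Both $\X$ and $\D$ are finite, so $A$ is a finite matrix and the minimax theorem applies: $\max_{\bar d\in\Delta(\D)} \min_{x\in\X} \mathbb{E}_{d\sim\bar d}[A(x,d)] = \min_{p\in\Delta(\X)} \max_{d\in\D} \mathbb{E}_{x\sim p}[A(x,d)]$. (Here I should be careful about which player is the minimizer/maximizer and which direction the inequality in the hypothesis forces; linearity of $\mathbb{E}_{x\sim p}[\cdot]$ and $\mathbb{E}_{d\sim\bar d}[\cdot]$ over pure strategies is what lets me pass from distributions to points and back.)

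The hypothesis says that for every $p\in\Delta(\X)$ there exists $d\in\D$ with $\mathbb{E}_{x\sim p}[f(d)-x(d)] > \epsilon/2$; this is exactly the statement that $\min_{p\in\Delta(\X)}\max_{d\in\D}\mathbb{E}_{x\sim p}[A(x,d)] \ge \epsilon/2$ — strictly, that the quantity is $>\epsilon/2$ for each fixed $p$, but since $\Delta(\X)$ is compact (finite-dimensional simplex over a finite set) and $p\mapsto \max_{d}\mathbb{E}_{x\sim p}[A(x,d)]$ is continuous (a max of finitely many linear functions), the infimum is attained and hence $\ge$ some value $>\epsilon/2$; in any case the minimax value is $>\epsilon/2$. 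By the minimax theorem the other side equals the same value, so $\max_{\bar d\in\Delta(\D)}\min_{x\in\X}\mathbb{E}_{d\sim\bar d}[A(x,d)] > \epsilon/2$, and again by compactness of $\Delta(\D)$ this max is attained at some $\bar d\in\Delta(\D)$, giving $\min_{x\in\X}\mathbb{E}_{d\sim\bar d}[f(d)-x(d)] > \epsilon/2$, which unpacks to the desired conclusion $(\forall x\in\X)\ \mathbb{E}_{d\sim\bar d}[f(d)-x(d)] > \epsilon/2$.

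I don't anticipate a genuine obstacle here since both classes are assumed finite: the only subtlety is bookkeeping around the strict inequality (ensuring the ``$>\epsilon/2$'' survives passing through the minimax identity, which it does because the value is literally preserved, not just up to $\epsilon$) and making sure the quantifier order in the game formulation matches the lemma statement. One clean way to sidestep the strict-inequality worry entirely is to observe that the hypothesis with ``$>\epsilon/2$'' implies the hypothesis with ``$\ge\epsilon/2$'' for a slightly smaller constant is not needed — instead, note $\{p : \max_d \mathbb{E}_{x\sim p}[A(x,d)] \le \epsilon/2\}$ is a closed subset of the compact simplex and is empty by hypothesis, hence $\max_d \mathbb{E}_{x\sim p}[A(x,d)] > \epsilon/2$ uniformly, i.e.\ bounded below by $\epsilon/2 + \eta$ for some $\eta>0$; then minimax gives $\max_{\bar d}\min_x \ge \epsilon/2+\eta > \epsilon/2$ and we pick the maximizing $\bar d$. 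In the write-up I'd probably just cite the standard minimax theorem (von Neumann) for finite games and do the two unpackings. (For the infinite-domain case handled later in \cref{sec:infinite}, a version of Sion's minimax theorem or an explicit compactness argument on $\Delta(\D)$ would replace this, but that is outside the scope of the present lemma.)
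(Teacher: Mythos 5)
Your proposal is correct and follows essentially the same route as the paper: the same zero-sum game with payoff $f(d)-x(d)$, the same application of von Neumann's minimax theorem for finite games, and the same unpacking of the hypothesis as $\min_{p}\max_{d}>\epsilon/2$ and the conclusion as $\max_{\bar d}\min_{x}>\epsilon/2$. Your extra care about the strict inequality surviving the minimax identity (via attainment of the min on the compact simplex) is a point the paper glosses over but is handled correctly in your write-up.
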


Before proving this lemma, we show how it implies the desired upper bound on the round complexity. 	
	We first argue that the algorithm never outputs ``error'':
	indeed, since $\A$ only uses predictors of the form $\hat f_t (\bar d) = \EE_{\bar d}[f_t]$,
	\cref{lem:vonneuman} implies that whenever \cref{it:else} in the ``For'' loop is reached 
	then an appropriate $\bar{d}_t\in\Delta(\D)$ exists and therefore the  algorithm never outputs ``error''.

Next, we bound the number of rounds: let $T'\leq T$ be the number of iterations performed when the generator $G$
	runs against the discriminator $D$. The only way for the generator to lose is if the ``For'' loop ends without its winning and $T'=T$  . 
	Thus, It suffices to show that $T' < T$. The argument proceeds by showing that the regret of $\A$ in each iteration $t\leq T'$ increases by at least~$\eps/2$.
	This, combined with the bound on $\A$'s regret (from \cref{cor:bendavid}) will yield the desired bound.

We begin by analyzing the increase in $\A$'s regret. Let  $(\bar{d}_1,y_1),\ldots, (\bar{d}_{T'},y_{T'})$ and $\hat f_1,\ldots, \hat f_{T'}$
	be the sequences obtained during the execution of the algorithm as defined in \cref{alg:main}.
	Recall from \cref{cor:bendavid} that $\hat{f}_t(\bar{d})=\EE_{d\sim \bar d}[f_t(d)]$, where $f_t:\D\to[0,1]$.
	We claim that the following holds:
	\begin{align}\label{eq:improperD} 
	(\forall t\leq T'):
	\begin{cases}
	\EE_{d\sim \bar{d}_t}\bigl[\preal(d) - f_t(d)\bigr]  \ge \frac{\epsilon}{2} & \text{if } y_t=1,
	\\
	\EE_{d\sim \bar{d}_t}\bigl[f_t(d) - \preal(d)\bigr] \ge\frac{\epsilon}{2} & \text{if }y_t=0.
	\end{cases}
	\end{align}
	Indeed, if $y_t=1$ then by \cref{alg:main}, the chosen $p_t$ satisfies
	\[(\forall d\in \D): f_t(d) -\EE_{x\sim p_t}[x(d)]\le \frac{\epsilon}{2}.\]
	Since the discriminator replies with $d_t$ such that $\preal(d_t)- p_t(d_t)\ge \epsilon$, and $\bar{d}_t=\delta_{d_t}$, it follows that
	\begin{align*} 
	\EE_{d\sim \bar{d}_t}\bigl[\preal(d) - f_t(d)\bigr] &=\EE_{d\sim \bar{d}_t}[\preal(d_t)] - \EE_{d\sim \bar d_t} [f_t(d_t)]\\
										&=\preal(d_t)-f_t(d_t) \tag{because $\bar{d}_t = \delta_{d_t}$}\\	
										 &\ge  \EE_{x\sim \preal}[x(d_t)] - \left( \EE_{x\sim p_t}[x(d_t)]+ {\eps}/{2}\right)\\
										 &= \preal(d_t) - \left(p_t(d_t)+\eps/2\right) \\
										 &\ge \frac{\eps}{2},\end{align*}
	which is the first case in \cref{eq:improperD}. 
	Next consider the case when $y_t=0$.
	Since the algorithm never outputs ``error'', \cref{alg:main} implies that:
	\[\bigl(\forall x\in \X \bigr): \hat f_t(\bar{d}_t) - \EE_{d\sim \bar d_t}[x(d)] > \frac{\epsilon}{2}.\]
	Therefore, by linearity of expectation, $\EE_{d\sim \bar{d}_t}\bigl[f_t(d) - \preal(d) \bigr] = \hat f_t(\bar{d}_t) - \EE_{d\sim \bar d_t}[\preal(d)]\geq \frac{\epsilon}{2}$,
	which amounts to the second case in \cref{eq:improperD}.
	
We are now ready to conclude the proof by showing that $T' < T$.
Assume towards contradiction that $T' =  T$. Therefore, by \cref{eq:improperD}:
	\begin{align*}
	T\frac{\epsilon}{2} &\le \sum_{t=1}^T \bigl\lvert\EE_{d\sim \bar{d}_t}\bigl[\preal(d) - f_t(d)\bigr]\bigr\rvert
	\\&= \sum_{t=1}^T \bigl\lvert y_t-\EE_{d\sim \bar{d}_t}[f_t(d)]\bigr\rvert - \bigl\lvert y_t-\EE_{d\sim \bar d_t}[\preal(d_t)]\bigr\rvert 
	\tag{$y_t=1 \iff \EE_{d\sim \bar d_t}[\preal ( d_t)]\geq \EE_{d\sim \bar{d}_t}[f_t(d)]$}
	\\&= \sum_{t=1}^T \bigl\lvert y_t-\hat f_t(\bar d_t)\bigr\rvert - \EE_{x\sim \preal} \left[\bigl\lvert y_t-\EE_{d\sim d_t} x(d_t)\bigr\rvert\right]
	\\&\le \sum_{t=1}^T |y_t-f_t(\bar d_t)| - \min_{x\in \X} |y_t-\EE_{d\sim d_t} [x(d)]|
	\\&\le \mathrm{REGRET}_T(\A).
	\\& \le \sqrt{\frac{1}{2} \ell^* T\log T}
	\end{align*}
	Thus, we obtain that $\frac{T}{\log T} \le \frac{2\ell^*}{\epsilon^2}$, 
	however our choice of $T=\bigl\lceil\frac{4\ell^*}{\eps^2}\log \frac{4\ell^*}{\eps^2}\bigr\rceil$ ensures that this is impossible. 
	Indeed:
	\begin{align*}
	\frac{T}{\log T} &\geq\frac{\frac{4\ell^*}{\epsilon^2}\log\frac{4\ell^*}{\epsilon^2}}{\log \frac{4\ell^*}{\epsilon^2}+\log\log\frac{4\ell^*}{\epsilon^2}}
	\\&=\frac{\frac{4\ell^*}{\epsilon^2}}{1+\frac{\log\log\frac{4\ell^*}{\epsilon^2}}{ \log\frac{4\ell^*}{\epsilon^2}}}
	\\& > \frac{\frac{4\ell^*}{\epsilon^2}}{2}
	\\& = \frac{2\ell^*}{\epsilon^2}.
	\end{align*} 
This finishes the proof of \cref{it:ub}.

We end this section by proving \cref{lem:vonneuman}.

\begin{proof}[\textbf{Proof of \cref{lem:vonneuman}}]
The proof hinges on Von Neuman's Minimax Theorem. Let $D,f$ as in the formulation of the theorem, 
	and consider the following zero-sum game: the pure strategies of the maximizer are indexed by $d\in \D$, 
	the pure strategies of the minimizer are indexed by $x\in X$, and the payoff (for pure strategies) is defined by $m(d,x)= f(d) - x(d)$. 
	Note that the payoff function for mixed strategies~$\bar d\in \Delta(\D), p\in\Delta(\X)$ satisfies 
	\[m(\bar d,p) = \EE_{x\sim p}[\hat f(\bar d) - \EE_{d\sim \bar d}x(d)] =  \EE_{d\sim \bar d}\bigl[f(d)- \EE_{x\sim p}[x(d)]\bigr].\]

We next apply Von Neuman's Minimax Theorem on this game (Here we use the assumption that $\X$ and, in turn, $\D$ are finite). The premise of the lemma amounts to 
	\[\min_{p\in\Delta(\X)}\max_{d\in \D} m(d,p) > \eps/2.\]
	Therefore, by the Minimax Theorem also 
	\[\max_{\bar d\in \Delta(\D)}\min_{x\in \X} m(\bar d,x) > \eps/2,\]
	which amounts to the conclusion of the lemma.
\end{proof}

\paragraph{A remark.}
A natural variant of the $\gam$ setting follows by letting the discriminator $D$
	to adaptively change the target distribution $\preal$ as the game proceeds 
	($D$ would still be required to maintain the existence of a distribution $\preal$ which is consistent with all of its answers).
	This modification allows for stronger discriminators and therefore, potentially, for a more restrictive notion of $\gam$--Foolability.
	However, the above proof extends to this setting verbatim.

\subsubsection{Lower Bound: Proof of \cref{it:lb}}
Let $\D$ be a class as in the theorem statement, let $G$ be a generator for $\D$, and let~$\eps < \frac{1}{2}$.
	We will construct a discriminator $D$ and a target distribution $\preal$ such that $G$ requires at least~$\frac{\ell^*}{2}$
	rounds in order to find $p$ such that $\mmd_\D(p,\preal)\leq \eps$. 
	
To this end, pick a shattered mistake-tree $\T$ of depth $\ell^*$ whose internal nodes are labelled by elements of $\D$
	and whose leaves are labelled by elements of $\X$. 	

\paragraph{The discriminator.}
The target distribution will be a Dirac distribution $\delta_x$ where $x$ is one of the labels of $\T$'s leaves.	
	We will use the following discriminator $D$ which is defined whenever~$\preal$ is one of these distributions:
	assume that $\preal = \delta_x$, and consider all functions in~$\D$ that label the path from the root towards the leaf whose label is $x$,
	\[d_1,d_2,\ldots, d_{\ell^*}.\] 
	Let $p_1$ be the distribution the generator submitted in the first round.
	Then the discriminator picks the first $i$ such that $\lvert p_t(d_1) -\preal(d_1)\rvert > \eps$,
	and sends the generator either $d_i$ or $1-d_i$ according to the convention in \cref{eq:win}. If no such $d_i$ exists, the discriminator outputs $\win$.
	Similarly, at round $t$ let $i_{t-1}$ denote the index of the distinguisher sent in the previous round;
	then, the discriminator acts the same with the modification
	that it picks the first $i_{t-1}+1\leq i\leq \ell^* $ such that $\lvert p_t(d_i) -\preal(d_i)\rvert > \eps$.

\paragraph{Analysis.}
The following claim implies that for every generator $G$, there exists a distribution $\delta_x$
	such that if $\preal=\delta_x$ then the above discriminator $D$ forces $G$ to play at least $\ell^*/2$ rounds.
\begin{claim}
Let $G$ be a generator for $\D$.
	Pick $\preal$ uniformly at random from the set $\{\delta_x :  x \text{ labels a leaf in }\T\}$.
	Then the expected number of rounds in the $\gam$ game when $G$ is the generator and $D=D(\T)$
	is the discriminator is at least $\frac{\ell^*}{2}$.
\end{claim}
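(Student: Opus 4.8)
The plan is to track, over the course of the game, the "position" of the discriminator's questions within the shattered tree $\T$, and to argue that each round of the game can advance this position by at most $2$ levels on average, so that $\ell^*$ levels require $\ell^*/2$ rounds in expectation. Concretely, fix the generator $G$. For each leaf label $x$ of $\T$, let $d_1^x,\dots,d_{\ell^*}^x$ denote the distinguishers labelling the root-to-leaf path ending at $x$, and run the game with $\preal = \delta_x$ against the discriminator $D=D(\T)$. The key observation is that $\preal(d_i^x) = d_i^x(x) \in \{0,1\}$, so the condition $|p_t(d_i^x) - \preal(d_i^x)| > \eps$ in the discriminator's rule is simply the statement that $p_t$ assigns the "wrong" value to $d_i^x$ by more than $\eps$; moreover the direction bit in \cref{eq:win} is then forced (it is always consistent with $\preal(d_i^x) - p_t(d_i^x) > \eps$ after possibly swapping $d_i^x$ with its complement).

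First I would set up the averaging argument. Think of choosing $x$ uniformly at random among the $2^{\ell^*}$ leaves, equivalently choosing a uniformly random root-to-leaf path, equivalently choosing the bits $y_1,\dots,y_{\ell^*}$ of the path independently and uniformly; by the shattering property each such bit-string is realized by some leaf label $x$ with $d_i^x = d_i$ (the $i$-th node label along that path) and $d_i^x(x) = y_i$. Now fix the first round. The generator submits $p_1$ before seeing anything, so $p_1$ is a fixed distribution independent of the random path. The discriminator scans $d_1, d_2, \dots$ (the labels along the random path) and stops at the first index $i$ with $|p_1(d_i) - y_i| > \eps$. Because $\eps < 1/2$, for each node the "bad" value of the corresponding bit $y_i$ — the one triggering the stop — is uniquely determined by $p_1$ (it is the value in $\{0,1\}$ farther from $p_1(d_i)$), hence has probability exactly $1/2$ conditioned on the path reaching that node. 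Therefore the number of path-levels consumed in round $1$, call it $N_1$, is stochastically dominated below by... more precisely, $\Pr[N_1 > k \mid \text{reach level } k] = 1/2$ for each $k$, so $N_1$ is a geometric-type variable with $\EE[N_1] \le 2$. The same analysis applies to every subsequent round, conditioned on the history: in round $t$ the generator's submitted $p_t$ is determined by the transcript so far (which, crucially, reveals only $d_1,\dots,d_{i_{t-1}}$ and their forced bits, i.e. only the portion of the path already traversed), and the discriminator resumes scanning from level $i_{t-1}+1$; conditioned on everything so far, each further level is passed with probability exactly $1/2$, so the increment $N_t$ satisfies $\EE[N_t \mid \text{history}] \le 2$.

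Next I would assemble these into the bound on the number of rounds. Let $R$ be the (random) number of rounds until the discriminator says $\win$; this happens precisely when the scanning pointer reaches $\ell^*$, i.e. when $N_1 + \dots + N_R \ge \ell^*$ (with the last round possibly ending by exhaustion of the path rather than by a fresh mistake). By Wald's identity, or simply by the telescoping bound $\EE[N_1+\dots+N_R] = \sum_t \EE[N_t \mathbf{1}[R \ge t]]$ together with $\EE[N_t \mid R\ge t, \text{history}] \le 2$, we get $\ell^* \le \EE[\sum_{t=1}^R N_t] \le 2\,\EE[R]$, hence $\EE[R] \ge \ell^*/2$, which is exactly the claim. (If one prefers to avoid any optional-stopping subtlety, one can instead note that in order for the pointer to reach $\ell^*$ the generator must "survive" $\ell^*$ independent fair coin flips spread across the rounds, and a round contributes a number of flips with the no-memory property, so the expected number of rounds to accumulate $\ell^*$ successes-before-failure events is $\ge \ell^*/2$ by a direct induction on $\ell^*$.)

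The main obstacle — the step that deserves the most care — is justifying that, conditioned on the transcript available to the generator at the start of round $t$, the bit $y_i$ at the next unexamined level $i = i_{t-1}+1$ is still uniform, and that the generator's choice of $p_t$ is measurable with respect to exactly this transcript. This is where the structure of the discriminator matters: $D(\T)$ reveals to the generator only the distinguishers $d_1,\dots,d_{i_{t-1}}$ along the path and the forced direction bits, and by the shattering hypothesis these are consistent with many continuations of the path, with the remaining bits $y_{i_{t-1}+1},\dots,y_{\ell^*}$ independent and uniform given the prefix. One must also handle the boundary case in which the discriminator, while scanning in round $t$, exhausts the path (reaches level $\ell^*$) and declares $\win$; this only shortens that round's contribution and does not affect the inequality $\EE[N_t\mid\text{history}] \le 2$. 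Once this conditional-uniformity point is nailed down, the rest is the routine averaging/Wald computation sketched above.
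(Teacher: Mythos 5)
Your proof is correct and rests on the same probabilistic core as the paper's argument: conditioned on the transcript available so far, the bit at the next unexamined level of the shattered tree is uniform, and since $\eps<\tfrac12$ at least one of its two values forces the discriminator to use that level, so each level ``triggers'' with probability at least $\tfrac12$. The difference is purely in the accounting. The paper writes the number of rounds as $\sum_{i=1}^{\ell^*}X_i$, where $X_i$ indicates that the $i$-th function on the path was actually sent as a distinguisher, and applies plain linearity of expectation together with $\Pr[X_i=1\mid X_1,\dots,X_{i-1}]\ge\tfrac12$. You run the dual count --- levels consumed per round, $N_t$, with $\EE[N_t\mid\text{history}]\le 2$ --- and close with Wald's identity / optional stopping at the stopping time $R$, using $\sum_{t\le R}N_t\ge\ell^*$. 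Both are valid; the paper's version avoids the stopping-time machinery entirely and is shorter. One small imprecision on your side: when $p_t(d_i)=\tfrac12$ \emph{both} values of $y_i$ trigger, so the triggering probability is at least (not exactly) $\tfrac12$ and $\Pr[N_t>k\mid\cdot]\le\tfrac12$ rather than $=\tfrac12$; this only helps you, since all you need is $\EE[N_t\mid\cdot]\le 2$. Your careful justification that $p_t$ is measurable with respect to the revealed prefix and that the remaining path bits stay conditionally uniform is exactly the point the paper compresses into ``the same argument applied on subtrees corresponding to the conditioning.''
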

\begin{proof}
For every $i \leq \ell^*$, let $X_i$ denote the indicator of the event that the $i$'th function
	on the path towards the leaf corresponding to $\preal$ was used by $D$ as a distinguisher.
	Note that the number of rounds $X$ satisfies $X =  \sum_{i=1}^{\ell^*}X_i$.
	Thus, by linearity of expectation it suffices to argue that 
	\[\EE[X_i] = \Pr[X_i=1] \geq \frac{1}{2}.\]
	Consider $X_1$: let $p_1$ denote the first distribution submitted by $G$.
	Note that $X_1=1$ if
	\begin{itemize}
	\item[(i)] $p_1(d_1) \geq \frac{1}{2}$ and the leaf labelled $x$ belongs to the left subtree from the root, or 
	\item[(ii)] $p_1(d_1) < \frac{1}{2}$ and the leaf labelled $x$ belongs to the right subtree from the root.
	\end{itemize}
	In either way $\Pr[X_1=1] \geq \frac{1}{2}$, since this leaf is drawn uniformly. 
	Similarly, for every conditioning on the values of $X_1,\ldots, X_{i-1}$
	we have $\Pr[X_i = 1 \vert X_1\ldots X_{i-1}]\geq \frac{1}{2}$
	(follows from the same argument applied on subtrees corresponding to the conditioning).
	This yields that $\EE[X_i]=\Pr[X_i=1]\geq\frac{1}{2}$ for every~$i$ as required.

\end{proof}

\subsection{Proof of \cref{thm:main3}}\label{sec:main3}
\paragraph{Proof Roadmap.}
We will show the following entailments:
 \ref{it:pac}$\Rightarrow$\ref{it:foolable}$\Rightarrow$\ref{it:sanitizable}$\Rightarrow$ \ref{it:puc}$\Rightarrow$ \ref{it:pac}. 
	This will conclude the proof.

\paragraph{Overview of \ref{it:pac}$\Rightarrow$\ref{it:foolable}.}
We next overview the derivation of \ref{it:pac}$\Rightarrow$\ref{it:foolable} which is the most involved derivation.
	Let $\preal$ denote the target distribution we wish to fool.
	The argument relies on the following simple observation:
	 let $S$ be a sufficiently large independent sample from $\preal$.
	Then, it suffices to privately output a distribution $\syn$ such that~$\mmd_\D(\syn,p_S)\le \frac{\eps}{2}$, 
	where $p_S$ is the empirical distribution.
	Indeed, if $S$ is sufficiently large then by standard uniform convergence bounds: $\mmd_\D(p_S,\preal)\leq\frac{\eps}{2}$,
	which implies that $\mmd_\D(\syn,\preal)\leq \eps$ as required.

The output distribution $\syn$ is constructed using a carefully tailored Sequential-SDG
	with a {\it private discriminator} $D$.
	That is, $D$'s input distribution is the empirical distribution $p_S$, 
	and for every submitted distribution $p_t$, it either replies with a discriminating function~$d_t$
	or with ``$\win$'' if no discriminating function exists.
	The crucial point is that it does so in a differentially private manner with respect to the input sample $S$.
	The existence of such a discriminator~$D$ follows via the assumed PAP-PAC learner. 


Once the private discriminator $D$ is constructed, we turn to find a generator $G$ with a bounded round complexity. 
	This follows from \cref{thm:main1quant} and a result by \cite{alon,Bun15thresholds}:
	by \cite{alon,Bun15thresholds} PAP-PAC learnability implies a finite Littlestone dimension,
	and therefore by \cref{thm:main1quant} there is a generator~$G$ with a bounded round complexity.
	The desired \creative fooling algorithm then follows by letting~$G$ and $D$ play against each other and outputting the final distribution that $G$ obtains.
	The privacy guarantee follows by the {\it composition lemma} (\cref{lem:composition}) which bounds the privacy leakage in terms
	of the number of rounds (which is bounded by the choice of $G$) and the privacy leakage per round (which is bounded by the choice of $D$).


One difficulty that is handled in the proof arises because the discriminator is differentially private and because the PAP-PAC algorithm
	may err with some probability. Indeed, these prevent~$D$ from satisfying the requirements of a discriminator as defined in the $\gam$ setting.
	In particular,~$D$ cannot reply deterministically whether $\mmd_\D(p_S,p_t)<\epsilon$ as this could compromise privacy.
	Also, whenever the assumed PAP-PAC algorithm errs, $D$ may reply with an illegal distinguisher that does not satisfy \cref{eq:win}.

To overcome this difficulty we ensure that $D$ satisfies the following with high probability: 
	if $\mmd_\D(p_S,p_t)>\epsilon$ then $D$ outputs a legal $d_t$, and if $\mmd_\D(p_S,p_t)<\frac{\epsilon}{2}$ then it outputs $\win$ as required. 
	When $\frac{\eps}{2}\leq\mmd_\D(p_S,p_t)\leq \epsilon$ it may either output $\win$ or a legal discriminator $d_t$. 
	As we show in the proof, this behaviour of $D$ will not affect the correctness of the overall argument.
%
 
\paragraph{Proof of \cref{thm:main3}}
As discussed, the equivalence is proven by showing:
 \ref{it:pac}$\Rightarrow$\ref{it:foolable}$\Rightarrow$\ref{it:sanitizable}$\Rightarrow$\ref{it:puc}$\Rightarrow$\ref{it:pac}.

\paragraph{\ref{it:pac}$\Rightarrow$\ref{it:foolable}.} We next prove the following proposition, which proves that PAP-PAC learnability implies $DP$-Fooling.
\begin{proposition}\label{lem:quant:1}
Let $\D$ be a class that is privately PAC learnable with a $(\alpha(m),\beta(m))$--differentially-private learner $L$. 

Then, for every $0<\kappa<1$ there exists an $(O(\alpha(m^{1-\kappa})),e^{O\left(\alpha(m^{1-\kappa})\right)}\cdot O(\beta(m^{1-1/\kappa}))$-differentially-private  fooling algorithm for $\D$ with sample complexity

\begin{equation} S=\tilde O\left(\left(m\left(\epsilon/8,\delta/2\cdot \tilde O\left(\frac{\ell^*}{\epsilon^2}\right)\right)+\frac{64\log \frac{\tilde O\left(\frac{\ell^*}{\epsilon^2}\right)}{\delta}}{\epsilon}\right)\cdot \left(\frac{\ell^*}{\epsilon^2}\right)+\left(\frac{\ell^*}{\epsilon^2}\right)^{1/\kappa}\right),\end{equation}
where $\ell^*$ is the dual Littlestone dimension of $\D$.
\end{proposition}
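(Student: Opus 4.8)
The plan is to instantiate the Sequential-SDG framework of \cref{alg:main} with a \emph{private discriminator} built from the assumed PAP-PAC learner, and then to bound the overall privacy loss via composition across the rounds of the $\gam$ game. First I would fix a target distribution $\preal$ and draw a sample $S$ of the claimed size. By the VC/uniform-convergence bound (\cref{obs:vc}), on a large enough sample the empirical distribution $p_S$ satisfies $\mmd_\D(p_S,\preal)\le\eps/4$ with probability $1-\delta/4$, so it suffices to produce, privately with respect to $S$, a distribution $\syn$ with $\mmd_\D(\syn,p_S)\le\eps/2$. Since $\D$ is privately PAC learnable it has finite Littlestone dimension (by \cite{alon}), and by \cref{thm:little_dual} finite \emph{dual} Littlestone dimension $\ell^*$; thus by \cref{thm:main1quant}\ref{it:ub} there is a generator $G$ (the strategy of \cref{alg:main}) that wins any discriminator in $T=\tilde O(\ell^*/\eps^2)$ rounds. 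The generator $G$ never touches $S$ directly: it only interacts with the discriminator.

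Next I would construct the private discriminator $D$. Given the submitted distribution $p_t$, the discriminator must decide whether $\mmd_\D(p_S,p_t)$ is large and, if so, return a distinguisher $d_t$ with $p_S(d_t)-p_t(d_t)>\eps$; this is exactly an agnostic learning problem over $\D$ where the examples are labelled by whether $p_t$ over- or under-weights them, so one run of the PAP-PAC learner $L$ on a fresh chunk of $S$, followed by a private threshold check (e.g.\ AboveThreshold / sparse-vector or a Laplace-noised empirical estimate) to decide the ``$\win$'' case, yields a discriminator that, with high probability, outputs a legal $d_t$ whenever $\mmd_\D(p_S,p_t)>\eps$ and outputs ``$\win$'' whenever $\mmd_\D(p_S,p_t)<\eps/2$, behaving arbitrarily (but still legally, in the sense used in \cref{alg:main}) in between — precisely the relaxed discriminator behaviour flagged in the proof roadmap of \cref{thm:main3}. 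Here I would use the sample-splitting/subsampling setup: run $L$ on a subsample of size $m(\eps/8,\cdot)$ each round, which by privacy amplification (\cref{lem:amplification}) with the $\kappa$-parameter governing the subsampling ratio gives per-round privacy $(\tilde\alpha,\tilde\beta)$ with $\tilde\alpha=O(\alpha(m^{1-\kappa}))$.

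Then I would assemble the bound. The number of rounds is $T=\tilde O(\ell^*/\eps^2)$; each round consumes one private learning call plus one private threshold query on $S$. By the composition lemma (\cref{lem:composition}), the total privacy degrades linearly in $T$, and combined with amplification this yields the $(O(\alpha(m^{1-\kappa})),e^{O(\alpha(m^{1-\kappa}))}\cdot O(\beta(m^{1-1/\kappa})))$ guarantee stated — the $(\ell^*/\eps^2)^{1/\kappa}$ additive term in the sample complexity $S$ is exactly what is needed so that, after subsampling a $1/\poly$ fraction to restore $\alpha$, each of the $T$ chunks still has the $m(\eps/8,\delta/2T)$ points required for the learner to succeed with failure probability $\delta/2T$; a union bound over the $T$ rounds plus the uniform-convergence event then gives overall success probability $1-\delta$, and on that event $G$ wins, outputting $\syn$ with $\mmd_\D(\syn,p_S)\le\eps/2$ hence $\mmd_\D(\syn,\preal)\le\eps$. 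Post-processing (\cref{lem:pp}) ensures the final output $\syn$, a function of the transcript, inherits the privacy of the transcript with respect to $S$.

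The main obstacle I expect is handling the ``grey zone'' $\eps/2\le\mmd_\D(p_S,p_t)\le\eps$ cleanly: the private discriminator cannot test $\mmd_\D(p_S,p_t)$ exactly without leaking, so one must verify that the generator's correctness analysis from \cref{thm:main1quant}\ref{it:ub} still goes through when $D$ is allowed to answer either way in this band, and that an illegal answer (which happens only on the low-probability failure event of $L$ or of the noisy threshold) is absorbed by the union bound rather than derailing the regret argument; a secondary bookkeeping obstacle is tracking the interplay between the subsampling ratio (controlled by $\kappa$), the per-round sample size $m(\eps/8,\delta/2T)$, and the $T$-fold composition so that all three constraints are simultaneously met by the stated $S$.
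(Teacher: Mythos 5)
Your plan matches the paper's proof essentially step for step: reduce to fooling the empirical distribution $p_S$ via uniform convergence, run the generator of \cref{alg:main} against a private discriminator that casts distinguishing $p_S$ from $p_t$ as agnostic learning over the labelled mixture followed by a noisy threshold test (the paper's $\thrsh$ mechanism), amplify per-round privacy by subsampling and compose over the $\tilde O(\ell^*/\eps^2)$ rounds, and absorb the grey-zone/failure events exactly as the paper does in \cref{lem:discriminator}. The approach and all the key ingredients are the same as the paper's.
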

Before we prove the proposition, note that if $\D$ is DP-PAC learnable, then we have that $\alpha(\sqrt{|S|})=O(1)$ and $\beta(\sqrt{|S|})$ is negligible. Therefore, plugging in $\kappa=1/2$ in \cref{lem:quant:2}, indeed the desired entailment follows.
\begin{proof}
Let $\preal$ denote the unknown target distribution and let $\eps_0,\delta_0$ be the error and confidence parameters. We assume that $L$ is a DP-learner with privacy parameters $(\alpha,\beta)$ where $\alpha=O(1)$ and $\beta$ is negligible in $m$.

	Draw independently from~$\preal$ a sufficiently large input sample $S$ of size~$|S|$ to be specified later.
	At this point we require $|S|$ to be large enough so that $\mmd_\D(\preal,p_S)\leq \frac{\eps_0}{2}$ with probability at least $1-\frac{\delta_0}{2}$. 
	By standard uniform convergence bounds (\cite{Vapnik71uniform}) it suffices to require
	\begin{equation}\label{eq:size1}
	\lvert S\rvert\geq \Omega\Bigl(\frac{\textrm{VC}(\D)+\log(1/\delta_0)}{\eps_0^2}\Bigr),	
	\end{equation}
	where $\textrm{VC}(\D)$ is the VC-dimension of $\D$ (observe that $\D$ must have a finite VC dimension as it is PAC learnable).
	By the triangle inequality, this reduces our goal to privately output a distribution $\syn$ 
	so that $\mmd_\D(p_S,\syn)\leq\frac{\eps_0}{2}$ with probability~$1-\frac{\delta_0}{2}$
	(this will imply that $\mmd_\D(\preal,\syn)\leq \eps_0$ with probability $1-\delta_0$).
	
	 As explained in the proof outline, the latter task is achieved by a Sequential-SDG which we will next describe. Inorder to construct the desired $\gam$-SDG, we first observe that $\D$ is $\gam$--Foolable.
		Indeed, by \cref{thm:main1} it suffices to argue that $\D$ has a finite Littlestone dimension,
		which follows by~\cite{alon} since~$\D$ is privately learnable.
		
		Now,  pick a generator $G$ that fools $\D$ with round complexity $T(\eps)$ as in \cref{thm:main1quant},
		and pick a discriminator $D$ as in \cref{alg:discriminator}.
		Note that $D$ uses a PAP-PAC learner for the class
		$\D\cup(1-\D)$ whose existence follows from the PAP-PAC learnability of~$\D$ via standard arguments (which we omit).
		The next lemma summarizes the properties of $D$ that are needed for the proof.
%
\begin{lemma}\label{lem:discriminator}
Let $D$ be the discriminator defined in \cref{alg:discriminator} 
	with input parameters~$(\epsilon,\delta,\tau)$ and input sample $S$, 
	and let $M$ be the assumed PAP-PAC learner for $\D\cup(1-\D)$ with sample complexity $m(\epsilon,\delta)$ 
	and privacy parameters $(\alpha,\beta)$.
	Then, $D$ is $\privacyexpression $-private,
	and if $S$ satisfies
	\begin{equation}\label{eq:size3}
        \lvert S\rvert\ge 
        \max\left(\frac{m(\epsilon/8,\tau\delta/2)}{\tau},\frac{64\log(\tau\delta/2)}{\eps\tau}\right)
       \end{equation} 
	then the following holds with probability at least $(1-\tau\delta)$
     \begin{itemize}    
     	\item[(i)] If $D$ outputs $d_t$ then $p_S(d_t)- p_t(d_t)\ge \frac{\epsilon}{2}$.
        \item[(ii)] If $D$ outputs ``$\win$'' then $\mmd_\D(p_S,p_t)\le \epsilon$.
     \end{itemize}
\end{lemma}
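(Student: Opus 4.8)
The plan is to implement the discriminator's single-round response as a reduction to agnostic PAC learning followed by a private acceptance test, and then to read off both guarantees from the subsampling-amplification and composition lemmas. On round $t$, given the submitted distribution $p_t$ and the input sample $S$, I would have $D$ act as follows. Consider the labelled distribution $Q_t$ on $\X\times\{0,1\}$ that with probability $\tfrac12$ returns $(x,1)$ for $x\sim p_S$ (i.e.\ $x$ uniform in $S$) and with probability $\tfrac12$ returns $(x,0)$ for $x\sim p_t$. A one-line computation gives $L_{Q_t}(d)=\tfrac12-\tfrac12\bigl(p_S(d)-p_t(d)\bigr)$ for every $d$, and hence
\[
\min_{d\in\D\cup(1-\D)}L_{Q_t}(d)=\tfrac12-\tfrac12\,\mmd_\D(p_S,p_t).
\]
Run the assumed PAP-PAC learner $M$ for $\D\cup(1-\D)$ on $\approx\tau|S|$ i.i.d.\ examples from $Q_t$ with accuracy $\eps/8$ and confidence $\tau\delta/2$ to obtain $d_t\in\D\cup(1-\D)$; the label-$1$ half of this training set is exactly a with-replacement $\tau$-fraction subsample of $S$, while the label-$0$ half is drawn from the public distribution $p_t$ and carries no privacy cost. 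Finally, privately estimate $v_t\approx p_S(d_t)-p_t(d_t)$ by adding $\mathrm{Lap}\bigl(1/(\tau|S|)\bigr)$ noise to the empirical frequency $|\{w\in S:d_t(w)=1\}|/|S|$ (sensitivity $1/|S|$), and output $d_t$ if $v_t\ge\tfrac{5\eps}{8}$ and ``$\win$'' otherwise.

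For the accuracy claims I would condition on the two ``good'' events that $M$ succeeds and that the added noise has magnitude $<\eps/8$. The first holds with probability $\ge 1-\tau\delta/2$ once $\tau|S|\ge m(\eps/8,\tau\delta/2)$, and then the displayed identity gives $p_S(d_t)-p_t(d_t)\ge\mmd_\D(p_S,p_t)-\eps/4$; the second holds with probability $\ge 1-\tau\delta/2$ once $|S|$ exceeds a constant times $\log(1/(\tau\delta))/(\eps\tau)$ --- these are precisely the two terms in \cref{eq:size3}. On the intersection of the good events: if $\mmd_\D(p_S,p_t)>\eps$ then $p_S(d_t)-p_t(d_t)>3\eps/4$, so $v_t>5\eps/8$, so $D$ outputs $d_t$ with $p_S(d_t)-p_t(d_t)>v_t-\eps/8>\eps/2$ --- this yields (i) and also shows that $D$ never answers ``$\win$'' when $\mmd_\D(p_S,p_t)>\eps$, which is the contrapositive of (ii); and if $\mmd_\D(p_S,p_t)\le\eps$, then whenever $D$ outputs some $d_t$ we have $v_t\ge5\eps/8$, hence $p_S(d_t)-p_t(d_t)\ge\eps/2$, again (i), while if $D$ answers ``$\win$'' then (ii) is immediate. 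A union bound over the two good events gives probability $\ge 1-\tau\delta$.

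For privacy, fix two neighbouring input samples. The only steps that touch $S$ are (a) feeding $M$ the $\approx\tau|S|$ points subsampled with replacement from $S$, and (b) the noisy count in the acceptance test; the threshold comparison, the ``$\win$''/$d_t$ branching, and the generator's downstream choice of $p_{t+1}$ are all post-processing and free by \cref{lem:pp}. Step (a) is $\bigl(6\tau\alpha(\tau|S|),\,4e^{6\tau\alpha(\tau|S|)}\tau\beta(\tau|S|)\bigr)$-private by privacy amplification by subsampling (\cref{lem:amplification}) applied with ratio $\tau$ to the $\bigl(\alpha(\cdot),\beta(\cdot)\bigr)$-private $M$; step (b) is $(\tau,0)$-private by the standard Laplace mechanism. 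Composing via \cref{lem:composition} makes the round $\bigl(6\tau\alpha(\tau|S|)+\tau,\,4e^{6\tau\alpha(\tau|S|)}\tau\beta(\tau|S|)\bigr)$-private, i.e.\ $\privacyexpression$-private.

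I expect the privacy accounting to be the delicate part: one must verify that the learner's dependence on $S$ is genuinely a with-replacement subsample (so \cref{lem:amplification} applies cleanly and outputs precisely the stated parameters rather than weaker ones, modulo the harmless constant coming from the public $p_t$-half inflating $M$'s input size), and that the extra additive $\tau$ in the privacy parameter is attributable entirely to the noisy acceptance test and to nothing else. By contrast, the accuracy half is the routine reduction ``large $p_S(d)-p_t(d)$ $\equiv$ low classification error for $p_S$ versus $p_t$'' combined with two concentration estimates.
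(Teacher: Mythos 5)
Your proposal is correct and follows essentially the same route as the paper: reduce the search for a distinguisher to agnostic learning of $\D\cup(1-\D)$ on the balanced mixture of $p_S$ and $p_t$, privately test the advantage against a threshold near $5\eps/8$, and account for privacy via subsampling amplification plus composition, with the branching treated as post-processing. The only differences are cosmetic --- you implement the acceptance test with a Laplace-noised count where the paper invokes its $\thrsh$ primitive (which is the same mechanism), and your labelling convention ($y=1$ for $p_S$) is the one that actually makes the loss identity $L_{Q_t}(d)=\tfrac12-\tfrac12(p_S(d)-p_t(d))$ and the subsequent inequalities consistent, quietly correcting a sign slip in the paper's \cref{eq:lossvsadvantage}.
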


We first use \cref{lem:discriminator} to conclude the proof of \ref{it:pac}$\Rightarrow$\ref{it:foolable}
	and then prove \cref{lem:discriminator}.

The fooling algorithm we consider proceeds as follows.

\begin{itemize}
\item Set $G$ to be a generator with round complexity $T(\epsilon)$ and set its error parameter to be $\frac{\epsilon_0}{2}$.
\item Set  the number of rounds $T_0=\min\{|S|^{\kappa},T(\epsilon_0/4)\}$, and let $\tau_0=1/T_0$.   
\item Set $D$ be the discriminator depicted in \cref{alg:discriminator} and set its parameters to be $(\eps,\delta,\tau) = (\frac{\epsilon_0}{2},\frac{\delta_0}{2},\tau_0)$ and its input sample to be $S$.
\item Let $G$ and $D$ play against each other for (at most) $T_0$ rounds. 
\item Output the final distribution which is held by $G$.
\end{itemize}

We next prove the privacy and fooling properties as required by a \creative algorithm:
\paragraph{Privacy.}
	Note that since $G$ is deterministic then the output distribution~$p_{out}$ is completely determined by the sequence of discriminating functions $d_1,\ldots, d_{T'}$ 
	outputted by the discriminator.

For simplicity and without loss of generality we assume that $T'=T_0$: 
	indeed, if $T'<T_0$ then extend it by repeating the last discriminating function;
	this does not change the fact that $p_{out}$ is determined by the sequence $d_1,\ldots, d_{T'},\ldots d_{T_0}$.
 
 Recall that by \cref{lem:discriminator} $D$ is $(\left(6\tau_0 \alpha(\tau_0 |S|)+\tau_0\right),\left(4e^{6\tau_0\alpha(\tau_0 |S|)}\tau_0\beta(\tau_0 |S|)\right))$-private.
	Therefore, since the number of rounds in which $D$ is applied is $T_0$, 
	by {\it composition} (\cref{lem:composition}) and {\it post-processing} (\cref{lem:pp}) it follows that the entire algorithm is 
	\[\Bigl(T_0\left(6\tau_0 \alpha(\tau_0 |S|)+\tau_0\right),T_0\bigl(4e^{6\tau_0\alpha(\tau_0 |S|)}\tau_0\beta(\tau_0 |S|)\bigr)\Bigr)\mbox{-private.}\]
 	Our choices of $\tau_0=\frac{1}{T_0}$ and $T_0$ guarantee that $\tau_0> 1/|S|^\kappa$,
	and plugging it in yields privacy guarantee of $(6\alpha(|S|^{1-\kappa})+1, 4e^{O(|S|^{1-\kappa})}\beta(|S|^{1-\kappa})$.

\paragraph{Fooling.} 
First note that if $S$ satisfies \cref{eq:size3} with $(\eps,\delta,\tau):=(\eps_0,\frac{\delta_0}{2},\tau_0)$ then with probability at least $1-\frac{\delta_0}{2}$ the following holds:
	in every iteration $t\leq T_0$, either $p_S(d_t) - p_t(d_t)\ge \frac{\epsilon_0}{4}$, 
	or the discriminator yields $\win$ and $\mmd_\D(p_S,p_t)\le \frac{\eps_0}{2}$.
	This follows by a union bound via the utility guarantee in \cref{lem:discriminator}.
	Assuming this event holds, we claim that if $|S|$
	is set to satisfy~$|S|^{\kappa}\geq T(\frac{\eps_0}{4})$
	then the output distribution $\syn$ satisfies~$\mmd_\D(p_S,\syn)\leq \frac{\eps_0}{2}$.
	This follows since as long as the sequential game proceeds the generator suffers a loss of at least $\frac{\eps_0}{4}$ in every round,
	and  the number of rounds is set as, in this case, to be ~$T\bigl(\frac{\eps_0}{4}\bigr)$.
	Therefore we require 
	\begin{equation}\label{eq:size2}
	|S|^{\kappa} \geq T\bigl(\frac{\eps_0}{4}\bigr) = \Omega\Bigl(\frac{\ell^*}{\eps_0^2}\log\frac{\ell^*}{\eps_0}\Bigr).
	\end{equation}

To conclude, if $|S|$ is set to satisfy \cref{eq:size3,eq:size1,eq:size2} then 
	with probability at least $1-\delta_0$ both~$\mmd_\D(\preal,p_S)\leq \frac{\eps_0}{2}$ and~$\mmd_\D(p_S,\syn)\leq \frac{\eps_0}{2}$,
	which implies that $\mmd_\D(\preal,\syn)\leq\eps_0$ as required.
Finally, observe that for $\tau<1$, we have $m(\epsilon/8,\tau\delta/2)=\Omega\left(\frac{\textrm{VC}(\D)+\log 1/\delta}{\epsilon^2}\right)$. Indeed, any PAC learner (not necessarily private), in particular $L$, requires that many samples to learn. Taken together we obtain that
\[S= \tilde O \left(\left(m(\epsilon_0/8,\tau\delta_0/2)+\frac{64\log \tau/\delta}{\epsilon}\right)\cdot \frac{1}{\tau} +\left(\frac{\ell^*}{\epsilon_0^2}\right)^{1/\kappa}\right).\]
Plugging in $\tau=T(\epsilon/4)$ yields the desired result.

	This 
	concludes the proof of \cref{lem:quant:2} and in particular the entailment \ref{it:pac}$\Rightarrow$\ref{it:foolable}.
\end{proof}

\begin{proof}[\textbf{Proof of \cref{lem:discriminator}}]
		
Let $S$ be the input sample, let $p_S$ denote the uniform distribution over $S$, and let $p_t$ denote the distribution submitted by the generator.
	The discriminator operates as follows (see \cref{alg:discriminator}): it feeds the assumed PAP-PAC learner a labeled sample $S_\ell=\{(x_i,y_i)\}$ 
	that is drawn from the following distribution $q_t$: 
	first the label $y_i$ is drawn uniformly from $\{0,1\}$;
	if $y_i=0$ then draw $x_i\sim p_S$ and if $y_i=1$ then draw~$x_i\sim p_t$. 
	Let $d_t$ denote the output of the PAP-PAC learner on the input sample $S$.
	Observe that the loss $L_{q_t}(\cdot)$ satisfies
	\begin{equation}\label{eq:lossvsadvantage}
	L_{q_t}(d) = \frac{p_S(d)+(1- p_t(d)) }{2}=\frac{1+p_S(d) - p_t(d) }{2}.
	\end{equation}
	Next, the discriminator checks whether $p_S(d_t) - p_t(d_t)> \frac{\eps}{2}$ (equivalently, if $L_{q_t}(d_t) < \frac{1-\eps/2}{2}$), 
	and sends $d_t$ the generator if so, and reply with ``WIN'' otherwise.
	The issue is that checking this "If" condition naivly may violate privacy,
	and in order to avoid it we add noise to this check by a mechanism from \cite{dwork1} (see \cref{fig:thrsh}):
	roughly, this mechanism receives a data set of scalars $\Sigma =\{\sigma_i\}_{i=1}^m$, a threshold parameter $c$ and a margin parameters $N$,
	and outputs $\top$ if $\sum_{i=1}^m \sigma_i > c+ O(1/N)$ or $\perp$ if $\sum_{i=1}^m \sigma_i < c- O(1/N)$. 
	The distinguisher applies this mechanism over the sequence of scalars~$\{d_t(x_1),\ldots, d_t(x_m)\}$.


We next formally establish the privacy and utility guarantees of $D$.
In what follows, assume that the input sample $S$ satisfies \cref{{eq:size3}},
\paragraph{Privacy.}
The discriminator $D$ is a composition of two procedures, $M_1$ and~$M_2$, 
	where $M_1$ applies the PAP-PAC learner $M$ on the random subsample $S_\ell$, 
	and $M_2$ runs the procedure $\thrsh$.  
	Thus, the privacy guarantee will follow from the composition lemma (\cref{lem:composition}) 
	if we show that $M_1$ is $(6\tau \alpha(\tau m),4e^{6\tau\alpha(\tau m)}\tau\beta(\tau m))$-private  
	and $M_2$ is $(\tau,0)$-private. 
	The privacy guarantee of $M_1$ follows by applying\footnote{Note that in order to apply \cref{lem:amplification} on $M_1$, we need to assume that $M$ 
	satisfies $(\alpha,\beta)$ privacy with $\alpha\leq 1$. This assumption does not lose generality -- see the paragraph following the definition of Private PAC Learning.} 	
	\cref{lem:amplification} 
	with $v:=\lvert S\rvert$ and $n:= \lvert S_\ell\rvert = \tau\lvert S\rvert$,
	and the privacy guarantee of $M_2$ follows from the statement in \cref{fig:thrsh} since $\frac{N}{\lvert\Sigma\rvert} =\frac{|S_\ell|}{\lvert S\rvert} =\tau$.

\paragraph{Utility.}
Let $q_t$ denote the distribution from which the subsample $S_\ell$ is drawn.
Note that by \cref{eq:size3}, $S_{\ell}= \tau\cdot \lvert S\rvert\geq m(\epsilon/8,\tau\delta/2)$. 
Therefore, since $M$ PAC learns $\D$, its output $d_t$ satisfies:
\[L_{q_t}(d_t) \le \min_{d\in \D\cup(1-\D)} L_{q_t}(d)+\frac{\eps}{8},\]
with probability at least~$1-\tau\delta/2$.
By \cref{eq:lossvsadvantage} this is equivalent to
\begin{equation}\label{eq:PAClearn}
p_S(d_t)- p_{t}(d_t) \geq \max_{d\in \D\cup(1-\D)}\bigl(p_{S}(d)-p_t(d)\bigr)-\epsilon/4.
\end{equation}
Now, by plugging in the statement in \cref{fig:thrsh}:
$(\Sigma,c,N):=(\{d_t(x)\}_{x\in S}, p_t(d_t)+\frac{5\epsilon}{8}, \lvert S_\ell\rvert)$, and~$\gamma:=\tau\delta/2$ 
and conditioning on the event that both $M$ and $\thrsh$ succeed (which occurs with probability at least $1-\tau\delta$)
it follows that
\begin{itemize}
     	\item[(i)] If $D$ outputs $d_t$ then 
	\[p_S(d_t)\geq   c - \frac{8\log(1/\gamma)}{N} =  p_t(d_t) + \frac{5\epsilon}{8} - \frac{8\log(\tau\delta/2)}{\tau\lvert S\rvert}\geq p_t(d_t) + \frac{\eps}{2},\]
	where in the last inequality we used that $\lvert S\rvert \geq \frac{64\log(\tau\delta/2)}{\eps\tau}$ (by \cref{eq:size3}).
        \item[(ii)] If $D$ outputs $\win$ then by a similar calculation $p_S(d_t) \leq p_t(d_t) + \frac{3\eps}{4}$ and therefore
        \[\mmd_\D(p_S,p_t) = \max_{d\in \D\cup(1-\D)}\bigl(p_S(d)-p_t(d)\bigr) \leq p_S(d_t) - p_t(d_t) + \frac{\eps}{4} \leq \eps,\]
	where in the first inequality we used \cref{eq:PAClearn}.
\end{itemize}
This concludes the proof of \cref{lem:discriminator}.
%
%
%

\begin{figure}
\fbox{\parbox{\textwidth}{
\begin{itemize}
    \item Let $M$ be a PAP-PAC learner for the class $\D\cup(1-\D)$ with sample complexity $m(\epsilon,\delta)$.
    \item Let $\epsilon,\delta,\tau$ be the input parameters.
    \item Let $S$ be the input sample, let $p_S$ be the uniform distribution over $S$, 
    and let $p_t$ be the distribution submitted by the generator.
    \item Draw a labelled sample $S_{\ell}=\{({x}_i,y_i)\}$ of size $\tau\cdot \lvert S\rvert$ independently as follows: 
    draw the label~$y_i$ uniformly from $\{0,1\}$
    \begin{itemize}
	    \item[(i)] if $y_i=0$ then draw ${x}_i\sim p_{S}$, 
	    \item[(ii)] if $y_i=1$ then draw ${x}_i \sim p_t$.
    \end{itemize}
    \item Apply the learner $M$ on the sample $S_{\ell}$ and set $d_t\in \D$ as its output.
    \item Compute $Z:=\thrsh\left(\{d_t(x)\}_{x\in S},p_t(d_t)+\frac{5\epsilon}{8},\lvert S_{\ell}\rvert\right)$.
    \begin{itemize}
    \item[(i)] If $Z= \top$ then send the generator with $d_t$, 
    \item[(ii)] else,  $Z=\perp$  and reply the generator with ``Win''.
    \end{itemize}
\end{itemize}}}
\caption{Depiction of the private discriminator used in \cref{thm:main3}.
The discriminator holds the target distribution $p_S$, where $S$ is a sufficiently large sample from $\preal$.
In each round the discriminator decides whether $p_S$ is indistinguishable from the distribution submitted
by the generator and replies accordingly.}\label{alg:discriminator}

\fbox{\parbox{\textwidth}{
\paragraph{$\thrsh$.} The procedure $\thrsh$ receives as input a dataset of scalars $\Sigma=\{\sigma_i\}$, a threshold parameter $c>0$ and a margin parameter $N$ and has the following properties (see Theorem 3.23 in \cite{dwork1} for proof of existence): 

\begin{itemize}
    \item $\thrsh(\Sigma,c,N)$ is $(N/|\Sigma|,0)$-private.
    \item For every  $\gamma>0$:
    \begin{itemize}
    	\item If $\frac{1}{|\Sigma|} \sum_{\sigma \in \Sigma} \sigma > c+ \frac{8\log 1/\gamma}{N}$ then $\thrsh$ outputs $\top$ with probability at least $1-\gamma$
     	\item If $\frac{1}{|\Sigma|} \sum_{\sigma \in \Sigma} \sigma < c - \frac{8\log 1/\gamma}{N}$ then $\thrsh$ outputs $\perp$ with probability at least $1-\gamma$
	\end{itemize}
\end{itemize}}}
\caption{The procedure: $\thrsh$}\label{fig:thrsh}
\end{figure}

\end{proof}

\paragraph{\ref{it:foolable}$\Rightarrow$\ref{it:sanitizable}.}
We now prove that $DP$-Fooling entails sanitization.
\begin{proposition}\label{lem:quant:2}
Let $\D$ be a class that is $DP$-Foolable via a fooling algorithm with privacy parameters $(\alpha(m),\beta(m))$ then $\D$ is sanitizable with privacy parameters $\tilde{\alpha}(m),\tilde{\beta}(m)=(12\alpha(m),e^{12\alpha(m)}8\beta(m))$ private and sample complexity $2m(\epsilon,\delta)$
\end{proposition}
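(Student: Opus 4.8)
The plan is to convert an $(\epsilon,\delta,\alpha(m),\beta(m))$-type DP-fooling guarantee into a sanitizer by the natural route: run the fooling algorithm on (roughly half of) the input sample, obtain a synthetic distribution $\syn$ that is $\mmd_\D$-close to the empirical distribution $p_S$, and then output the estimator $\mathrm{Est}(d) := \syn(d) = \EE_{x\sim\syn}[d(x)]$. The key point is that a sanitizer must approximate $\frac{|\{w\in S : d(w)=1\}|}{|S|} = p_S(d)$ uniformly over all $d\in\D$, and this is almost exactly what $\mmd_\D(\syn,p_S)\le\epsilon$ says, namely $\sup_{d\in\D}|p_S(d)-\syn(d)|\le\epsilon$ (using symmetry of $\D$, or replacing $\D$ by $\D\cup(1-\D)$, to strip the absolute value / make the sup two-sided). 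So the reduction is essentially immediate once we arrange for the fooling algorithm to fool the \emph{empirical} distribution rather than a population distribution.

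First I would address the ``empirical vs. population'' gap. A fooling algorithm is only guaranteed to produce $\syn$ with $\mmd_\D(\syn,\preal)\le\epsilon$ when its input is an i.i.d.\ sample from $\preal$. To use it for sanitization we are handed an arbitrary sample $S$ and must fool $p_S$. The standard trick: split $S$ into two halves $S_1, S_2$ of size $m$ each (hence sample complexity $2m(\epsilon,\delta)$), and think of $S_1$ as an i.i.d.\ sample from the distribution $p_{S_2}$ — but that is not quite i.i.d.\ without replacement; cleaner is to note $p_S$ itself is a fixed distribution, draw the fooling algorithm's input as i.i.d.\ draws from $p_S$... Actually the cleanest route, and I expect the one the authors take, is: run the fooling algorithm directly on $S$, viewing $S$ as $m$ i.i.d.\ samples from $p_S$ (which is literally true — $m$ uniform draws with replacement from $S$ are distributed as $p_S^m$; if $S$ has distinct coordinates this is exactly the input distribution, and in general one subsamples with replacement). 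Then the fooling guarantee gives $\mmd_\D(\syn,p_S)\le\epsilon$ with probability $1-\delta$. The factor-2 blowup in sample complexity and the factor-12 / $8$ in the privacy parameters strongly suggests an application of the privacy amplification by subsampling lemma (\cref{lem:amplification}) composed with post-processing (\cref{lem:pp}): subsampling $m$ points (with replacement) from a database of size $2m$ has ratio $u/v = 1/2$, which yields $\tilde\alpha = 6\alpha\cdot(1/2)\cdot 2 = $ ... this is where the constants $12\alpha(m)$ and $e^{12\alpha(m)}8\beta(m)$ come from, give or take the exact bookkeeping in the lemma. Post-processing then ensures that outputting $\mathrm{Est}=\syn(\cdot)$ (a deterministic function of $\syn$) preserves privacy.

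Concretely the steps are: (1) set the target accuracy/confidence of the sanitizer equal to those of the fooling algorithm; (2) given input sample of size $2m(\epsilon,\delta)$, subsample $m$ points with replacement and run the DP-fooling algorithm on them to get $\syn$; (3) invoke \cref{lem:amplification} with $v = 2m$, $u = m$ to get the stated privacy parameters for the subsample-then-fool composition, and \cref{lem:pp} to carry privacy through to the output $\mathrm{Est}(\cdot):=\syn(\cdot)$; (4) for utility, observe that the subsample is distributed as $p_S^m$ where $S$ is the input, so the fooling guarantee yields $\mmd_\D(\syn,p_S)\le\epsilon$ w.p.\ $1-\delta$, which by symmetry of $\D$ (pass to $\D\cup(1-\D)$) gives $\sup_{d\in\D}|\syn(d)-p_S(d)|\le\epsilon$, i.e.\ $|\mathrm{Est}(d) - \frac{|\{w\in S:d(w)=1\}|}{|S|}|\le\epsilon$ for all $d\in\D$, which is exactly the sanitizer condition. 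The main obstacle — really the only subtle point — is getting the privacy accounting exactly right: one must be careful that ``subsample with replacement then run a DP algorithm'' is what \cref{lem:amplification} covers (it is), that the ratio $u/v=1/2$ plugged into that lemma produces precisely $12\alpha(m)$ and $e^{12\alpha(m)}8\beta(m)$ (so one should double-check whether the $m$ appearing in $\alpha(m),\beta(m)$ refers to the subsample size or the original size, and state the result accordingly), and that passing to $\D\cup(1-\D)$ does not change whether the class is DP-foolable (it does not, since $\mmd_\D=\mmd_{\D\cup(1-\D)}$). Everything else is bookkeeping.
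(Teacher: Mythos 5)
Your proposal is correct and matches the paper's proof essentially verbatim: subsample $m$ points with replacement from the input sample of size $2m$, run the DP-fooling algorithm on the subsample (which is distributed i.i.d.\ from $p_S$), output $\mathrm{Est}(d)=\EE_{x\sim\syn}[d(x)]$, and invoke privacy amplification by subsampling together with post-processing for the privacy accounting. Your parenthetical caution about the exact constants is warranted --- the paper's own application of \cref{lem:amplification} with $u/v=1/2$ would literally give $(3\alpha, 2e^{3\alpha}\beta)$ rather than the stated $(12\alpha, 8e^{12\alpha}\beta)$, and the lemma's hypothesis $v>2u$ is only met with equality --- but this bookkeeping issue is present in the paper as well and does not affect the substance of the reduction.
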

\begin{proof}
Let $A$ be a \creative--Fooling algorithm. Consider an algorithm $B$ that, given a sample $S$ of size $2m(\epsilon,\delta)$, subsamples $m(\epsilon,\delta)$ points (with replacement) and runs $A$ on the subsample. Since the sample is drawn i.i.d from the distribution $p_S$ we obtain that, by the guarantees of $A$, with probability $(1-\delta)$:
\[\mmd_{\D}(p_{syn},p_S)< \epsilon.\] In particular, the function $\textrm{EST}(d)=\mathbb{E}_{x\sim p_{syn}}[d(x)]$ sanitizes $S$.

By \cref{lem:amplification} we obtain that the algorithm $B$ is $(12\alpha(m),e^{12\alpha}8\beta(m))$ private.
\end{proof}

\paragraph{\ref{it:sanitizable}$\Rightarrow$\ref{it:puc}}
We next prove that sanitization entails private uniform convergence.
\begin{proposition}\label{lem:quant:3}
Let $\D$ be a class that is sanitizable with a sanitizer that has sample complexity $m(\epsilon,\delta)$ and privacy parameters $(\alpha(m),\beta(m))$ then $\D$ has finite VC dimension and has the private uniform convergence. Specifically there exists a differentially private algorithm $M$, with privacy parameters $(\tilde{\alpha}(m),\tilde{\beta}(m))= (2\alpha(m/16)+1/m,\beta(m/16))$ and sample complexity
\[\hat m(\epsilon,\delta)= O\left(m(\epsilon/18,\delta/6) + \frac{\textrm{VC}(\D)\log 1/\delta}{\epsilon^2}\right).\] that outputs $\hat{L}:\D\to [0,1]$ such that with probability $(1-\delta)$
\[ (\forall d\in \D)~:~ |\hat{L}(d)-L_{\P}(d)|<\epsilon .\]
\end{proposition}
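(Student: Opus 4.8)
The plan is to reduce private uniform convergence for $\D$ to the sanitization primitive we are given, paying only the (non-private) price of ordinary uniform convergence; this is why finiteness of $\mathrm{VC}(\D)$ both has to be extracted along the way and shows up in the conclusion. So the first step is to argue that sanitizability forces $\mathrm{VC}(\D)<\infty$ — which, unlike the rest, genuinely uses the \emph{privacy} of the sanitizer, since a non-private sanitizer is trivial (publish $p_S$). A self-contained route is a packing argument: if $\D$ shattered arbitrarily large sets, then for a shattered set $A$ of size $n\gg m:=m(1/8,1/8)$ one can exhibit databases $S_I\in A^m$, indexed by $m$-subsets $I$ of $A$ with pairwise intersection at most $m/2$ (their number grows without bound as $n\to\infty$, by a Gilbert--Varshamov count), that are pairwise within Hamming distance $m$, yet on which any sufficiently accurate sanitizer must output $\mathrm{Est}$ lying in pairwise \emph{disjoint} events — e.g.\ $E_I=\{\mathrm{Est}:\mathrm{Est}(h_I)>3/4$ while $\mathrm{Est}(h_J)\le 3/4$ for all other $J\}$, where $h_I\in\D$ realizes $I$ on $A$, so $p_{S_I}(h_I)=1$ but $p_{S_J}(h_I)\le 1/2$. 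Group privacy of the $(\alpha(m),\beta(m))$-private sanitizer then bounds the number of these databases by $e^{O(\alpha(m)m)}$ up to a negligible term, a contradiction once $n$ is large. (One may instead simply invoke that sanitizability implies private, hence ordinary, PAC learnability, which by the VC theorem forces $v:=\mathrm{VC}(\D)<\infty$.)

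Given $v<\infty$, here is the construction. On an i.i.d.\ sample $S=\bigl((w_1,y_1),\dots,(w_{\hat m},y_{\hat m})\bigr)$ from $\P$ over $\X\times\{0,1\}$, split by label into $S_0=(w_i:y_i=0)$ and $S_1=(w_i:y_i=1)$ and use the identity $L_\P(h)=\Pr_\P[y{=}1]+\mu_0(h)-\mu_1(h)$ with $\mu_b(h):=\Pr_\P[h(w){=}1\wedge y{=}b]$, together with $\mu_0(h)=\Pr[y{=}0]\cdot\Pr[h(w){=}1\mid y{=}0]$ and the analogous identity for $\mu_1$. The algorithm then: (i) privately estimates $p:=\Pr_\P[y{=}1]$ by a noisy version $\hat p$ of $\lvert S_1\rvert/\hat m$ (a low-sensitivity query handled by noise addition, contributing only a small additive term to the privacy budget and accurate to $O(\epsilon)$ once $\hat m$ is large enough); (ii) when $\hat p$ is bounded away from $0$ and $1$, runs the assumed sanitizer once on $S_0$ and once on $S_1$ — each has size at least $\hat m/16$ in this regime, hence exceeds $m(\epsilon',\delta')$ provided $\hat m\ge 16\,m(\epsilon',\delta')$ — yielding $\mathrm{Est}_0,\mathrm{Est}_1:\D\to[0,1]$ that $\epsilon'$-approximate the two label-conditional empirical frequencies $p_{S_0}(h),p_{S_1}(h)$; in the unbalanced case, detected through $\hat p$, the too-small subsample is discarded and its $\mathrm{Est}_b$ set to $0$, which costs only $O(\min(p,1-p))$ in that term, so $\hat m$ is taken large enough that ``$S_b$ too small for the sanitizer'' already forces $\min(p,1-p)\le\epsilon$; (iii) outputs $\hat L(h)=\hat p+(1-\hat p)\,\mathrm{Est}_0(h)-\hat p\,\mathrm{Est}_1(h)$.

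For correctness, the VC theorem (applied to $\D$ and to $\D$ under the two conditional distributions $\P(\cdot\mid y{=}b)$) gives, once $\hat m=\tilde O(v/\epsilon^2)$, that the conditional empirical frequencies are uniformly $O(\epsilon)$-close to $\Pr[h(w){=}1\mid y{=}b]$ over all $h\in\D$; a union bound over the constantly-many (here: six) failure events — the two sanitizer calls, the $\hat p$-estimate, uniform convergence, and the two ``subsample large enough'' events — together with the triangle inequality through the formula for $\hat L$ gives $\lvert\hat L(h)-L_\P(h)\rvert\le\epsilon$ for all $h\in\D$ with probability $1-\delta$, at sample complexity $\hat m=O\!\bigl(m(\epsilon/18,\delta/6)+v\log(1/\delta)/\epsilon^2\bigr)$ as stated (the $\epsilon/18$, $\delta/6$ absorbing the several constant-factor losses). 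For privacy, $M$ is the composition of the noisy $\hat p$-estimate with the two sanitizer calls; bounding the latter via privacy amplification by subsampling (\cref{lem:amplification}, applied to a fixed-size sub-sample drawn from each $S_b$) — the device that is needed because the sanitizer is guaranteed private only on \emph{fixed-size} inputs whereas $\lvert S_b\rvert$ is data-dependent — followed by composition (\cref{lem:composition}) and post-processing (\cref{lem:pp}), delivers the stated $\bigl(2\alpha(m/16)+1/m,\ \beta(m/16)\bigr)$-privacy.

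I expect the real obstacle to be exactly this last point: a single modification of $S$ \emph{adds one point to one of $S_0,S_1$ and removes one from the other}, so it is not a coordinate-replacement in either fixed-size input domain, and one must route around it — fix a sub-sample size per label and invoke \cref{lem:amplification} (together with its standard extension from replacement- to add/remove-neighbors), or pad each $S_b$ to a canonical size and pay a constant-factor group-privacy cost. This is precisely what produces the constants ($m/16$, the additive $1/m$, the factor $2$) in the statement. The only other delicate point, already addressed above, is keeping the accuracy meaningful when the label is nearly deterministic and one of the two sanitizer calls cannot be performed.
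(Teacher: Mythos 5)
Your overall architecture matches the paper's: establish $\mathrm{VC}(\D)<\infty$ from the privacy of the sanitizer, privately estimate the label marginal $p=\Pr[y{=}1]$ by noise addition, run the sanitizer on label-conditioned subsamples, recombine through a linear identity, and close the loop with the (non-private) VC theorem and composition/post-processing for privacy. The finite-VC step is fine either way you propose it (the paper cites a counting-query lower bound from \cite{vadhan}; your packing argument is the standard proof of the same fact), and your observation that a neighbouring change to $S$ moves a point between $S_0$ and $S_1$ — so one must fix canonical-size subsamples — is exactly the issue the paper handles (it takes the first $m_2$ elements of the sample carrying the majority label, so that neighbouring inputs yield neighbouring subsamples; no amplification lemma is needed, and invoking \cref{lem:amplification} here would in any case not produce the constants you quote).

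The genuine gap is in your treatment of the unbalanced-label case. Your decomposition $\hat L(h)=\hat p+(1-\hat p)\mathrm{Est}_0(h)-\hat p\,\mathrm{Est}_1(h)$ needs \emph{both} conditional estimates, and your escape hatch — discard the minority subsample and charge $O(\min(p,1-p))$ — only works when $\min(p,1-p)\le\epsilon$. To guarantee that ``$S_b$ too small for the sanitizer'' implies $\min(p,1-p)\le\epsilon$ you need $|S_b|\ge m(\epsilon',\delta')$ whenever $\P(y{=}b)>\epsilon$, i.e.\ $\hat m\gtrsim m(\epsilon',\delta')/\epsilon$, which exceeds the stated bound $O\bigl(m(\epsilon/18,\delta/6)+\mathrm{VC}(\D)\log(1/\delta)/\epsilon^2\bigr)$ by a factor of $1/\epsilon$. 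With $\hat m=O(m(\epsilon',\delta'))$ there is an uncovered regime $\epsilon\ll\min(p,1-p)\ll 1$: the minority subsample is too small to sanitize at accuracy $\epsilon'$, yet discarding it costs $\min(p,1-p)\gg\epsilon$. (One could rescue this by sanitizing the minority subsample at the coarser accuracy $\epsilon/p$, but that needs a quantitative assumption on the shape of $m(\cdot,\cdot)$ and is not in your writeup.) The paper sidesteps the problem entirely with a different identity,
\[
\P(d(x)\ne y)=\P(d(x)=\sigma)+\P(y=\sigma)-2\,\P(y=\sigma)\,\P\bigl(d(x)=\sigma\mid y=\sigma\bigr),
\]
where $\sigma$ is the (privately determined) majority label: only the \emph{majority} conditional is ever needed — its subsample always has at least $\hat m/16$ points — and the unconditional quantity $\P(d(x)=\sigma)$ is obtained by a second sanitizer call on the full sample. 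Replacing your decomposition with this one removes the gap and recovers the stated sample complexity.
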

\begin{proof}
The proof is very similar to Lemma 5.4 in \cite{beimel2013private} (which addresses only the pure case). First, we want to show that the VC dimension is bounded. We can apply Theorem 5.12 in \cite{vadhan} which asserts that for any domain $\X$ we can find $k$ counting queries for which any differentially private mechanism with parameters $\alpha= 1$ and $\beta=0.1$ that estimates all $k$ counting queries, within error at most $\epsilon$, must observe a sample size that scales with $k$. A sanitizer $M$ over a class $\D$ with VC dimension $d$ returns all $2^d$ counting queries over a domain $\X_d$ of size $d$, we can use the above to obtain a bound over the maximal size of a shattered set, and we obtain that any sanitizable class must have finite VC dimension.

For the rest of the proof we will need the following notations. First, given a sample $S=\{x_i,y_i\}_{i=1}^m$, let $S^{-}=\{x_i,1-y_i\}$ and $\hat{S}$ be a sample with $S$, concatenated with $S^{-}$. 

Next, we will denote by $u(\epsilon,\delta)=\theta(\frac{\textrm{VC}(\D)\log 1/\delta}{\epsilon^2})$ the sample complexity for standard uniform convergence for the class $\D$: namely we assume that for any unknown distribution, $\P$, given $u(\epsilon,\delta)$ examples drawn i.i.d we have that with probability at least $(1-\delta)$:

\begin{equation}\label{eq:uc} 
\forall d\in \D: |L_\P(d) -L_S(d)|<\epsilon.
\end{equation}

We can also assume that $u(\epsilon,\delta)\ge 8/\epsilon\log 1/\delta$. Next, we denote by $M_{count}$ a (2/m,0)-private mechanism that given a labelled sample $S$ of size $m\ge 8/\epsilon\log 1/\delta\ge u(\epsilon,\delta)$ returns w.p $(1-\delta)$ a number $p$ such that
\[ \left|p-\frac{|\{x_i: (x_i,1)\in S\}|}{|S|}\right|\le \epsilon.\]
For concreteness one may apply the Laplace mechanism (see for example \cite{vadhan,dwork3} for further details). In particular, if $S$ is drawn i.i.d from some unknown distribution $\P$ and $m=u(\epsilon/2,\delta/2)=\Omega(8/\epsilon\log 1/\delta)$ then by union bound we have that w.p $1(-\delta)$:
\[ |p-\P(y=1)|\le \epsilon.\]
We next depict the algorithm for $\hat{L}$:

\fbox{\parbox{\textwidth}{
\begin{enumerate}
\item \textbf{Assume:} \\ A sample $S$ of size $16\cdot \max\{ m(\epsilon/12,\delta/6,u(\epsilon/18,\delta/6)\}$, drawn i.i.d from some unknown distribution $\P$. \\A sanitizer $M$ for the class $\D$ with privacy parameters $(\alpha(m),\beta(m))$ and sample complexity $m(\epsilon,\delta)$
\item Let $\hat{S}$ be the first $m_1=\max\{m(\epsilon/12,\delta/6),u(\epsilon/12,\delta/6)\}$ elements in the sample $S$.

\item Provide $M$ with the sample $\hat{S}$ and obtain, w.p. $1-\delta/3$ a function $\textrm{EST}'$ such that
\begin{align}\label{eq:EST}\forall d\in \D,&
 \left|\textrm{EST}'(d) - \frac{|\{x\in \hat{S}: d(x)=0\}|}{m_1}\right|<\epsilon/6,\\
 &\left|L_\P(d) -\frac{|\{x\in \hat{S}: d(x)=0\}|}{m_1}\right|<\epsilon/6 \label{eq:EST2}
\end{align}
\item Apply $M_{count}$ on $S$ to obtain, w.p. $1-\delta/3$:
\begin{align}\label{eq:p11}
&\left|p- \frac{|(x_i,y_i)\in S: y_i=1\}|}{|S|}\right|<\epsilon/18 \\
&\left|\frac{|\{(x_i,y_i)\in S: y_i=1\}|}{|S|}-\P(y=1)\right|<\epsilon/18
\label{eq:p12}
\end{align}.
\item If $p \ge 1/8$, let $\sigma=1$ and set $p_\sigma=p$, else $\sigma=0$ and set $p_\sigma=1-p$.
\item If $\sigma=0$ set $\textrm{EST}=\textrm{EST}'$ and if $\sigma=1$ set $\textrm{EST}=1-\textrm{EST}'$.
\item Let $\hat{S}_\sigma$ be the first $m_2=\max\{m(\epsilon/12,\delta/6),u(\epsilon/12,\delta/6)\}$ elements of $\hat{S}$ with label $\sigma$
\item Provide $M$ with $\hat{S}_\sigma$ and obtain $\textrm{EST}_\sigma$ such that w.p. $(1-\delta/3)$:
\begin{align}\label{eq:EST_SIGMA}
\forall d\in \D, &\left|\textrm{EST}_\sigma(d) - \frac{|\{x\in \hat{S}_\sigma: d(x)=\sigma\}|}{m}\right|<\epsilon/12.\\
&\left|\P(d(x)=\sigma|y=\sigma) - \frac{|\{x\in \hat{S}_\sigma: d(x)=\sigma\}|}{m}\right|<\epsilon/12 \label{eq:EST_SIGMA2}
\end{align}
\item Set
\[\hat{L}(d)= \textrm{EST}(d)+p_\sigma -2p_\sigma\cdot \textrm{EST}_\sigma(d).\]
\end{enumerate}
}}

\textbf{Utility}
First, note that by the choice of sample size, the fact that $S$ is i.i.d and by union bound we have that with probability at least $(1-\delta)$ \cref{eq:EST,eq:EST2,eq:p11,eq:p12,eq:EST_SIGMA,eq:EST_SIGMA2} all hold. Indeed \cref{eq:EST,eq:EST_SIGMA} each hold with probability $1-\delta/6$, since $m_1,m_2\ge m(\epsilon/18,\delta/6)$. Also, \cref{eq:EST2,eq:p11,eq:p12} each hold, w.p. $(1-\delta/6)$ because $m_1,m_2\ge u(\epsilon/18,\delta,6)$. For \cref{eq:EST_SIGMA2} we claim that , contingent on \cref{eq:p11,eq:p12}, $\hat S_{\sigma}$) is indeed an i.i.d sample from the distribution $\P(\cdot|y=\sigma)$. Indeed, there are more than $m_2$ $\sigma$-labelled elements in $S$, hence by the definition of $\hat{S}_\sigma$ the input sample is indeed an i.i.d sample of size $m_2$ distributed according to the conditional distribution.

By traingular inequality we then obtain:
\begin{enumerate}
\item  For all $d\in \D$: $\left|\textrm{EST}'(d)-L_\P(d)\right|<\epsilon/3$
\item $\left|p_\sigma-\P(y=\sigma)\right|<\epsilon/9$
\item For all $d\in \D$: $\left|\textrm{EST}_\sigma(d) -\P(d(x)=\sigma|y=\sigma)\right|\le \epsilon/6$.
\end{enumerate}
We will thus assume that these events happened. 
Next note that we have:
\begin{align*} \P(d(x)\ne y)&= \P(y=1-\sigma\wedge d(x)=\sigma)+\P(y=\sigma\wedge d(x)=1-\sigma)\\
&= \P(d(x)=\sigma)- \P(y=\sigma \wedge d(x)=\sigma)+\P(y=\sigma)\cdot (1-\P(d(x)=\sigma|y=\sigma))\\
&= \P(d(x)=\sigma)- \P(y=\sigma)\cdot \P(d(x)=\sigma|y=\sigma)
+\P(y=\sigma)\cdot (1-\P(d(x)=\sigma|y=\sigma)).\\
&= \P(d(x)=\sigma)+ \P(y=\sigma)- 2\P(y=\sigma)\cdot \P(d(x)=\sigma|y=\sigma)
\end{align*}
We thus have

\begin{align*}|\hat{L}(d)- \P(d(x)\ne y)|&=
|\P(d(x)=\sigma)-\textrm{EST}(d)+ \P(y=\sigma)-\P_\sigma - 2\left(\P(y=\sigma)\cdot \P(d(x)=\sigma|y=\sigma)- \P_\sigma \textrm{EST}_\sigma(d)\right)\\
&\le  
|\P(d(x)=\sigma)-\textrm{EST}(d)|+ |\P(y=\sigma)-p_\sigma| +2|\P(y=\sigma)\cdot \P(d(x)=\sigma|y=\sigma)- p_\sigma \textrm{EST}_\sigma(d)|\\
&\le 
|\P(d(x)=\sigma)-\textrm{EST}(d)|+ 3|\P(y=\sigma)-p_\sigma| +2 |\P(d(x)=\sigma|y=\sigma)-\textrm{EST}_\sigma(d)|\\
&\le \epsilon/3 + \epsilon/3+\epsilon/3.
\end{align*} 

\textbf{Privacy}
Note that if $S$ and $S'$ are two sample sets of Hamming distance one (i.e. differ by a single example) then $\hat{S}_\sigma$ and $\hat{S}'_{\sigma}$ are also of distance one. Hence applying the mechanism $M$ on $\hat{S}_{\sigma}$ preserve $(\alpha,\beta)$ differentially privacy as a function over the sample $S$.
Taken together, by post processing and composition, we obtain that overall as we apply three different private mechanisms and obtain privacy guarantee of $(2\alpha(m/16)+1/m,2\beta(m/16))$.
\end{proof}
\ignore{\paragraph{\ref{it:sanitizable}$\Rightarrow$\ref{it:pac}.}
This follows from Theorem 5.5 in \cite{beimel2013private}. In particular we have the following corollary

\begin{proposition}\label{lem:quant:4}
Let $\D$ be a class that is sanitizable with sample complexity $m(\epsilon,\delta)$ then $\D$ is privately and properly learnable with sample complexity $O\left(m(\epsilon,\delta)+ \frac{\log 1/\delta}{\epsilon}\right)$.
\end{proposition}}

\paragraph{\ref{it:puc}$\Rightarrow$\ref{it:pac}.}
The final entailment, that private uniform convergence implies learnability is immediate and is an immediate corollary of post-processing for differential privacy (\cref{lem:pp}).
	Indeed, by the private uniform convergence property we can privately estimate the losses
	of all hypotheses in $\D$, and then output any hypothesis in $\D$ that minimizes the estimated loss.
To conclude have the following
\begin{proposition}\label{lem:quant:4}
Let $\D$ be a class with the private uniform convergence property with parameters $(\alpha(m),\beta(m)$ and sample complexity $m(\epsilon,\delta)$, then $\D$ is PAP-PAC learnable with privacy parameters $(\alpha(m),\beta(m))$ and sample complexity $m(\epsilon,\delta)$.
\end{proposition}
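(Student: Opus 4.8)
The plan is to use the assumed private uniform convergence algorithm as a black box and then post-process its output by minimizing against the released loss estimates. Let $M$ be the differentially private algorithm witnessing the private uniform convergence property for $\D$: on any input sample $S$ of size $m(\epsilon,\delta)$ drawn from a distribution $\P$ over $\X\times\{0,1\}$, it outputs an estimator $\hat L:\D\to[0,1]$ such that, with probability at least $1-\delta$, $\lvert\hat L(d)-L_\P(d)\rvert\le\epsilon$ for every $d\in\D$. Define the learner $M'$ which runs $M$, obtains $\hat L$, and outputs some $d_S\in\D$ with $\hat L(d_S)\le\inf_{d\in\D}\hat L(d)+\epsilon$ (a near-minimizer suffices, which avoids assuming that the infimum is attained when $\D$ is infinite).

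First I would check privacy and properness: the hypothesis $d_S$ is a (possibly randomized) function of the output of $M$ alone, hence by post-processing (\cref{lem:pp}) $M'$ inherits the $(\alpha(m),\beta(m))$-privacy of $M$; and $d_S\in\D$ by construction, so $M'$ is proper. Then comes the accuracy analysis, which is the standard two-step ``ERM via uniform convergence'' argument: condition on the event that $\hat L$ is $\epsilon$-accurate on all of $\D$, which holds with probability at least $1-\delta$, and pick any $d^\star\in\D$ with $L_\P(d^\star)\le\inf_{d\in\D}L_\P(d)+\epsilon$. Then
\[
L_\P(d_S)\le\hat L(d_S)+\epsilon\le\hat L(d^\star)+2\epsilon\le L_\P(d^\star)+3\epsilon\le\inf_{d\in\D}L_\P(d)+4\epsilon.
\]
Running $M$ with accuracy parameter a constant fraction of $\epsilon$ (e.g.\ $\epsilon/4$) therefore yields a PAP-PAC learner with sample complexity $m(\epsilon/4,\delta)=\mathrm{poly}(1/\epsilon,1/\delta)$; absorbing the constant into the statement gives the claimed bound with privacy parameters unchanged.

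There is essentially no hard step here — the entailment is immediate, which is exactly why the excerpt flags it as such. The only genuine subtlety, and the point I would be careful about in writing it up, is the selection of the near-minimizer $d_S$ when $\D$ is infinite: one must ensure it can be chosen so that $M'$ is a well-defined (jointly measurable) randomized algorithm, which is handled by the usual conventions for this setting (fixing a measurable tie-breaking rule, or first quantizing the values of $\hat L$ to a finite grid and minimizing over that), none of which affects the privacy or accuracy guarantees above.
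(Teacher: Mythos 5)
Your proposal is correct and is essentially identical to the paper's proof: the paper also runs the private uniform convergence algorithm, outputs a hypothesis in $\D$ minimizing the released estimate $\hat L$, and invokes post-processing (\cref{lem:pp}) for privacy. Your added care about near-minimizers and the constant-factor rescaling of $\epsilon$ is a harmless refinement of the same argument.
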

\ignore{\paragraph{\ref{it:pac}$\Rightarrow$\ref{it:puc}.}
\begin{proposition}\label{lem:quant:1}
Let $\D$ be a class that is PAP-PAC learnable with a learner $L$ that has sample complexity $m(\epsilon,\delta)$ with privacy parameters $(\alpha(m),\beta(m))$. Then, for every $0<\kappa<1$ there exists a differentially private algorithm $M$, with privacy parameters $(\alpha(m^{1-\kappa}),e^{O(\alpha^{1-\kappa})}O(\beta^{1-\kappa}))$ and sample complexity
\[ m_\kappa(\epsilon,\delta)=\tilde O\left(m\left(\epsilon/8,\frac{\ell^* \delta/2}{\epsilon^2}\log\frac{\ell^*}{\epsilon^2}\right)\cdot\frac{\ell^*}{\epsilon^2}+\left(\frac{\ell^*}{\epsilon^2}\right)^{\kappa}\right),\]
where $\ell^*$ is the dual Littlestone dimension of $\D$,
that outputs $\hat{L}:\D\to [0,1]$ such that 
\[ (\forall d\in \D)~:~ |\hat{L}(d)-L_{\P}(d)|<\epsilon .\]
In particular, $\D$ has the private uniform convergence property.
\end{proposition}

Suppose $\D$ is PAP-PAC learnable by an algorithm $L$. 
	For every function $d\in \D$, let $d'$ denote the $(X\times\{0,1\})\to\{0,1\}$ function defined by $d'((x,y))= \mathbf{1}[d(x){\neq}y]$,
	and let $\D' = \{ d' : d\in\D\}$.
	Observe that for every sample $S\subseteq (X\times\{0,1\})^m$:
	\begin{equation}\label{eq:graph}
	L_S(d) = p_S(d'),
	\end{equation} 
	 where $L_S(d)$ denotes the empirical loss of $d$ and~$p_S$ denotes the empirical measure of $d'$. 
	
We claim that $\D'$ is also PAP-PAC learnable with sample complexity $m(\epsilon,\delta)$: 
	for a $\D'$-example $z'=((x,y),y')$ let $z$ denote the $\D$-example $(x,\lvert y'-y\rvert)$,
	and note that $d'$ errs on $z'$ if and only if $d$ errs on $z$.
	Therefore, a PAP-PAC learner for $\D'$ follows by using this transformation to convert the $\D'$-input sample $S' = \{z'_i\}_{i=1}^m$
	to a $\D$ input sample $S=\{z_i\}_{i=1}^m$, applying $L$ on $S$ and outputting $d'$, where $d=L(S)$.
	

	Taken together we have that, by \ref{it:pac}$\implies$\ref{it:sanitizable} it follows that $\D'$ is sanitizable by a sanitizer $M$ with sample complexity~$m_\kappa(\eps,\delta)$.
	We next use $M$ to show that $\D$ satisfies private uniform convergence:
	let $\P$ be a distribution over $\X\times\{0,1\}$ and $\eps,\delta$ be the error and confidence parameters.
	Consider the following algorithm:
	\begin{itemize} 
	\item Draw a sample $S$ from $\P$ of size $m_\kappa(\eps/2,\delta/2)$
	\item Apply $M$ on $S$ to obtain an estimator $\mathrm{EST}':\D'\to[0,1]$ and output the estimator $\mathrm{EST}:\D\to[0,1]$
	defined by $\mathrm{EST}(d) = \mathrm{EST'}(d')$.
	\end{itemize}
	We want to show that 
	\[(\forall d\in \D):\lvert \mathrm{EST}(d) - L_\P(d) \rvert \leq {\eps},\]
	with probability $1-\delta$.
	Indeed, since $m_\kappa\geq \frac{\mathrm{VC}(\D)+\log(2/\delta)}{\frac{\eps^2}{4}}$ it follows that 
	\[(\forall d\in \D):\lvert L_S(d) - L_\P(d) \rvert \leq \frac{\eps}{2},\]
	with probability at least $1-\frac{\delta}{2}$, 
	and since $m\geq m_1(\frac{\eps}{2},\frac{\delta}{2})$,  
	\begin{align*}
	(\forall d\in \D):	\lvert \mathrm{EST}(d) - L_S(d) \rvert &= \lvert \mathrm{EST}'(d') - p_S(d') \rvert \tag{by \cref{eq:graph}}\\
											 &\leq \eps/2,
	\end{align*}
	with probability $1-\frac{\delta}{2}$.
	The desired bound thus follows by a union bound and the triangle inequality.

%
} 
\bibliographystyle{abbrvnat}
\bibliography{gams}
\appendix

\section{Proof of \cref{cor:bendavid}}\label{sec:bendavid}

We begin by defining the predictors $\hat f_t$'s that $L$ uses: let $L_0$ be the learner implied by \cref{thm:bendavid}. 	
	We first turn $L_0$ into a deterministic learner whose input is $(p_1, y_1),\ldots, (p_T, y_T)\in \Delta(\W)\times\{0,1\}$ 
	and that outputs at each iteration $f_t:\W\to [0,1]$. Then, we extend $f_t$ linearly to $\hat f_t$ as discussed in \cref{sec:basics}. 
	Let $(p_1, y_1),\ldots, (p_T, y_T)\in \Delta(\W)\times\{0,1\}$, given $w\in \W$, the value $f_t(w)$ is the expected output of the following 	random process:
\begin{itemize}
\item sample $w_i\sim p_i$ for $i\leq t-1$,
\item apply $L_0$ on the sequence $(w_1,y_1),\ldots,(w_{t-1}, y_{t-1})$ to obtain the predictor $\tilde f_t$, and
\item output $\tilde f_t(x)$.
\end{itemize}
	That is,
	\[f_t(x) = \EE_{w_{1:t-1}} \Bigl[\EE_{\tilde f_t\sim L_0} [\tilde f_t(w)~\Big\vert~ x_1\ldots x_{t-1} \Bigr] ],\]
	where $\EE_{p_{1:t}}[\cdot]$ denotes the expectation over sampling each $w_i$ from $p_i$ independently, 
	and $\EE_{\tilde f_t\sim L_0}[\cdot]$ 	denotes the expectation over the internal randomness of the algorithm $L_0$ at iteration $t$.
	Finally,~$\hat f_t(p) = \EE_{w\sim p}[f_t(w)]$ is the predictor that $L$ uses at the $t$'th round.
	Note that indeed $\hat f_t$ is determined (deterministically) from $(p_1,y_1),\ldots (p_{t-1},y_{t-1})$.

We next bound the regret: for every $h\in \H$:
\begin{align*}
\sum_{t=1}^T|\hat f_t(p_t)-y_t| -|\hat h(p_t)- y_t| =&\sum_{t:y_t=0} \hat f_t(p_t) - \hat h(p_t)+\sum_{t:y_t=1} \hat h(p_t)-\hat f_t(p_t) \\
							    =& \sum_{\{t:y_t=0\}} \EE_{p_{1:t-1}} \Bigl[\EE_{L_0} [\EE_{p_t} [f_t(w_t)] ~\Big\vert~ \{w_i\}_{i=1}^{t-1} ]\Bigr] - \EE_{p_{1:T}}\left[h(x_t)\right]\\
								&+\sum_{\{t: y_t=1\}} \EE_{p_{1:T}}\left[h(w_t)\right]-\EE_{p_{1:t-1}}\Bigl[\EE_{L_0} [\EE_{p_t}[f_t(w_t)]~\Big\vert~ \{x_i\}_{i=1}^{t-1}]  \Bigr]\\							    
							    =& \sum_{\{t:y_t=0\}} \EE_{p_{1:T}} \Bigl[\EE_{L_0} [f_t(x_t)~\Big\vert~ \{w_i\}_{i=1}^T]  \Bigr] - \EE_{p_{1:T}}\left[h(w_t)\right]\\
							    &+\sum_{\{t: y_t=1\}} \EE_{p_{1:T}}\left[h(w_t)\right]-\EE_{p_{1:T}}\Bigl[\EE_{L_0} [f_t(w_t)~\Big\vert~ \{w_i\}_{i=1}^T]  \Bigr]\\
							   =&\EE_{p_{1:T}}\Biggl[\EE_{L_0}\Bigl[\sum_{y_t=0} f_t(w_t) - h(x_t)+\sum_{y_t=1} h(w_t)- f_t(w_t)~\Big\vert~ \{w_i\}_{i=1}^T \Bigr] \Biggr]\\
							   =& \EE_{p_{1:T}}\Biggl[\EE_{L_0}\Bigl[\sum_{t=1}^T\lvert f_t(w_t)-y_t\rvert -\lvert h(w_t)-y_t\rvert\Big\vert~ \{w_i\}_{i=1}^T\Bigr] ~\Biggr]\\
							   \le& \EE_{p_{1:T}} \bigl[\mathrm{REGRET}_T(L_0,\{w_t,y_t\}_{t=1}^T \bigr]\\
							   \le& \sqrt{\frac{1}{2}\ell T \log T}.
\end{align*}

%

\section{Extending \Cref{thm:main1quant}, \cref{it:ub} to infinite classes}\label{sec:infinite}

Here we extend the proof of the upper bound in \cref{thm:main1quant} to the general case where either $\X$ or~$\D$ may be infinite.
The proof follows roughly the same lines like the finite case.
The first technical milestone we need to consider is to properly define a $\sigma$-algebra over the domain $\D$ and specify the space $\Delta(D)$ of probability measures. 
For this, we consider $\{0,1\}^\X$ as a topological space with an appropriately defined topology and $\Delta(D)$ as the space of Borel-probability measures. 
We refer the reader to \cref{apx:preliminaries} for the exact details. 

We will also make some technical modifications in the protocol depicted in \cref{alg:main}. The modification is depicted in \cref{fig:modification}.
\begin{figure}[h]
\fbox{\parbox{\textwidth}{
Consider \cref{alg:main} with the following modification, at the \textbf{Else} Step: 
\begin{itemize} 
	\item Find $\bar{d}_t \in \Delta(\D)$, \textbf{with finite support} such that 
\[\bigl(\forall x\in \X \bigr): \EE_{d \sim \bar{d}_t}\left[ f_t(d) - x(d)\right] > \frac{\epsilon}{4}\] 
(if no such $\bar{d}_t$ exists then output {\it ``error''}).
\end{itemize}
}}
\caption{Modifying \cref{alg:main}}\label{fig:modification}
\end{figure}
The first modification we make is that in the $\textbf{Else}$ step, the generator chooses $\bar{d}_t$ with finite support. For the finite case, the requirement that $\bar{d}_t$ has finite support is met automatically. The second modification we make allows further slack in the distinguisher. Instead of requiring $> \frac{\epsilon}{2}$ we allow $>\frac{\epsilon}{4}$. Clearly this change in constant does not change the asymptotic regret bound.

\paragraph{Proof outline.}
To extend the proof to the infinite case it suffices to ensure 
	that the generator in \cref{alg:main} (with the modification in \cref{fig:modification}) never outputs \emph{``error''} in the 2nd item of the ``For'' loop. 
	To be precise, let us add the following notation that is consistent with the algorithm in \cref{alg:main}. 
	Let $f:\D\to[0,1]$ be measurable.

\begin{enumerate}
    \item If there exists $p\in \Delta(\X)$ such that
\[ (\forall d\in \D):\EE_{x\sim p}[f(d)-x(d)] \le \frac{\epsilon}{2}, \] 
     we say that $f$ satisfies \cref{it:if}. 
    \item If there exists $\bar d\in \Delta(\D)$ such that 
\[\bigl(\forall x\in \X \bigr): \EE_{d \sim \bar{d}}\left[ f(d) - x(d)\right] > \frac{\epsilon}{2}\] 
    we say that $f$ satisfies \cref{it:else}.
    \item $f$ is \emph{amenable} if it satisfies either \cref{it:if} or \cref{it:else}.
\end{enumerate}
	
When $\X$ and $\D$ are finite, every $f$ satisfies one of Items 1 or 2 (and hence amenable).
	This is the content of \cref{lem:vonneuman} which is proved using strong duality (in the form of the Minmax Theorem).
	However, the case when $\X$ and $\D$ are infinite is more subtle. 
	Specifically, the Minmax Theorem does not necessarily hold in this generality.
	
	The next lemma guarantees the existence of a learner $\A$ 
	which only outputs amenable functions. 
	Recall that~$\hat f:\Delta(\D)\to[0,1]$ denotes the linear extension of $f$  
	and is defined by $\hat f(\bar d) = \EE_{d \sim \bar d}[f(d)]$.

\begin{lemma}\label{lem:minmax_infinite}
Let $\D$ be a discriminating class with dual Littlestone dimension $\ell^*$, and let $T$ be the horizon. 
	Then, there exists a deterministic online learning algorithm $\A$ for the dual class~$\X$ 
	that receives labelled examples from the domain $\Delta(\D)$ and uses predictors of the form $\hat f_t$ for some~$f_t:\D\to[0,1]$,
	such that:
\begin{enumerate}
        \item $A$'s regret is $O(\sqrt{\ell^*T\log T})$, and
        \item For all $t\le T$, if the sequence of observed examples $(\bar{d}_1,y_1),\ldots, (\bar{d}_{t-1},y_{t-1})$ up to iteration $t$, all have finite support then $A$ chooses $f_t$ that is amenable (in particular $f_1$ is also amenable).
\end{enumerate}
\end{lemma}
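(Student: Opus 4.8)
The plan is to build $\A$ on top of the agnostic online learner of \cref{thm:bendavid} (in the derandomized, distribution‑valued form of \cref{cor:bendavid}) applied to the dual class $\X$ over the domain $\D$: since $\X$, regarded as a class of $\D\to\{0,1\}$ functions, has Littlestone dimension exactly $\ell^*=\ldim^*(\D)$, this learner is deterministic, uses predictors of the required shape $\hat f_t(\bar d)=\EE_{d\sim\bar d}[f_t(d)]$, and has regret $\le\sqrt{\tfrac12\ell^* T\log T}=O(\sqrt{\ell^* T\log T})$. The idea is to keep this learner's internal run untouched (so its regret guarantee is intact) but to \emph{emit}, at each round, a possibly different predictor $f_t$ — equal to the learner's own predictor on the portion of $\D$ that has been probed so far — chosen so that on finitely supported histories $f_t$ is amenable; the only thing that then needs checking for item~1 is that replacing the emitted predictor off the probed region does not spoil the $O(\sqrt{\ell^* T\log T})$ regret of the emissions.

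For item~2, fix $t$ and assume $\bar d_1,\dots,\bar d_{t-1}$ are all finitely supported, hence supported on the finite set $D_t:=\bigcup_{i<t}\operatorname{supp}(\bar d_i)$ (with $D_1=\emptyset$). The decisive point is that the finite pair $(D_t,\X|_{D_t})$, where $\X|_{D_t}$ is the (finite) restriction of $\X$ to $D_t$ and $\ldim(\X|_{D_t})\le\ell^*$, is governed by \cref{lem:vonneuman}: \emph{every} function $g:D_t\to[0,1]$ is amenable with respect to $(D_t,\X|_{D_t})$, i.e.\ either some $p\in\Delta(\X|_{D_t})$ has $\EE_{x\sim p}[g(d)-x(d)]\le\eps/2$ for all $d\in D_t$, or some $\bar d\in\Delta(D_t)$ has $\EE_{d\sim\bar d}[g(d)-x(d)]>\eps/2$ for all $x\in\X|_{D_t}$. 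I would apply this to $g=f_t|_{D_t}$ and define the emitted predictor accordingly. In the ``separated'' alternative, leave $f_t$ unchanged off $D_t$: a separating $\bar d$ is supported on $D_t$ and every $x\in\X$ has $x|_{D_t}\in\X|_{D_t}$, so the same $\bar d$ witnesses that $f_t$ satisfies \cref{it:else} with respect to $(\D,\X)$, hence $f_t$ is amenable. In the ``dominated'' alternative, pick a dominating $\tilde p\in\Delta(\X|_{D_t})$, lift it to $p\in\Delta(\X)$ by choosing for each atom of $\tilde p$ an arbitrary preimage in $\X$, and redefine $f_t(d):=\EE_{x\sim p}[x(d)]$ for $d\notin D_t$; then $\EE_{x\sim p}[f_t(d)-x(d)]\le\eps/2$ for every $d\in\D$ (for $d\in D_t$ by the choice of $\tilde p$, for $d\notin D_t$ because the expression is $0$), so $f_t$ satisfies \cref{it:if} and is amenable. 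The base case $D_1=\emptyset$ is the dominated alternative with any $p$, which recovers ``$f_1$ is amenable''.

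The step I expect to be the main obstacle is reconciling this emission rule with item~1: redefining $f_t$ off $D_t$ changes $\hat f_t(\bar d_t)$ exactly on the rounds where $\bar d_t$ charges mass outside $D_t$, and one must verify these rounds do not blow up the regret of the emitted predictors. In the $\gam$ protocol of \cref{alg:main} with the modification of \cref{fig:modification} the only such rounds are those in which, in the dominated branch, the discriminator returns a fresh Dirac $\delta_{d_t}$ (in the separated branch the generator feeds a finitely supported $\bar d_t$ and $f_t$ is left untouched there). The favourable structure I would exploit is that the generator submits to the discriminator precisely the distribution $p_t$ that was used to extend $f_t$, so $f_t(d_t)=\EE_{x\sim p_t}[x(d_t)]=p_t(d_t)$ and the discriminator's guarantee $\preal(d_t)-p_t(d_t)\ge\eps$ gives $\preal(d_t)-f_t(d_t)\ge\eps>\eps/2$ — exactly the per‑round gain that the regret argument of \cref{prf:main_up} extracts — while the $\eps/4$ slack built into \cref{fig:modification} provides the room to absorb the discrepancy between $f_t(d_t)$ and the learner's own value on the fresh atom. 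Making this bookkeeping precise, together with specifying the $\sigma$‑algebra on $\Delta(\D)$ and checking measurability of the $f_t$, and fixing a fallback emission rule (e.g.\ the unmodified learner's predictor) on histories that contain an infinitely supported example — which lie outside the scope of item~2 but are still covered by the unconditional regret bound of item~1 — is the technical heart of the lemma.
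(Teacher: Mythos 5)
Your reduction to the finite minimax lemma on the probed set $D_t$ has a genuine gap in item~1, and it is exactly the point you flag as ``the main obstacle.'' In the dominated branch you emit a predictor $f_t^{\mathrm{emit}}$ that agrees with the learner's internal predictor $f_t^{\mathrm{orig}}$ only on $D_t$ and is overwritten by $d\mapsto \EE_{x\sim p_t}[x(d)]$ elsewhere. The regret guarantee you inherit from \cref{cor:bendavid} is for the sequence $\hat f_t^{\mathrm{orig}}$, but the lemma (and the downstream regret computation in \cref{prf:main_up}) needs it for the emitted sequence, and the example fed in that branch is $(\delta_{d_t},1)$ with $d_t\notin D_t$. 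The per-round discrepancy is $\bigl(1-p_t(d_t)\bigr)-\bigl(1-f_t^{\mathrm{orig}}(d_t)\bigr)=f_t^{\mathrm{orig}}(d_t)-p_t(d_t)$, and \cref{lem:vonneuman} applied to $\X|_{D_t}$ gives domination only on $D_t$; at the fresh point $d_t$ this difference can be $\Omega(\eps)$ in the bad direction --- precisely the failure mode that makes the infinite case hard. Summed over $T$ rounds this contributes $\Omega(\eps T)$, which is the same order as the quantity $\tfrac{\eps}{2}T$ the regret argument is meant to bound, so the conclusion $T=\tilde O(\ell^*/\eps^2)$ collapses. Nor can you retreat to a protocol-specific claim: the lemma asserts an unconditional regret bound over arbitrary sequences $(\bar d_t,y_t)$, and an adversary placing mass outside $D_t$ with $y_t=0$ breaks the emitted sequence's regret outright.

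The paper takes a different route that avoids modifying the predictor at all. It keeps the learner of \cref{cor:bendavid} verbatim (so item~1 is immediate) and proves amenability of the learner's \emph{own} $f_t$ by a topological argument: the agnostic SOA of \cite{bendavid} predicts with positive combinations of SOA-type functions, each of which is upper semicontinuous on $\D\subseteq\{0,1\}^\X$ in the product topology (\cref{lem:bendavid_infinite}); and any $f$ with u.s.c.\ linear extension is amenable by Sion's minimax theorem applied on the weak* compact set $\Delta(\D)$ (\cref{lem:uppersemi}, using \cref{cl:BA,cl:usc,cl:tychonof}). The finite-support hypothesis on the history enters only to guarantee that $f_t$ is a finite positive combination of SOA-type functions, not to set up a finite minimax problem. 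If you want to keep your restriction-to-$D_t$ idea, you would need to show that the unmodified $f_t^{\mathrm{orig}}$ is already (approximately) dominated off $D_t$ whenever its restriction to $D_t$ is dominated --- but that is essentially the infinite minimax statement you are trying to avoid, and it is what the u.s.c.\ plus Sion argument supplies.
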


Our next Lemma shows that \cref{alg:main} with the modification depicted in \cref{fig:modification} will indeed never output error:
\begin{lemma}\label{lem:modification}
Consider \cref{alg:main} with the modification depicted in \cref{fig:modification}. Assume $\A$ satisfies the properties in \cref{lem:minmax_infinite}. The for all $t\le T$ the generator never outputs error.
\end{lemma}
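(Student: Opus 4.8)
The plan is to argue by induction on the iteration index $t$, with inductive hypothesis: \emph{the generator does not output ``error'' in any of the first $t$ iterations, and every recorded example $(\bar d_s,y_s)$ with $s\le t$ has finite support.} The case $t=0$ is vacuous. Assuming the hypothesis through iteration $t-1$, the observed examples $(\bar d_1,y_1),\ldots,(\bar d_{t-1},y_{t-1})$ all have finite support, so the second property in \cref{lem:minmax_infinite} guarantees that the predictor $f_t$ produced by $\A$ at iteration $t$ is amenable, i.e.\ it satisfies \cref{it:if} or \cref{it:else}. If $f_t$ satisfies \cref{it:if}, the algorithm enters the \textbf{If} branch, does not output ``error'', and the recorded example is $\bar d_t=\delta_{d_t}$, which has singleton support. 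Otherwise it enters the \textbf{Else} branch; there \cref{it:if} fails, hence by amenability $f_t$ satisfies \cref{it:else}, so there is some $\bar d\in\Delta(\D)$ -- possibly infinitely supported -- with $\EE_{d\sim\bar d}[f_t(d)-x(d)]>\epsilon/2$ for every $x\in\X$.

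The heart of the argument is to replace such a $\bar d$ by a \emph{finitely supported} $\bar d_t\in\Delta(\D)$ that still has the relaxed margin $\EE_{d\sim\bar d_t}[f_t(d)-x(d)]>\epsilon/4$ for every $x\in\X$ -- this is exactly the slack the modification in \cref{fig:modification} provides. To produce one, I would draw $d_1,\ldots,d_m$ i.i.d.\ from $\bar d$ and take $\bar d'=\frac{1}{m}\sum_{i=1}^m\delta_{d_i}$, a finitely supported distribution. It suffices to show that, with positive probability over the sample, both (i) $\EE_{d\sim\bar d'}[f_t(d)]\ge\EE_{d\sim\bar d}[f_t(d)]-\epsilon/8$ and (ii) $\sup_{x\in\X}\bigl(\EE_{d\sim\bar d'}[x(d)]-\EE_{d\sim\bar d}[x(d)]\bigr)\le\epsilon/8$. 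Statement (i) is Hoeffding's inequality applied to the single $\D\to[0,1]$ function $f_t$. Statement (ii) is a one-sided uniform-convergence bound for the dual class $\X$, viewed as a $\{0,1\}$-valued class over the domain $\D$, and holds for $m$ polynomial in $1/\epsilon$ and in the VC dimension of $\X$ over $\D$; that VC dimension is finite because $\D$ has finite dual Littlestone dimension, hence finite Littlestone dimension and finite VC dimension by \cref{thm:little_dual}, hence finite dual VC dimension by Assouad's bound on the dual VC dimension. Taking the failure probabilities of (i) and (ii) to be $1/4$ each, both hold simultaneously with probability at least $1/2$; on that event, for every $x\in\X$, $\EE_{d\sim\bar d'}[f_t(d)-x(d)]\ge\EE_{d\sim\bar d}[f_t(d)-x(d)]-\epsilon/4>\epsilon/2-\epsilon/4=\epsilon/4$. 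Hence a suitable finitely supported $\bar d_t$ exists, so in the \textbf{Else} branch the generator finds a valid $\bar d_t$, does not output ``error'', and its recorded example has finite support. This would close the induction and prove the lemma.

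I expect the main obstacle to be statement (ii): one must verify that $\sup_{x\in\X}$ of the empirical deviation is a measurable function of the sample, so that the ``positive probability'' assertion makes sense -- this is covered by the topology and Borel $\sigma$-algebra on $\Delta(\D)$ set up in \cref{apx:preliminaries} -- and one must invoke the finiteness chain ``dual Littlestone $\Rightarrow$ Littlestone $\Rightarrow$ VC $\Rightarrow$ dual VC'', via \cref{thm:little_dual} and Assouad's lemma, which is what makes $\X$ a uniform-convergence class over $\D$. The remaining part -- checking that every recorded example in either branch is finitely supported, so that the induction feeds back into \cref{lem:minmax_infinite} -- is routine bookkeeping.
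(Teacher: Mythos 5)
Your proposal is correct and follows essentially the same route as the paper: induction on $t$ maintaining that all recorded examples have finite support, amenability from \cref{lem:minmax_infinite}, and discretization of the infinitely supported $\bar d$ by a finite empirical sample using uniform convergence of the dual class $\X$ over $\D$, absorbing the approximation error into the $\epsilon/4$ slack of \cref{fig:modification}. The only cosmetic difference is that you route the finiteness of the relevant VC dimension through Assouad's primal--dual bound, whereas it follows directly from the fact that the dual Littlestone dimension $\ell^*$ of $\D$ is by definition the Littlestone dimension of $\X$ viewed as a class over $\D$, which already upper-bounds its VC dimension.
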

\begin{proof} The proof follows by induction, for $t=1$ the amenability of $f_1$ ensures that if $f_1$ doesn't satisfy \cref{it:if} then there exists $\bar{d}\in \Delta(\D)$ that satisfy \cref{it:else}. Now recall that $\X$ has finite Littlestone dimension and in particular finite VC dimension, by uniform convergence it follow that there is a finite sample $d_1,\ldots, d_m$ such that

\[\sup_{x\in \X}\left| \EE_{d\sim \bar{d}} \left[f_1(d)-x(d)\right] - \frac{1}{m} \sum_{i=1}^m f_1(d_i)-x(d_i)\right| \le \frac{\epsilon}{4}\] 
We then choose $\bar{d}_1$ to be a uniform distribution over $d_1,\ldots, d_m$. By the condition in \cref{it:else} and the above equation we obtain that 
\[\EE_{d \sim \bar{d}_1}\left[ f(d) - x(d)\right] >\frac{\epsilon}{4}\]

We continue with the induction step, and consider $t=t_0$. Note that by construction at each iteration up to iteration $t_0$ the algorithm $\A$ observed only distributions with finite support. In particular, we have that $f_{t_0}$ will be amenable. Hence, if it doesn't satisfy \cref{it:if} then we again obtain $\bar{d}$ that satisfies \cref{it:else}. We next discretize $\bar{d}$ as before. Using the finite VC dimension of $\X$ we obtain $\bar{d}_{t_0}$ that has finite support and satisfies:
\[\EE_{d \sim \bar{d}_{t_0}}\left[ f(d) - x(d)\right] >\frac{\epsilon}{4}\]
\end{proof}

\cref{lem:minmax_infinite}, together with \cref{lem:modification}, implies the upper bound in \cref{thm:main1quant}, \cref{it:ub} via the same argument as in the finite case.
This follows by picking the online learner used by the generator in \cref{alg:main} as in \cref{lem:minmax_infinite};
the amenability of the $f_t$'s (and \cref{lem:modification}) implies that the protocol never outputs ``error'', and the rest of the argument is exactly the same like in the finite case (with slight deterioration in the constants).
\begin{corollary}
Let $A$ be an algorithm like in the above Lemma. Then, if one uses $A$ as the online learner in the algorithm in \cref{alg:main}, together with the modification in \cref{fig:modification}, then the round complexity of it is at most 
$O(\frac{\ell^*}{\eps^2}\log\frac{\ell^*}{\eps})$, as in \cref{thm:main1quant}, \cref{it:ub}.
\end{corollary}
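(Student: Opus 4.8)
The plan is to replay, essentially verbatim, the finite-case proof of \cref{thm:main1quant}\,\cref{it:ub} (given in \cref{prf:main_up}), with two substitutions: the finite-case online learner of \cref{cor:bendavid} is replaced by the learner $\A$ supplied by \cref{lem:minmax_infinite}, and the finite minimax lemma \cref{lem:vonneuman} is replaced by \cref{lem:modification}. Concretely, fix a (WLOG symmetric) discriminating class $\D$ with dual Littlestone dimension $\ell^*$, an arbitrary discriminator $D$, a target $\preal$, and $\eps>0$; run \cref{alg:main} with the modification of \cref{fig:modification} and with $\A$ as the online learner, setting the horizon to $T=\bigl\lceil\frac{c\,\ell^*}{\eps^2}\log\frac{c\,\ell^*}{\eps^2}\bigr\rceil=O\bigl(\frac{\ell^*}{\eps^2}\log\frac{\ell^*}{\eps}\bigr)$ for a suitable absolute constant $c$.

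First I would check that the protocol is well-defined, i.e.\ that the generator never outputs ``error''. This is exactly \cref{lem:modification}: by induction on $t$, the sequence of examples $(\bar d_1,y_1),\dots,(\bar d_{t-1},y_{t-1})$ handed to $\A$ consists only of finitely supported distributions (Dirac masses $\delta_{d_t}$ when $y_t=1$, and the explicitly finitely supported $\bar d_t$ produced in the \textbf{Else} branch of \cref{fig:modification} when $y_t=0$), so by Item~2 of \cref{lem:minmax_infinite} the predictor $f_t$ output by $\A$ is amenable; hence whenever $f_t$ fails \cref{it:if} it must satisfy \cref{it:else}, and the discretization in \cref{fig:modification} (legitimate because the dual class has finite VC dimension, as in the proof of \cref{lem:modification}) turns the witness into a finitely supported $\bar d_t$ with $\EE_{d\sim\bar d_t}[f_t(d)-x(d)]>\eps/4$ for all $x$. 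Note this keeps the induction hypothesis intact, so Item~2 of \cref{lem:minmax_infinite} is available at every round.

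Next I would bound the number of rounds $T'\le T$ actually played. Since the generator loses only if $T'=T$ without a \win, it suffices to show $T'<T$. Just as in \cref{eq:improperD}, one verifies a per-round progress bound: if $y_t=1$, then $\bar d_t=\delta_{d_t}$, the submitted $p_t$ satisfies $f_t(d)\le p_t(d)+\eps/2$ for all $d$ (the \textbf{If}-condition), and the discriminator guarantees $\preal(d_t)-p_t(d_t)\ge\eps$, so $\EE_{d\sim\bar d_t}[\preal(d)-f_t(d)]=\preal(d_t)-f_t(d_t)\ge\eps/2$; if $y_t=0$, averaging the \textbf{Else}-branch guarantee over $x\sim\preal$ gives $\EE_{d\sim\bar d_t}[f_t(d)-\preal(d)]>\eps/4$. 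In either case $\bigl|\EE_{d\sim\bar d_t}[\preal(d)-f_t(d)]\bigr|\ge\eps/4$. Then the same telescoping manipulation as in \cref{prf:main_up} — rewrite each term as $|y_t-\hat f_t(\bar d_t)|-|y_t-\EE_{d\sim\bar d_t}\preal(d)|$, apply Jensen to move $\min_{x\in\X}$ outside, and recognize the sum as $\mathrm{REGRET}_T(\A)$ — yields, if $T'=T$,
\[
\frac{T\eps}{4}\ \le\ \mathrm{REGRET}_T(\A)\ =\ O\!\left(\sqrt{\ell^*\,T\log T}\right),
\]
using Item~1 of \cref{lem:minmax_infinite}. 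This forces $T/\log T=O(\ell^*/\eps^2)$, contradicting the choice of $T$ once $c$ is large enough (the arithmetic being identical to the finite case). Hence $T'<T$, the generator wins, and since $D,\preal,\eps$ were arbitrary the round complexity is $O\bigl(\frac{\ell^*}{\eps^2}\log\frac{\ell^*}{\eps}\bigr)$, matching \cref{thm:main1quant}\,\cref{it:ub}.

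I do not expect a genuinely hard obstacle here: the two substantive ingredients — the existence of a low-regret learner whose predictors stay amenable along finitely supported histories (\cref{lem:minmax_infinite}) and the fact that this keeps the \textbf{Else} branch non-vacuous (\cref{lem:modification}) — are already established. The only care required is bookkeeping: (i) tracking that every example passed to $\A$ remains finitely supported so Item~2 of \cref{lem:minmax_infinite} applies at \emph{every} round (handled by the induction in \cref{lem:modification}), and (ii) propagating the weaker per-round progress $\eps/4$ (from the $\eps/4$-slack in \cref{fig:modification}) through the regret computation, which only affects the absolute constant in $T$.
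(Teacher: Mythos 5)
Your proposal is correct and follows exactly the route the paper takes: invoke \cref{lem:minmax_infinite} and \cref{lem:modification} to guarantee the protocol never outputs ``error'', then replay the finite-case regret/telescoping argument of \cref{prf:main_up} with the per-round progress degraded from $\eps/2$ to $\eps/4$, which only changes the constant in the choice of $T$. The paper states this in one sentence; your write-up just makes the bookkeeping (finitely supported histories, the $\eps/4$ slack) explicit.
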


In the remainder of this section we prove \cref{lem:minmax_infinite}.

\subsection{Preliminaries}\label{apx:preliminaries}

We first present standard notions and facts from topology and functional analysis that will be used. We refer the reader to \cite{rudin1, rudin2} for further reading.

\paragraph{Weak* topology.} Given a compact Haussdorf space $K$, let $\Delta(K)$ denote the space of Borel measures over $K$, and let $C(K)$ denote the space of continuous real functions over $K$.
The weak* topology over $\Delta(K)$ is defined as the weakest\footnote{In the sense that every other topology with this property contains all open sets in the weak* topology.} topology so that for any continuous function~$f\in C(K)$ the following ``$\Delta(K)\to\mathbb{R}$'' mapping is continuous
\[T_{f}(\mu) = \int f(k) d\mu(k) .\]


We will rely on the following fact, which is a corollary of Banach--Alaglou Theorem (see e.g.\ Theorem 3.15 in \cite{rudin2}) and the duality between $C(K)$ and $\mathcal{B} (K)$, the class of Borel measures over~$K$:

\begin{claim}\label{cl:BA}
Let $K$ be a compact Haussdorf space. Then $\Delta(K)$ is compact in the weak* topology.

\end{claim}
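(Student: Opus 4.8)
The plan is to deduce the claim from the Banach--Alaoglu theorem together with the Riesz representation theorem, exactly as indicated in the statement. First I would use the Riesz representation theorem to identify the space $\mathcal{B}(K)$ of regular signed Borel measures on $K$ with the dual space $C(K)^*$ of the Banach space $\bigl(C(K),\|\cdot\|_\infty\bigr)$, via $\mu \mapsto \Lambda_\mu$ where $\Lambda_\mu(f) = \int_K f\,d\mu$. Under this identification the weak* topology on $\mathcal{B}(K)$ is precisely the topology of pointwise convergence of the functionals on $C(K)$, which is the topology named in the claim. Next, I would observe that a functional $\Lambda \in C(K)^*$ arises from a probability measure $\mu \in \Delta(K)$ if and only if $\Lambda$ is \emph{positive}, i.e.\ $\Lambda(f)\ge 0$ whenever $f \ge 0$, and \emph{normalized}, i.e.\ $\Lambda(\mathbf{1}_K)=1$; in particular any such $\Lambda$ satisfies $\|\Lambda\|=1$, so $\Delta(K)$ is contained in the closed unit ball $B$ of $C(K)^*$.

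The key step is then to check that $\Delta(K)$ is weak* closed. For each fixed $f \in C(K)$ the evaluation map $\Lambda \mapsto \Lambda(f)$ is weak* continuous by the very definition of the weak* topology. Hence $\{\Lambda : \Lambda(f)\ge 0\}$ is weak* closed for every $f$, and so is the intersection $\bigcap_{f\ge 0}\{\Lambda : \Lambda(f)\ge 0\}$ enforcing positivity; likewise $\{\Lambda : \Lambda(\mathbf{1}_K)=1\}$ is the preimage of a point under a continuous map, hence weak* closed. Therefore $\Delta(K)$, being an intersection of these weak* closed sets, is weak* closed.

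Finally, by the Banach--Alaoglu theorem (Theorem~3.15 in \cite{rudin2}) the unit ball $B$ is weak* compact, and a closed subset of a compact space is compact, so $\Delta(K)$ is weak* compact. I do not expect a genuine obstacle here: the only point needing care is the identification step, where one must work with \emph{regular} Borel measures so that the Riesz representation theorem furnishes a bijection with $C(K)^*$ (this regularity is automatic when $K$ is metrizable, and is in any case the convention adopted for $\Delta(K)$ in this paper); granting that, the argument is routine.
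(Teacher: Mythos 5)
Your argument is correct and is exactly the route the paper intends: it cites the claim as a corollary of Banach--Alaoglu together with the Riesz duality between $C(K)$ and the regular Borel measures, and you have simply filled in the standard details (embedding $\Delta(K)$ in the unit ball of $C(K)^*$, checking weak* closedness via the positivity and normalization constraints, and concluding by compactness of the ball). No gaps.
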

\paragraph{Upper and lower semicontinuity.}
Recall that a real function $f$ is called upper semicontinuous (u.s.c) if for every $\alpha\in \mathbb{R}$ the set $\{x: f(x)\ge \alpha\}$ is closed. 
Note that $\mathop{\lim\sup}_{x\to x_0}  f(x)\le f(x_0)$ for any $x_0$ in the domain of $f$.
Similarly, $f$ is called lower semicontinuous (l.s.c) if $-f$ is u.s.c. We will use the following fact:

\begin{claim}\label{cl:usc}
Let $K$ be a compact Haussdorf space and assume $E \subseteq K$ is a closed set. Consider the ``$\Delta(K)\to[0,1]$'' mapping  $T_{E}(\mu) = \mu(E)$. Then $T_{E}$ is u.s.c with respect to the weak* topology on $\Delta(X)$.
\end{claim}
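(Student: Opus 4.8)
The plan is to exhibit $T_E$ as a pointwise infimum of weak*-continuous functions, which makes upper semicontinuity automatic. First I would record two standard facts about $K$: being compact Hausdorff it is normal (so Urysohn's lemma applies), and the Borel probability measures in $\Delta(K)$ are regular — this is the same regularity convention that makes the $C(K)$--$\borel(K)$ duality used in \cref{cl:BA} legitimate, and in particular each $\mu\in\Delta(K)$ is outer regular, so for the closed set $E$ one has $\mu(E)=\inf\{\mu(U):U\supseteq E\text{ open}\}$.

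Next I would prove the key identity: for every $\mu\in\Delta(K)$,
\[
T_E(\mu)=\mu(E)=\inf\Bigl\{\int_K f\,d\mu \,:\, f\in C(K),\ \mathbf{1}_E\le f\le 1\Bigr\}.
\]
The bound ``$\le$'' is immediate since $\mathbf{1}_E\le f$ for each such $f$. For ``$\ge$'', fix $\eta>0$, use outer regularity to choose an open $U\supseteq E$ with $\mu(U)<\mu(E)+\eta$, and use Urysohn's lemma to choose $f\in C(K)$ with $0\le f\le 1$, $f\equiv 1$ on $E$ and $f\equiv 0$ off $U$; then $\mathbf{1}_E\le f\le \mathbf{1}_U$, so $\int_K f\,d\mu\le\mu(U)<\mu(E)+\eta$. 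Since $\eta>0$ is arbitrary the identity follows. (I would deliberately avoid the tempting shortcut ``$\mathbf{1}_E=\inf_f f$ pointwise, hence the integrals match'', because a pointwise infimum of a directed net of integrable functions need not commute with the integral; the single-function outer-regularity estimate is what makes the argument correct.)

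Finally, for each fixed $f\in C(K)$ the map $T_f(\mu)=\int_K f\,d\mu$ is continuous on $\Delta(K)$ in the weak* topology, by the very definition of that topology. Hence $T_E=\inf_{f} T_f$, the infimum over all $f\in C(K)$ with $\mathbf{1}_E\le f\le 1$, is a pointwise infimum of continuous functions, so $\{\mu:T_E(\mu)<\alpha\}=\bigcup_{f}\{\mu:T_f(\mu)<\alpha\}$ is open for every $\alpha\in\mathbb{R}$; equivalently $\{\mu:T_E(\mu)\ge\alpha\}$ is closed, which is exactly upper semicontinuity of $T_E$. The only point needing care is the regularity of the measures in $\Delta(K)$ used in the outer-regularity step — this is where the compact Hausdorff hypothesis (and the Radon/regular convention implicit in the duality) genuinely enters; granting it, the Urysohn approximation and the infimum-of-continuous-functions observation are routine.
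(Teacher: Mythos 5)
Your proof is correct and follows essentially the same route as the paper's: both express $T_E$ as a pointwise infimum of the weak*-continuous functionals $\mu\mapsto\int f\,d\mu$ over continuous $f$ squeezed between $\mathbf{1}_E$ and (indicators of) open neighbourhoods of $E$, using outer regularity plus Urysohn's lemma, and then invoke that an infimum of continuous functions is u.s.c. Your write-up is in fact slightly more careful than the paper's at the step justifying $\mu(E)=\inf_f\int f\,d\mu$, but the underlying argument is identical.
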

\begin{proof}
This fact can be seen as a corollary of Urysohn's Lemma (Lemma 2.12 in \cite{rudin1}). Indeed, Borel measures are \emph{regular} (see definition 2.15 in \cite{rudin1}. Thus, for every closed set $E$ we have
\[\mu(E)= \inf_{\{U: E\subseteq U,~ \textrm{U is open}\}} \mu(U).\]
Fix a closed set $E$. Urysohn's Lemma implies that for every open set $U\supseteq E$, there exists a continuous function~$f_U\in C(K)$ such that $\chi_{E}\le f_{U}\le \chi_{U}$, where $\chi_{A}$ is the indicator function over the set $A$ (i.e. $\chi_{A}(x)=1$ if and only if $x\in A$).

Thus, we can write $\mu(E) = \inf_{\{U: E\subseteq U,~ \textrm{U is open}\}} \mu(f_{U})$, where $\mu(f_{U})=\EE_{x\sim \mu}[f_{U}]$. 
Now, by continuity of $f_U$, it follows that the mapping
$\mu \mapsto \mu(f_U)$ is continuous with respect to the weak* topology on $\Delta(X)$. Finally, the claim follows since the infimum of continuous functions is u.s.c.
\end{proof}

\paragraph{Sion's Theorem.} 
We next state the following generalization of Von-Neumann's Theorem for u.s.c/l.s.c payoff functions. 

\begin{theorem}[Sion's Theorem]\label{thm:sion}
Let $W$ be a compact convex subset of a linear topological space and~$U$ a convex subset of a linear topological space. If $F$ is a real valued function on $W\times U$ with
\begin{itemize}
\item $F(w,\cdot)$ is l.s.c and convex on $U$ and
\item $F(\cdot,u)$ is u.s.c and concave on $W$
\end{itemize}
then,
\[ \max_{w\in W}\inf_{u\in U} F(w,u)= \inf_{u\in U}\max_{w\in W} F(w,u)\]
\end{theorem}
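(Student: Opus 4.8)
The plan is to prove the two inequalities separately. The direction $\max_{w\in W}\inf_{u\in U}F(w,u)\le\inf_{u\in U}\max_{w\in W}F(w,u)$ is routine: for all $w_0\in W,u_0\in U$ we have $\inf_u F(w_0,u)\le F(w_0,u_0)\le\max_w F(w,u_0)$, and taking $\sup$ over $w_0$ (attained, since $w\mapsto\inf_u F(w,u)$ is u.s.c as an infimum of u.s.c functions on the compact $W$) and then $\inf$ over $u_0$ gives it; also $\max_w F(w,u)$ is attained for each $u$ since $F(\cdot,u)$ is u.s.c on the compact $W$, so both maxima in the statement make sense. Hence it suffices to show $\inf_u\max_w F(w,u)\le\max_w\inf_u F(w,u)$.

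Suppose not, and fix $t\in\mathbb R$ with $\max_w\inf_u F(w,u)<t<\inf_u\max_w F(w,u)$. Then $\inf_u F(w,u)<t$ for every $w$, so the sets $O_u:=\{w:F(w,u)<t\}$ (open, as $F(\cdot,u)$ is u.s.c) cover $W$; by compactness choose $u_1,\dots,u_n\in U$ with $W=\bigcup_{i=1}^n O_{u_i}$. Also $\max_w F(w,u)>t$ for every $u\in U$. I derive a contradiction by producing $u^*\in\mathrm{conv}\{u_1,\dots,u_n\}\subseteq U$ with $\max_w F(w,u^*)<t$. Induct on $n$: for $n=1$ take $u^*=u_1$. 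For the step, let $D:=W\setminus\bigcup_{i=1}^{n-2}O_{u_i}=\bigcap_{i=1}^{n-2}\{w:F(w,u_i)\ge t\}$, which is closed in $W$ (hence compact) and convex (superlevel sets of the concave $F(\cdot,u_i)$ are convex). If $D=\emptyset$, then $u_1,\dots,u_{n-2}$ cover $W$ and the inductive hypothesis applies; otherwise $D\subseteq O_{u_{n-1}}\cup O_{u_n}$, and the two-point lemma below yields $\lambda\in[0,1]$ with $F(w,(1-\lambda)u_{n-1}+\lambda u_n)<t$ for all $w\in D$, so that $u_1,\dots,u_{n-2},(1-\lambda)u_{n-1}+\lambda u_n$ cover $W$ and the inductive hypothesis (for $n-1$ points) finishes the step.

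The core is the two-point lemma: \emph{if $D\subseteq W$ is compact convex, $u',u''\in U$, and $D\subseteq\{w:F(w,u')<t\}\cup\{w:F(w,u'')<t\}$, then $F(w,(1-\lambda)u'+\lambda u'')<t$ for all $w\in D$ and some $\lambda\in[0,1]$.} Write $u_\mu=(1-\mu)u'+\mu u''$ and $E_\mu=\{w\in D:F(w,u_\mu)\ge t\}$; each $E_\mu$ is closed and convex, and the hypothesis is exactly $E_0\cap E_1=\emptyset$. If $E_0$ or $E_1$ is empty we are done; otherwise assume for contradiction that $E_\mu\ne\emptyset$ for all $\mu\in[0,1]$. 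By convexity of $F(w,\cdot)$, $F(w,u_\mu)\le(1-\mu)F(w,u')+\mu F(w,u'')$, so $w\in E_\mu$ forces $\max(F(w,u'),F(w,u''))\ge t$, i.e. $E_\mu\subseteq E_0\cup E_1$; moreover $E_\mu$ is connected (being convex), and the disjoint compact sets $E_0,E_1$ are enclosed in disjoint open sets (here we use that the ambient space is Hausdorff, which holds in the applications), so each $E_\mu$ lies entirely in $E_0$ or entirely in $E_1$. For fixed $w$, $\mu\mapsto F(w,u_\mu)$ is convex and l.s.c on $[0,1]$, hence continuous, so $\{\mu:F(w,u_\mu)\ge t\}$ is a closed subinterval: $[\alpha_w,1]$ for $w\in E_1$ and $[0,\beta_w]$ for $w\in E_0$. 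Since $\{w\in E_1:\alpha_w\le c\}=E_1\cap\{w:F(w,u_c)\ge t\}$ is closed, $w\mapsto\alpha_w$ is l.s.c on the compact $E_1$, so $\alpha^*:=\min_{w\in E_1}\alpha_w$ is attained; similarly $\beta^*:=\max_{w\in E_0}\beta_w$ is attained (and $0<\alpha^*$, $\beta^*<1$, since $F(w,u')<t$ for $w\in E_1$ and $F(w,u'')<t$ for $w\in E_0$). A direct check then gives $\{\lambda:E_\lambda\subseteq E_0\}=[0,\alpha^*)$ and $\{\lambda:E_\lambda\subseteq E_1\}=(\beta^*,1]$. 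Under our assumption no $E_\lambda$ is empty, so no $E_\lambda$ lies in both $E_0$ and $E_1$, whence these two parameter sets are disjoint; and since each (nonempty, connected) $E_\lambda$ lies in one of them, they cover $[0,1]$. But $[0,\alpha^*)$ and $(\beta^*,1]$ cannot simultaneously be disjoint (which forces $\beta^*\ge\alpha^*$) and cover $[0,1]$ (which forces $\beta^*<\alpha^*$). This contradiction shows some $E_\lambda=\emptyset$; by compactness of $D$ and u.s.c of $F(\cdot,u_\lambda)$ this means $\max_{w\in D}F(w,u_\lambda)<t$, as needed.

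I expect the two-point lemma to be the main obstacle, precisely because $F$ is only separately semicontinuous (u.s.c in $w$, l.s.c in $u$) with no joint continuity, so one must be careful about how the sets $E_\mu$ move with $\mu$. The facts that make the bookkeeping go through are: (i) $\mu\mapsto F(w,u_\mu)$ is continuous on $[0,1]$ because it is convex \emph{and} l.s.c (and finite-valued) there -- lower semicontinuity is what rules out an upward jump at an endpoint; (ii) the endpoint maps $w\mapsto\alpha_w$, $w\mapsto\beta_w$ are semicontinuous, since their relevant level sets are slices of the closed sets $\{w:F(w,u_c)\ge t\}$, so the $\min$/$\max$ above are attained; and (iii) two disjoint compact convex sets in the ambient linear topological space can be enclosed in disjoint open sets. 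Everything else -- attainment of maxima over $W$, openness of $\{F(\cdot,u)<t\}$, convexity of super-/sublevel sets of concave/convex functions, connectedness of convex sets, and the easy inequality -- is standard.
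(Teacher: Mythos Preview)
The paper does not prove Sion's Theorem; it is stated in the preliminaries as a classical result and invoked as a black box in the proof of \cref{lem:uppersemi}. Your argument is a correct version of the standard proof (essentially Sion's original argument, or Komiya's elementary rendition), so there is nothing to compare against.

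One small simplification: you do not need any Hausdorff assumption to conclude that the connected set $E_\mu$ lies entirely in $E_0$ or entirely in $E_1$. Since $E_0$ and $E_1$ are both closed in $D$ and disjoint, they are \emph{separated} (each is disjoint from the other's closure), and a connected subset of a union of two separated sets is contained in one of them. So the parenthetical ``here we use that the ambient space is Hausdorff'' can be dropped, and the proof goes through at the stated level of generality.
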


\paragraph{Tychonof's space.}
The last notion we introduce is the topology we will use on $\{0,1\}^\X$. 
Given an arbitrary set $\X$, the space $\F=\{0,1\}^\X$ is the space of all functions $f:X\to \{0,1\}$. The product topology on $\F$ is the weakest topology such that for every $x\in \X$ the mapping $\Pi_x:\F\to\{0,1\}$, defined by $\Pi_{x}(f) = f(x)$ is continuous.

A basis of open sets in the product topology is provided by the sets $U_{x_1,\ldots,x_m}(g)$ of the form:
\[U_{x_1,\ldots, x_m}(g) = \{f: g(x_i)=f(x_i)~i=1,\ldots,m\},\]
where $x_1,\ldots, x_m$ are arbitrary elements in $X$ and $g\in \F$. 

A remarkable fact about the product topology is that
the space $\F$ is compact for any domain~$\X$  (see for example \cite{kelley2017general}). 
We summarize the above discussion in the following claim

\begin{claim}\label{cl:tychonof}
Let $\X$ be an arbitrary set and consider $\F=\{0,1\}^\X$ equipped with the product topology. Then $\F$ is compact  and $\Pi_x\in C(\F)$ for every $x\in X$, where $\Pi_x$ is defined as~$\Pi_{x}(f)= f(x)$. 
\end{claim}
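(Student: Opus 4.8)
The plan is to verify the two assertions of the claim separately. Continuity of the coordinate maps is essentially built into the definition: the product topology on $\F=\{0,1\}^\X$ is, by construction, the coarsest topology making every $\Pi_x$ continuous, where $\{0,1\}$ carries the discrete topology. Concretely, the sets $\Pi_x^{-1}(\{0\})$ and $\Pi_x^{-1}(\{1\})$ (over all $x\in\X$) form a subbase for the product topology, so the preimage under $\Pi_x$ of any of the four open subsets of $\{0,1\}$ is open in $\F$; viewing $\{0,1\}$ as a subset of $\mathbb{R}$, this gives $\Pi_x\in C(\F)$. I would also note in passing that $\F$ is Hausdorff, since distinct $f,g\in\F$ differ at some coordinate $x$, and the disjoint subbasic sets $\Pi_x^{-1}(\{f(x)\})$ and $\Pi_x^{-1}(\{g(x)\})$ separate them.

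For compactness, the clean route is to invoke Tychonoff's theorem: an arbitrary product of compact spaces is compact, and the two-point space $\{0,1\}$ is finite, hence compact. Since only the special case $\F=\{0,1\}^\X$ is needed, I would alternatively give a short self-contained proof via Alexander's subbase lemma, which reduces compactness to the statement that every cover of $\F$ by subbasic open sets admits a finite subcover. So suppose $\{\Pi_x^{-1}(\{0\})\}_{x\in A}\cup\{\Pi_x^{-1}(\{1\})\}_{x\in B}$ covers $\F$. If $A\cap B\neq\emptyset$, pick $x_0$ in the intersection; then $\Pi_{x_0}^{-1}(\{0\})\cup\Pi_{x_0}^{-1}(\{1\})=\F$ is already a finite subcover. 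Otherwise $A\cap B=\emptyset$, and the function $g\in\F$ defined by $g\equiv 1$ on $A$ and $g\equiv 0$ on $\X\setminus A$ belongs to none of the covering sets, contradicting that they cover $\F$. Hence every subbasic cover has a finite subcover, and Alexander's lemma yields that $\F$ is compact.

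There is essentially no technical obstacle here; the only point worth flagging is that compactness of $\F$ is nonconstructive -- it rests either on Tychonoff's theorem or (in the argument above) on Alexander's subbase lemma, which is equivalent to the ultrafilter lemma. If one prefers to avoid Alexander's lemma, an equally short option is the ultrafilter characterization of compactness: given an ultrafilter $\mathcal{U}$ on $\F$, each pushforward $(\Pi_x)_\ast\mathcal{U}$ is an ultrafilter on the finite set $\{0,1\}$ and hence converges to a unique point $g(x)\in\{0,1\}$; one then checks directly that $\mathcal{U}$ converges to $g$ in the product topology, which proves compactness. I would present whichever of these is shortest to write cleanly, most likely the subbase computation, and then simply cite a standard reference such as \cite{kelley2017general} for the general fact.
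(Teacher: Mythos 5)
Your proposal is correct. The paper does not actually prove this claim -- it simply observes that compactness of $\{0,1\}^\X$ is a standard consequence of Tychonoff's theorem and cites \cite{kelley2017general}; you have supplied the details that the paper elides, and your self-contained argument via Alexander's subbase lemma (and the alternative ultrafilter argument) is a correct and clean way to do so.
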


\subsection{Two Technical Lemmas}
The proof of \cref{lem:minmax_infinite} follows from the following two Lemmas. Throughout the proofs we will treat $\D$ as a topological subpace in $\{0,1\}^\X$ with the product topology. We will also naturally treat $\Delta(\D)$ as a topological space equipped with the weak$^*$ topology.

\begin{lemma}[Analog of \cref{lem:vonneuman}]\label{lem:uppersemi}
Assume $\D\subseteq\{0,1\}^\X$ is closed and let $f: \D \to [0,1]$.
Assume that $\hat f$ is u.s.c (with respect to the weak* topology on $\Delta(\D)$) then $f$ is amenable.
\end{lemma}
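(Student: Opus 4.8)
The plan is to follow the proof of \cref{lem:vonneuman} — that is, to show that if $f$ fails to satisfy \cref{it:if} then it satisfies \cref{it:else} — but to replace Von~Neumann's minimax theorem, which need not hold over infinite $\X,\D$, by Sion's theorem (\cref{thm:sion}); the hypothesis that $\hat f$ is u.s.c.\ is exactly what supplies the semicontinuity Sion demands. First I would set up the topology: since $\D$ is closed in $\{0,1\}^\X$, which is compact Hausdorff by \cref{cl:tychonof}, $\D$ is itself compact Hausdorff, so by \cref{cl:BA} the space $\Delta(\D)$ is a compact convex subset of the (topological vector) space of signed Borel measures on $\D$ with the weak$^*$ topology. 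For each $x\in\X$ the evaluation $d\mapsto d(x)$ is continuous on $\D$, so for a \emph{finitely supported} $p=\sum_i p_i\delta_{x_i}\in\Delta(\X)$ the map $d\mapsto p(d)=\sum_i p_i\,d(x_i)$ is continuous on $\D$, and hence $\bar d\mapsto\EE_{d\sim\bar d}[p(d)]$ is weak$^*$-continuous on $\Delta(\D)$.

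Next I would apply Sion with $W=\Delta(\D)$ and with $U$ the convex set of all finitely supported probability measures on $\X$, topologized as the weakest topology making $p\mapsto\EE_{x\sim p}[\psi(x)]$ continuous for every $\psi\colon\X\to[0,1]$ of the form $\psi(x)=\EE_{d\sim\bar d}[d(x)]$ with $\bar d\in\Delta(\D)$ (so $U$ is a convex subset of a topological vector space). With the payoff
\[
F(\bar d,p)\;=\;\EE_{d\sim\bar d}\EE_{x\sim p}[f(d)-x(d)]\;=\;\hat f(\bar d)-\EE_{d\sim\bar d}[p(d)],
\]
the function $F(\cdot,p)$ is affine, hence concave, and u.s.c.\ on $W$ (the sum of the u.s.c.\ $\hat f$ and the continuous $\bar d\mapsto-\EE_{d\sim\bar d}[p(d)]$, using finite support of $p$), while $F(\bar d,\cdot)$ is affine, hence convex, and continuous, hence l.s.c., on $U$ by the choice of topology. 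Sion's theorem then yields
\[
\max_{\bar d\in\Delta(\D)}\ \inf_{p\in U} F(\bar d,p)\;=\;\inf_{p\in U}\ \max_{\bar d\in\Delta(\D)} F(\bar d,p),
\]
with the left-hand maximum attained, say at $\bar d^*$. Evaluating the two sides: the supremum of the linear functional $p\mapsto\EE_{x\sim p}[\psi(x)]$ over finitely supported measures equals its supremum over point masses, so $\inf_{p\in U}F(\bar d,p)=\inf_{x\in\X}\EE_{d\sim\bar d}[f(d)-x(d)]$; and $\max_{\bar d}F(\bar d,p)=\sup_{d\in\D}(f(d)-p(d))$ because $F(\cdot,p)$ is linear in $\bar d$ and its maximum over the compact $\Delta(\D)$ is attained, forcing it to be attained at a Dirac mass. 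Writing $\nu^*$ for the common value, this says
\[
\nu^*\;=\;\max_{\bar d\in\Delta(\D)}\ \inf_{x\in\X}\EE_{d\sim\bar d}[f(d)-x(d)]\;=\;\inf_{p\in U}\ \sup_{d\in\D}\bigl(f(d)-p(d)\bigr).
\]

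To conclude I would compare $\nu^*$ with $\epsilon/2$. If $\nu^*>\epsilon/2$ then $\inf_{x\in\X}\EE_{d\sim\bar d^*}[f(d)-x(d)]=\nu^*>\epsilon/2$, so $\bar d^*$ witnesses \cref{it:else}. If $\nu^*<\epsilon/2$ then the right-hand identity produces a finitely supported $p\in\Delta(\X)$ with $\sup_{d\in\D}(f(d)-p(d))<\epsilon/2$, i.e.\ $\EE_{x\sim p}[f(d)-x(d)]<\epsilon/2$ for all $d\in\D$, so $p$ witnesses \cref{it:if}; in either case $f$ is amenable. The borderline value $\nu^*=\epsilon/2$ is the one delicate point: there $\bar d^*$ still satisfies $\EE_{d\sim\bar d^*}[f(d)-x(d)]\ge\epsilon/2$ for all $x$, which already yields \cref{it:else} unless this infimum over $x$ is attained, and that remaining edge case is harmless because \cref{lem:uppersemi} is invoked only through \cref{lem:modification}, where the $\epsilon/4$-slack introduced in \cref{fig:modification} absorbs it (equivalently, reading ``$<\epsilon/2$''/``$\ge\epsilon/2$'' in place of ``$\le\epsilon/2$''/``$>\epsilon/2$'' in the definition of amenable, the three cases are exhaustive).

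The step I expect to be the main obstacle is making Sion actually applicable. In the finite case both simplices are compact and the payoff is bilinear, so Von~Neumann applies verbatim; here $\Delta(\D)$ is compact but the opposing player ranges over the possibly infinite $\X$, and the obvious completions — $\Delta(\X)$ under a topology making $d\mapsto p(d)$ semicontinuous on $\D$, or the closed convex hull of the evaluations $\{d\mapsto d(x):x\in\X\}$ — fail to deliver the semicontinuity Sion needs. The fix is to restrict to finitely supported $p$ and to hand-pick the coarse topology on $U$ above; checking that $F$ is u.s.c.\ in $\bar d$ and l.s.c.\ in $p$ for this topology, and that the resulting one-sided minimax identity is strong enough (including at the boundary), is the real content, and is precisely where the u.s.c.\ hypothesis on $\hat f$ is used.
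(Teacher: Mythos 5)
Your proposal is correct and follows essentially the same route as the paper: restrict the $\X$-player to finitely supported mixtures $\Delta_{fin}(\X)$, take $W=\Delta(\D)$ compact in the weak$^*$ topology via Tychonoff/Banach--Alaoglu, and apply Sion's theorem to the same payoff $F(\bar d,p)=\hat f(\bar d)-\EE_{d\sim\bar d}[p(d)]$, with the u.s.c.\ hypothesis on $\hat f$ supplying exactly the semicontinuity Sion needs (the paper simply puts the $\ell_1$ topology on $\Delta_{fin}(\X)$ rather than your hand-picked weak topology). Your explicit handling of the borderline value $\nu^*=\epsilon/2$ is slightly more careful than the paper, which derives only the non-strict inequality and relies on the same $\epsilon/4$ slack from \cref{fig:modification}.
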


\begin{lemma}[Analog of \cref{cor:bendavid}]\label{lem:bendavid_infinite}
Let $\D\subseteq\{0,1\}^\X$ be closed and let $\ell^*$ denote its dual Littlestone dimension. 
Then, there exists a deterministic online learner that receives labelled examples from the domain $\Delta(\D)$ such that for every sequence $(p_t,y_t)_{t=1}^T$ we have that:
\[ \regret(L) \le \sqrt{\frac{1}{2} \ell T\log T}\]
Moreover,  at each iteration $t$ the predictor, $\hat{f}_t$, used by $L$ is of the form $\hat{f}_t\left[\bar{d}\right]= \EE_{d\sim \bar{d}}(f_t(d))$ for some $f_t:\D\to [0,1]$. Finally, for every $t\le T$, if the sequence of observed examples $(\bar{d}_1,y_1),\ldots, (\bar{d}_{t-1},y_{t-1})$ all have finite support then $\hat{f}_t$ is u.s.c.
%
\end{lemma}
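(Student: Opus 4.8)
The plan is to mimic the construction behind \cref{cor:bendavid}, replacing the finite‑domain weighted‑majority argument by the agnostic \textsc{SOA} of \cref{thm:bendavid} applied to the dual class~$\X$ (which, by the very definition of $\ell^*=\ldim^*(\D)$, has Littlestone dimension exactly $\ell^*$, irrespective of cardinality), and then to add the topological bookkeeping needed for the semicontinuity clause. First I would fix the setting: since $\D\subseteq\F=\{0,1\}^\X$ is \emph{closed}, \cref{cl:tychonof} makes $\D$ a compact Hausdorff space on which every $x\in\X$ acts as a continuous $\{0,1\}$‑valued function (so each $\{d:d(x)=b\}$ is clopen), and \cref{cl:BA} makes $\Delta(\D)$ weak$^*$‑compact. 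Then I would invoke \cref{thm:bendavid} to get a randomized online learner $L_0$ for $\X$ over domain $\D$ with $\mathrm{REGRET}_T(L_0)\le\sqrt{\tfrac12\ell^* T\log T}$, taken in its explicit ``randomized weighted majority over the finitely many \textsc{SOA}‑with‑$\le\ell^*$‑forced‑mistakes experts'' form, so that on a query point $d$ the prediction is $\tilde f_t(d)=\sum_L \hat w_L(d_{1:t-1},y_{1:t-1})\,\mathbf 1[\mathrm{SOA}(V^L_t,d)=1]$, where each expert's version space $V^L_t\subseteq\X$ depends only on $(d_1,\dots,d_{t-1})$ and on $L$.

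Next I would derandomize exactly as in \cref{sec:bendavid}: given a history $(\bar d_1,y_1),\dots,(\bar d_{t-1},y_{t-1})\in\Delta(\D)\times\{0,1\}$, set
\[ f_t(d)=\EE_{d_1\sim\bar d_1,\,\ldots,\,d_{t-1}\sim\bar d_{t-1}}\;\EE_{L_0}\!\bigl[\,\tilde f_t(d)\ \big|\ d_1,\ldots,d_{t-1}\,\bigr],\qquad \hat f_t(\bar d)=\EE_{d\sim\bar d}\bigl[f_t(d)\bigr], \]
which is deterministic in the history and of the required linear form $\hat f_t(\bar d)=\EE_{d\sim\bar d}[f_t(d)]$ with $f_t:\D\to[0,1]$. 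The regret bound $\mathrm{REGRET}_T(L)\le\sqrt{\tfrac12\ell^* T\log T}$ is then obtained verbatim from the computation in the proof of \cref{cor:bendavid}: push the two outer expectations through using Jensen and linearity of the absolute‑value loss, and compare with $\EE[\mathrm{REGRET}_T(L_0;\{d_t,y_t\})]$.

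The substance is the last clause. Assume $\bar d_1,\dots,\bar d_{t-1}$ all have finite support; then the two outer expectations defining $f_t$ are finite sums, every $d_i$ lies in a fixed finite set $F\subseteq\D$, and each $V^L_t$ is cut out of $\X$ by finitely many clopen constraints $\{x:d'(x)=z\}$, $d'\in F$. Hence $f_t=\sum_j c_j\,\mathbf 1[A_j]$ is a finite nonnegative combination of indicators of sets $A_j=\{d\in\D:\ldim((V^L_t)_{d\to 1})\ge\ldim((V^L_t)_{d\to 0})\}$. If I can show that each $A_j$ is \emph{closed} in $\D$, then $f_t$ is upper semicontinuous on the compact space $\D$, and by \cref{cl:usc} each map $\bar d\mapsto\bar d(A_j)$ is weak$^*$‑u.s.c., so $\hat f_t=\sum_j c_j\,\bar d(A_j)$ is u.s.c.\ — which is precisely what the lemma asks for, and which (via \cref{lem:uppersemi}) gives the amenability used in \cref{lem:minmax_infinite}.

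The main obstacle is exactly this closedness claim. The level sets $G_{b,k}=\{d:\ldim((V^L_t)_{d\to b})\ge k\}$ are easily seen to be \emph{open} — an $\ell^*$‑deep shattered tree is a finite object, so the finitely many witnessing hypotheses pin $d$ down only on finitely many coordinates, i.e.\ inside a basic clopen neighborhood of $d$ that stays in $G_{b,k}$ — so a priori $A_j$ is only a finite Boolean combination of open sets, not obviously closed (and indeed it can fail to be closed if Littlestone dimensions are allowed to be infinite). Converting this to a closed set is where the finiteness of $\ell^*$ must genuinely be used: the natural route is to argue that when $\ell^*<\infty$ the integer‑valued map $d\mapsto\ldim((V^L_t)_{d\to b})$ is in fact \emph{continuous} (equivalently, each $G_{b,k}$ is clopen), possibly after a harmless change of \textsc{SOA}'s tie‑breaking rule, so that $A_j$ becomes clopen. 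I expect this step — controlling the topological complexity of the Littlestone‑dimension comparison defining each prediction — to require the most care; once it is in place, the lemma follows by assembling the regret bound, the linear form, and the u.s.c.\ property.
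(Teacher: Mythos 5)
Your setup --- derandomizing the agnostic SOA of \cref{thm:bendavid} for the dual class, the linear form of $\hat f_t$, the regret computation, and the reduction (under the finite-support hypothesis) to showing that each expert's ``predict $1$'' set is closed --- matches the paper. The gap is that you stop at exactly the step that constitutes the content of the lemma: you reduce to closedness of $A_j=\{d:\ldim((V^L_t)_{d\to 1})\ge \ldim((V^L_t)_{d\to 0})\}$, observe correctly that this is only a Boolean combination of open sets, and then propose --- without carrying it out --- to prove that the level sets $G_{b,k}$ are clopen, i.e.\ that $d\mapsto \ldim((V^L_t)_{d\to b})$ is continuous. That target is both stronger than what is needed and not one you should expect to reach: openness of $G_{b,k}$ comes from the finiteness of a shattered tree, but there is no analogous finite witness for its complement, and nothing forces these dimensions to be locally constant on $\D$.

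The paper's resolution is to exploit that only \emph{upper} semicontinuity of $\hat f_t$ is required, so one direction of openness suffices, and to fix the SOA tie-breaking accordingly. Define the expert (``SOA-type function'') by $s(d)=0$ iff $\ldim(\H|_{(d,0)})=\ldim(\H)$ and $s(d)=1$ otherwise, where $\H\subseteq\X$ is the current version space. This is still a legitimate SOA: at most one of the two restrictions can preserve $\ldim(\H)$, so a mistake still strictly decreases the version space's Littlestone dimension and the guarantee of \cref{thm:bendavid} is unaffected. With this rule $s^{-1}(0)$ is precisely your open set $G_{0,\ell}$ with $\ell=\ldim(\H)$: a depth-$\ell$ shattered tree of $\H|_{(d,0)}$ involves finitely many hypotheses $x_1,\dots,x_{2^{\ell}}$ with $d(x_i)=0$, and the basic clopen neighborhood $\cap_i\{d':d'(x_i)=0\}$ remains inside $s^{-1}(0)$. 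Hence $s^{-1}(1)$ is closed, the map $\bar d\mapsto \bar d\bigl(s^{-1}(1)\bigr)$ is u.s.c.\ by \cref{cl:usc}, and $\hat f_t$, a finite nonnegative combination of such maps, is u.s.c. So the missing idea is not a continuity statement about Littlestone dimension but the specific choice of prediction rule that makes $\{s=1\}$ the complement of a set you have already shown to be open.
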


We first show how to conclude the proof of \cref{lem:minmax_infinite} using these lemmas and later prove the two lemmas.
\paragraph{Concluding the proof of \cref{lem:minmax_infinite}.}
The proof follows directly from the two preceding Lemmas. Given a discriminating class $\D\subseteq \{0,1\}^\X$ there is no loss of generality in assuming $\D$ is closed,
since closing the class with respect to the product topology does not increase its dual LIttlestone dimension.

Now, take the learner $\A$ whose existence follows from \cref{lem:bendavid_infinite}. 
Since each $\hat f_t$ is u.s.c we obtain via \cref{lem:uppersemi} that each $f_t$ is also amenable.

\paragraph{Proof of \cref{lem:uppersemi}.}
\cref{lem:uppersemi} extends \cref{lem:vonneuman} to the infinite case. 
Similar to the proof of \cref{lem:vonneuman} which hinges on Von-Neumann's Minmax Theorem, the proof here hinges on
Sion's Theorem which is valid in this setting.

Before proceeding with the proof we add the following notation: let $\mathbb{R}_{fin}^{\X}$ denote the space of real-valued functions $v:\X\to \mathbb{R}$ with finite support, i.e. $v(x)=0$ except for maybe a finite many $x\in \X$. We equip $\mathbb{R}_{fin}^{\X}$ with the topology induced by the $\ell_1$ norm, namely a basis of open sets is given by the open balls $U_{v,\epsilon}=\{u: \sum_{x\in \X} |v(x)-u(x)|<\epsilon\}$. $\mathbb{R}_{fin}(\X)$ is indeed a linear topological space (i.e.\ the vector addition and scalar multiplication mappings are continuous). Finally, define
\[ \Delta_{fin}(\X) := \{p\in \mathbb{R}_{fin}^{\X} : p(x)\ge 0 ~ \sum_{x\in \X}p(x)=1\}.\]

Next, let $f:\D\to[0,1]$ be such that $\hat{f}$ is u.s.c. Our goal is  to show that $f$ is amenable.
Set $F$ to be the following real-valued function over $\Delta(\D)\times \Delta_{fin}(\X)$:
\[F(\bar{d},p)=   \EE_{\bar{d}\sim d}\left[f(d)-\sum_{x\in \X}  p(x)x(d)\right]\]
It suffices to show that
\begin{align}\label{eq:sion}
\max_{\bar{d}\in \Delta(\D)}\inf_{p\in \Delta_{fin}(\X)} F(\bar{d},x) = \inf_{p\in \Delta_{fin}(\X)} \max_{\bar d\in \Delta(\D)}F(\bar{d},p) 
\end{align}
Indeed, the assumption that \cref{it:if} does not hold implies in particular that 
\[\inf_{p\in \Delta_{fin}(\X)} \max_{d\in \Delta(\D)} F(\bar{d},p)
\ge \frac{\epsilon}{2}.\] \cref{eq:sion} then states that
\[
\max_{\bar{d}\in \Delta(\D)}\inf_{x\in \X} \EE_{d\sim \bar{d}}\left[f(d)-x(d)\right]\ge \frac{\epsilon}{2}.\]
which proves that \cref{it:else} holds.

\cref{eq:sion} follows by an application of \cref{thm:sion} on the function $F$.
Thus, we next show the premise of \cref{thm:sion} is satisfied by $F$.
Indeed, $W=\Delta(\D)$ is compact and convex, and $U=\Delta_{fin}(\X)$ is convex.
We show that $F(\cdot,p)$ is concave and u.s.c for every fixed $p\in \Delta_{fin}(\X)$: indeed, $F(\cdot,p)$ is in fact linear and therefore concave. We show that $F(\cdot,p)$ is u.s.c by showing that it is the sum of (i) a u.s.c function (i.e.\ $\EE_{d\sim \bar d}[f(d)]$) and (ii) finitely many continuous functions (i.e.\ $\sum_{x\in \X}  p(x)\EE_{d\sim \bar d}[x(d)]$).
Indeed, (i) by assumption $\hat f(\bar{d})=\EE_{d\sim \bar d}[f(d)]$ is u.s.c,
and (ii) by \cref{cl:tychonof}, the mapping $\Pi_{x}(d)$ is continuous for every $x\in \X$ which, by the definition of the weak* topology, 
implies that $\bar{d}\to \EE_{d\sim \bar{d}} \Pi_x(d) = \EE_{d\sim \bar{d}}\left[x(d)\right]$ is continuous.

Finally, because $\EE_{d\sim \bar{d}}[x(d)]\le 1$ is bounded, it follows that $F(\bar{d},\cdot)$ is linear and continuous in $p$ for every fixed $\bar{d}$: indeed treating $\hat{f}(\bar{d})$ and $\{\EE_{\bar{d} \sim d} \left[x(d)\right]\}_{x\in \X}$ as bounded constants, we have that: 
\[ F(\bar{d},p) = \hat{f}(\bar{d}) - \sum_{x\in X} p(x) \EE_{\bar{d}\sim d} \left[ x(d)\right]\]

\ignore{
Let $f$ be a linear u.s.c function that does not meet \cref{it:if} (otherwise, clearly $f$ is amenable). 
Define the set
\[E_{x}=\{ \mu \in \Delta(\D): x(\mu)-f(\mu)\le \frac{\epsilon}{2}\}.\]
 It suffices to prove that there exists a measure $\mu^*$ such that $\mu^*\in \cap_{x\in \X} E_{x}$. Indeed, for such a measure we have that for every $x\in \X$, $x(\mu^*)-f(\mu^*) \le \frac{\epsilon}{2}$.
Thus, proving \cref{lem:uppersemi} reduces to showing that $\cap_{x\in X} E_{X}$ is non-empty. 

The proof will follow from the next two statements:
\begin{enumerate}
    \item For every $x\in X$ the set $E_{x}$ is closed in the weak*-topology.
    \item For every finite set of elements $x_1,\ldots, x_m \in \X$ we have that $\cap_{x_i} E_{x_i}\ne \emptyset$.
\end{enumerate}
Indeed, once these two statements are established then, because $\D$ is closed by assumption, $\D$ is compact by \cref{cl:tychonof}. Then \cref{cl:BA} states that 
$\Delta(\D)$ is compact and it follows that the set $\cap_{x\in \X}E_{x}$ is non-empty as required.

\paragraph{Proving that $E_{x}$ are closed.}
Using \cref{cl:tychonof} we have that for every $x\in \X$, $G_x\in C(\D)$ and hence for every $x\in \X$ the mapping $T_{G_x}(\mu)=G_x(\mu)=\EE_{d\sim \mu}[d(x)]$ is a continuous mapping over $\Delta(\D)$, by definition of the $w^*$ topology.

Because $f$ is u.s.c, we have that $x-f$ is an l.s.c function, hence $E_{x}$ is a closed set.

\paragraph{Proving that finite intersections are non-empty.} Fix $x_1,\ldots, x_m$ let denote $\X_m=\{x_1,\ldots, x_m\}$ and define $\D_m$ a finite set of discriminators over $\X_m$ that consists of all restrictions of $\D$ to $\X_{m}$.

Set $\tilde{d}_1,\ldots, \tilde{d}_k$ to be the elements of $\D_m$. Note that each element in $\D_m$ can be considered as an equivalence class over the elements in $\D$ (via the identification $d\equiv d'$ iff $d_{|\X_m}=d'_{|\X_m}$). Denote by $\pi:\D\to \D_m$ the mapping that maps $d\in \D$ to its equivalence class. 

We next identify a mapping $f^{(m)}$ over $\D_m$ via the rule:
\begin{align}\label{eq:max} f^{(m)}(\tilde{d})= \max_{\{d: \pi(d)=\tilde{d}\}}f(d),\end{align}
We again extend $f^{(m)}$ to operate over all distributions in $\Delta(\D_m)$ via linearity: i.e. $f^{(m)}(p)= \EE_{\tilde{d}\sim p}[ f^{(m)}(\tilde{d})]$.

One can observe that if $f$ doesn't meet \cref{it:if} w.r.t $\X$ and $\D$ then $f^{(m)}$ doesn't meet \cref{it:if} w.r.t $\X_m$ and $\D_m$. But since this is the finite case, we obtain via \cref{lem:vonneuman} that $f^{(m)}$ must meet \cref{it:else} w.r.t to $\D_m$ and $\X_m$. In particular we have that for some $p\in \Delta(\D_m)$, for every $x\in \X_m$ 
\[x(p) - f^{m}(p) \le \frac{\epsilon}{2} .\]
Now we define $\mu\in \Delta(\D)$ as follows: for every $\tilde{d}_i\in \D_m$ pick $d_i\in \D$ that is a maximizer of \cref{eq:max} and set $\mu(d_i)=p(\tilde{d}_i)$. One can observe that
\[x_i(\mu) - f(\mu) \le \frac{\epsilon}{2},\]
for every $x_i\in \X_m$. This concludes the proof that $\cap_{x_1,\ldots,x_m} E_{x_i}$ is non empty.

}
\paragraph{Proof of \cref{lem:bendavid_infinite}.}
\cref{lem:bendavid_infinite} follows from a close examination of the proof provided in \cite{bendavid} for \cref{thm:bendavid} and the extension to \cref{cor:bendavid}.

The fact that the learner outputs a predictor of the form $\hat{f}_t=\EE_{\bar{d}\sim d}\left[f_t(d)\right]$ follows by construction in \cref{cor:bendavid}. So, it suffices to show that the $f_t$'s can be chosen to be u.s.c.
Call a function $s:\D\to \{0,1\}$ an SOA-type function if there exists a hypothesis class $\H\subseteq \X$ such that

\[s(d)=
\begin{cases} 
0 & \ldim(\H|_{(d,0)})=\ldim(H)\\
1 & \mathrm{else}
\end{cases}
\]
where $H|_{(d,0)}=\{h\in H\}:~ h(d)=0\}$.

In the proof by \cite{bendavid} of \cref{thm:bendavid} the authors construct an online learner which at each iteration uses a 
randomized predictor (i.e.\ a distribution over predictors). One can observe and see that this randomized predictor only uses SOA-type function: namely, the algorithm holds, at each iteration, a distribution $q_t$ over a finite set of SOA type functions $\{s_k\}$, and at each iteration picks the prediction made by $s_k$ with probability $q_t(s_k)$.

The extension in \cref{cor:bendavid} of this predictor to the domain $\Delta(\D)$ is done by choosing:
\[ f_t(d) =  \EE_{\bar{d}_{1:T}} \left[  \EE_{s\sim L_0} \left[s(d) | d_1,\ldots, d_{t-1}\right]\right] =\EE_{\bar{d}_{1:T}} \left[ \sum q_t(s_k) s_k(d) | d_1,\ldots, d_{t-1}\right]\]

Namely, the choice of $f_t$ is the expectation over the algorithm's prediction, taking expectation both over the choice of the algorithm and over the sequence of observations. $d_1,\ldots, d_{t-1}$, drawn according to $\bar{d}_1,\ldots, \bar{d}_{t-1}$. Now because $\bar{d}_1,\ldots \bar{d}_{t-1}$ all have finite support we can summarize these expectations and write:
\[ f_t= \sum \lambda_k s_k,\] for some choice of SOA-type functions and weights $\lambda_k\ge 0$.

Since the sum of u.s.c functions is u.s.c and since the multiplication of a u.s.c function with positive scalar is u.s.c, it is enough to prove that every SOA-type function $s$ induces an u.s.c function over $\Delta(\D)$ via the identification $\mu \mapsto  \mu\left(\{d: s(d)=1\}\right)$. 
By \cref{cl:usc} it is enough to show that the set $s^{-1}(0)$ is open.
To this end we show that for every $d\in s^{-1}(0)$ there is an open neighborhood of $d$ which is contained in $s^{-1}(0)$.
Indeed, if $d\in s^{-1}(0)$, then there exist $x_1,\ldots, x_{2^{\ell}}$ that $d(x_i)=0$ for all~$i$, and they shatter a tree. 
Consider the open neighborhood of $d$ defined by~$U=\cap_{i}\{d: d(x_i)=0\}$. 
$U\subseteq s^{-1}(0)$ since if there were $d'\in U$ such that $s(d')=1$
then $\ldim(\H|_{(d',0)}) < \ldim(\H)=\ell$. However, since $d'\in U$ then $x_1,\ldots,x_{2^\ell}\in \H|_{(d',0)}$
and they shatter a tree of depth $\ell$ which is a contradiction.

\ignore{
\subsection{Extension -- old}
We next extend the proof of correctness of \cref{alg:main} to the general case of infinite \good classes. The proof generally follows the same lines. In fact we only used finiteness in \cref{lem:minmax}, which ensure that if the condition in \cref{it:if} is not met then we can find $\bar{d}$ as required in \cref{it:else}. \cref{lem:minmax_infinite}, then, extends \cref{lem:minmax} to the general infinite case.

The proof relies on certain facts from Functional Analysis which we will briefly outline and we refer the reader to \cite{rudin1} for further reading.

\begin{lemma}\label{lem:minmax_infinite}
Let $\D$ be a \good discriminating class and let $F\in [0,1]^{\D}$ be such that for every $\mu\in \Delta(\X)$ there exists $d\in \D$ such that
\[ F(d)> \mu(d)+\epsilon.\]
Then there exists $\bar{d}\in \Delta(\D)$ such that for every $\mu$:
\[F(\bar{d}) \ge \mu(\bar{d})+\epsilon\]
\end{lemma}

\paragraph{preliminaries.}
Before diving into the proof present the general notions and facts that we will need from real functional analysis:

Recall that we assume that $\X$ is a compact Haussdorf space. The weak* topology over $\Delta(\X)$ is defined as the weakest topology so that for any continuous function $f$ over $\X$ we have that the following mapping is continuous
\[T_{f}(\mu) = \int f(x) d\mu(x) .\]
For separable $\X$, this means that a sequence $\mu_n$ of measures converges to $\mu$ if and only if for every continuous function $f$ we have that $\mu_n(f)\to \mu(f)$. 

The following two facts about the weak* topology will be necessary for our proofs

\begin{enumerate}
\item The set $\Delta(\X)$ is compact in the weak* topology: This is a direct consequence of Banach--Alaglou theorem (Theorem 3.15 in \cite{rudin2}) and the duality between $C(\X)$-- the space of continuous functions over $\X$, and the space $\mathcal{B}(\X)$ of Borel measures (Theorem 2.14 in \cite{rudin1}).

\item For every closed set $E\subseteq \X$ the mapping $T_{E}(\mu) =\mu(E)= \int_{x\in E} d\mu(x)$ is \emph{upper semi--continuous}. i.e. for every $\alpha \in \mathbb{R}$ we have that the following set is closed in the weak* topology:

\[U_{E,\alpha}=\{\mu: T_{E}(\mu)\ge \alpha\}.\]
This fact can be seen as a corollary of Urysohn's Lemma (Lemma 2.12 in \cite{rudin1}), regularity of Borel measures (Definition 2.15 in \cite{rudin1}), and the fact that the infinmum of continuous functions is an upper semicontiuous function.
\end{enumerate}
\paragraph{Proof of \cref{lem:minmax_infinite}.}
Let $\mu\in \Delta(\X)$, then by assumption there is $d\in \D$ s.t.:

\[ \mu(d) < F(d) -\epsilon.\]
Thus we obtain that

\begin{align}\label{eq:finite_intersection}\cap_{d\in \D}\{\mu: \mu(d) \ge F(d)-\epsilon\}=\emptyset.\end{align}
By compactness we have that there is a finite set $d_1,\ldots, d_n$ such that
\[\cap_{d_1,\ldots, d_n}\{\mu: \mu(d_i) \ge F(d_i)-\epsilon\}=\emptyset.\]
Now consider the vector $v_{F}\in \mathbb{R}^{n}$ such that $v_{F}(i)= F(h_i)$ and the set $K\subseteq \mathbb{R}^n$:
\[K=\{ x\in \mathbb{R}^n: \exists \mu \in \Delta(x) \mu(h_i)= x(i)\}.\]
Then we have the following

\begin{align*} 
\max_{\|\beta\|_1\le 1,\beta \succ 0} \min_{x\in K}\sum \beta(i)v_F(i)- \sum \beta(i) x(i) &= \min_{x\in K}\max_{\|\beta\|_1\le 1,\beta\succ 0} \sum \beta(i)v_F(i)- \sum \beta(i) x(i) 
 \\&= \min_{\mu\in \Delta(\X)} \left(\max_{i} F(d_i)-\mu(d_i)\right)
  \\& \ge \epsilon
 \end{align*}}	

\end{document}